\documentclass[11pt]{article}

\usepackage[margin=1in]{geometry}
\usepackage[utf8]{inputenc}
\usepackage{tgtermes}
\usepackage{newtxtext}
\usepackage{amsmath,mathtools}
\usepackage{amsthm}
\usepackage{newtxmath}
\usepackage{bm}
\usepackage{enumitem}
\usepackage{algorithm}
\usepackage{algpseudocode}
\usepackage{booktabs}
\usepackage{placeins}
\usepackage{float}
\usepackage{graphicx}
\usepackage{natbib}
\usepackage{fancyhdr}

\bibpunct[, ]{(}{)}{,}{a}{}{,}

\usepackage[hidelinks]{hyperref}
\usepackage[capitalize,noabbrev,sort&compress]{cleveref}

\newtheorem{theorem}{Theorem}
\newtheorem{lemma}{Lemma}
\newtheorem{proposition}{Proposition}

\theoremstyle{definition}
\newtheorem{assumption}{Assumption}
\newtheorem{definition}{Definition}
\newtheorem{example}{Example}

\newcommand{\Dir}{\mathrm{Dir}}

\DeclareMathOperator*{\argmax}{arg\,max}
\newcommand{\assref}[2]{Assumption~\ref{#1}\ref{#2}}
\DeclareMathOperator{\diag}{diag}

\newcommand{\papershorttitle}{Evolving Robustness--Exploration Trade-off in Online RL}
\newcommand{\paperauthorheader}{Song, Wang, and Zhou}
\newcommand{\papertitle}{Evolving Robustness--Exploration Trade-off in Online Reinforcement Learning via Quantile Bayesian Risk MDPs}
\newcommand{\authorentry}[2]{%
  \noindent
  \begin{tabular*}{\textwidth}{@{}l@{\extracolsep{\fill}}r@{}}
  \textbf{#1} & \texttt{\small #2}\\[-1pt]
  \multicolumn{2}{@{}l@{}}{\textit{School of Industrial and Systems Engineering}}\\
  \multicolumn{2}{@{}l@{}}{\textit{Georgia Institute of Technology}}\\
  \multicolumn{2}{@{}l@{}}{\textit{Atlanta, GA 30332, USA}}
  \end{tabular*}\par\vspace{8pt}
}
\newcommand{\makearxivtitle}{%
  \vspace*{2.2em}
  \begin{center}
  {\LARGE\bfseries \papertitle\par}
  \end{center}
  \vspace{1.7em}
  \authorentry{Meichen Song}{msong97@gatech.edu}
  \authorentry{Yuhao Wang}{yuhaowang@gatech.edu}
  \authorentry{Enlu Zhou}{enlu.zhou@isye.gatech.edu}
  \vspace{1.4em}
}

\begin{document}
\makearxivtitle

\begin{abstract}
In online reinforcement learning, data scarcity creates epistemic uncertainty that makes robustness important early in learning, whereas sufficient exploration is needed to learn the true-environment optimal policy.
We study this time-varying robustness--exploration trade-off through a quantile Bayesian risk-aware Markov decision process (BR-MDP), in which the quantile level controls how posterior uncertainty enters the Bellman backup.
We characterize this control through an asymptotic normality result for the difference between the quantile BR-MDP value and the value in the true environment. The result implies that upper/lower-tail quantiles induce optimism/pessimism towards epistemic uncertainty, and the magnitude of the optimism/pessimism decreases as data accumulate.
Building on this characterization, we propose an online Bayesian risk-aware algorithm with an adaptive quantile schedule that emphasizes robustness early and gradually encourages exploration of less-visited state--action pairs.
We establish sublinear Bayesian regret bounds with respect to both the true optimal value and the optimal BR-MDP robust value. Numerical experiments demonstrate strong performance in both exploration-demanding and exploration-costly environments.
\end{abstract}

\noindent\textbf{Keywords:} online reinforcement learning; Bayesian risk optimization; Markov decision process

\section{Introduction}
In online reinforcement learning (RL), an agent sequentially interacts with an unknown environment and uses collected data to estimate the unknown environment and update the policy used in subsequent interactions. Thus, each action affects both the immediate reward and the information available for future decisions. Limited data lead to epistemic uncertainty in estimating environment parameters \citep{DerKiureghianDitlevsen2009}. 
This uncertainty is central to the consideration of exploration--exploitation trade-off \citep{jaksch2010near,osband2013psrl,azar2017minimax,ma2025eubrl}: exploitation chooses actions that appear optimal under current estimates to pursue high estimated cumulative reward, whereas exploration collects information to reduce epistemic uncertainty. 

Although regions with higher uncertainty offer a greater incentive to explore, acting in such regions is also risky because the unreliable estimates induce estimated optimal policies that can perform poorly in the true environment. This risk is most pronounced early in learning, when data are scarce. 
It is particularly salient in high-stakes settings with limited interaction budgets \citep{dulacarnold2021realworld}, such as public health intervention problems \citep{liang2024bayesian} and inventory or service systems with costly trial-and-error decisions. Therefore, beyond the classical exploration--exploitation consideration, the robustness of the policy used in interactions (which we refer to as the interaction policy throughout the rest of the paper) is also a primary concern early in learning, as it hedges against the risk induced by acting under epistemic uncertainty.
As learning progresses and more data are collected, epistemic uncertainty decreases and such unreliable estimates become less likely. Consequently, the need for robustness becomes less essential, whereas exploring less-visited state-action pairs becomes more urgent for learning the optimal policy in the true environment.
This yields an intrinsic and time-varying robustness--exploration trade-off: robustness is valuable early in learning, but maintaining a fixed conservative attitude can later hinder the exploration needed to learn the true-environment optimal policy.
Thus, the interaction policy should adapt its treatment of epistemic uncertainty over time, being more robust when data are scarce and becoming less conservative as information accumulates.

To account for epistemic uncertainty, a widely used approach is robust and distributionally robust MDPs and reinforcement learning, which optimizes worst-case performance over uncertainty or ambiguity sets \citep{elghaoui2005rmdp,iyengar2005robust,xu2010drmdp,wiesemann2013robust}. However, much of this literature focuses on offline settings, where the historical dataset remains fixed throughout learning or data are accessed through a generative model or simulator \citep{panaganti2022generative,zhou2021droffline,panaganti2022offline,blanchet2023double}. These results show how robust policies can be learned when data are exogenously available, but do not address how an interaction policy should jointly manage robustness and exploration as data accumulate.

Recent online robust RL works are closer to the present setting \citep{wang2021onlinerobust,badrinath2021rlspi,dong2022onlinerobust,lu2024interactive,wang2023brql,ghosh2025orvit,wang2025online}. 
Some of these studies rely on an {external exploratory behavior policy} to interact with the true environment \citep{wang2021onlinerobust,badrinath2021rlspi,wang2023brql}. However, such a behavior policy does not account for robustness during interaction. 
Another line of work encourages exploration by adding explicit visit-count-dependent bonuses to robust Bellman backups before selecting the interaction policy \citep{dong2022onlinerobust,lu2024interactive,ghosh2025orvit,wang2025online}. These bonuses take larger values early in learning, so exploration can have a greater effect on action selection than the robust value estimate. These methods therefore do not directly ensure that the interaction policy is robust. Moreover, these methods aim to learn an {optimal robust policy}, whereas our goal is to learn the optimal policy in the true environment while ensuring that the interaction policy is robust, especially early in learning. This leaves open how the {interaction policy should adapt its risk attitude over time}---being more robust when data are scarce, but becoming less conservative as more data are collected so that the agent can explore enough to learn the optimal policy in the true environment.

The Bayesian risk-aware Markov Decision Process (BR-MDP) provides a starting point to this question. It models unknown parameters in the true environment through a posterior distribution and imposes risk measures on future rewards at each transition step, resulting in a data-adaptive stochastic model \citep{wu2018bro,lin2022brmdp,wang2023brql,lin2025approximate}. 
Recently, \citet{wang2025online} incorporated lower-tail CVaR as the risk measure in the BR-MDP, so that solving the resulting BR-MDP yields a risk-averse policy. Based on this formulation, they proposed an online Bayesian risk-averse algorithm to learn an optimal robust policy under the Bayesian risk criterion. However, its regret analysis also relies on an additional exploration bonus that is large when visit counts are small. As a result, the interaction policy may not have the desired robustness to epistemic uncertainty early in learning.

To rigorously characterize the robustness-exploration trade-off in online RL, we first study the BR-MDP formulation with a fixed $\alpha$-quantile as the risk measure, referred to as $\alpha$-quantile BR-MDP. Our theoretical analysis shows how the $\alpha$-quantile BR-MDP explicitly controls robustness to epistemic uncertainty while maintaining exploration. 
The optimal policy of the $\alpha$-quantile BR-MDP can be more robust or more exploratory, depending on the quantile level $\alpha$. Therefore, we propose adaptive quantile BR-MDP (AQ-BRMDP), which uses an adaptive quantile schedule to respond to the evolving robustness--exploration trade-off in online RL. Early in learning, the schedule sets the quantile level to emphasize lower-tail evaluations of the cumulative future rewards, yielding a more robust policy. As more data are collected and the posterior concentrates, the schedule gradually increases the quantile level, especially for less-visited state-action pairs that remain important for learning the optimal policy in the true environment, thereby encouraging exploration toward those pairs. In the discounted infinite-horizon setting, we implement this idea through pseudo-episode construction, as in \citet{xu2024continuing}, which partitions the interaction process into intervals with random lengths according to the discount factor. Specifically, at the beginning of each pseudo-episode, we update the posterior belief and quantile schedule, then solve the corresponding $\alpha$-quantile BR-MDP, and finally execute the resulting policy until the next update. This yields an implementable online procedure that adapts the treatment of epistemic uncertainty to the stage of learning and the collected data.

We summarized the main contributions of this paper as follows.

\begin{enumerate}
    \item We formulate the $\alpha$-quantile BR-MDP and show that the quantile level can explicitly control the trade-off between robustness and exploration.
    We characterize this trade-off through an analytical result that 
    the difference between the value function of the $\alpha$-quantile BR-MDP and the original value function is asymptotically normal.
    The magnitude of this mean increases as the quantile level moves farther towards the lower tail, inducing more robust policy, or farther into the upper tail, inducing more exploratory policy. It decreases as more data are collected, at a rate of $O(\frac{1}{\sqrt{N}})$,
    where $N$ is the total number of data points used to estimate the posterior distribution.

    \item We utilize the property above to design an online Bayesian risk-aware algorithm, AQ-BRMDP, that combines adaptive quantile scheduling with pseudo-episodic posterior updates in the discounted infinite-horizon setting.
    The quantile schedule is designed to depend on both visit counts across state-action pairs and the stage of learning, so AQ-BRMDP can adapt to the evolving robustness--exploration trade-off in online RL settings.

    \item Theoretically, we establish that AQ-BRMDP has Bayesian regret bounds of order $\widetilde{O}(\sqrt{T})$, where $T$ is the total number of interactions, with respect to two benchmarks: the optimal value in the true environment, and the  optimal robust value. Numerically, we evaluate the performance of AQ-BRMDP in two environments: one that requires sustained exploration, and the other in which exploratory mistakes are costly. We also implement an extension of the proposed algorithm to continuous-state spaces and evaluate its empirical performance in a continuous-state environment.
\end{enumerate}

\subsection{Related Work}\label{sec:related_work}

Classical online reinforcement learning studies the exploration--exploitation trade-off for efficiently learning the optimal policy in the true environment. A broad class of optimism-based methods addresses this trade-off through optimistic initialization, optimistic models, confidence sets, or explicit exploration bonuses \citep{kearns2002near,brafman2002rmax,strehl2006pac,strehl2008analysis,jaksch2010near,bartlett2009regal,azar2017minimax,he2021nearly,jin2018qlearning,dong2019qlearning}. Posterior-sampling methods instead introduce exploration through the randomness of models sampled from a Bayesian posterior and are typically analyzed through Bayesian regret \citep{osband2013psrl,abbasi2015lazypsrl,russo2018tutorial,xu2024continuing}. Related posterior-inference and posterior-quantile methods also use posterior information to direct exploration toward actions whose values remain uncertain \citep{tiapkin2022dirichlet,tarbouriech2023probinf,ma2025eubrl}. This literature primarily uses uncertainty to improve learning of the true-environment optimal policy; it does not explicitly control the interaction risk of exploratory actions when epistemic uncertainty is high. 

Robust and distributionally robust reinforcement learning originate from robust MDP formulations, where the objective is to optimize performance under worst-case models or over {ambiguity sets} \citep{elghaoui2005rmdp,iyengar2005robust,xu2010drmdp,wiesemann2013robust}. This line has developed into learning algorithms for generative-model and offline-data settings \citep{panaganti2022generative,zhou2021droffline,panaganti2022offline,blanchet2023double}, and more recently into online robust RL with interactive data collection \citep{wang2021onlinerobust,badrinath2021rlspi,dong2022onlinerobust,lu2024interactive,ghosh2025orvit}. 
These works primarily account for model misspecification and aim to learn robust-optimal policies, but they do not directly address how the {behavior policy} should balance robustness with the exploration needed to learn the optimal policy in the true environment.

Our work bridges these two lines of work: as robust RL, it accounts for epistemic uncertainty to improve the robustness of the interaction policy; as an online RL method designed for efficient exploration, it ultimately aims to learn the optimal policy in the true environment. The key idea of our work is to use an $\alpha$-quantile Bayesian risk criterion to interpolate between robustness early in learning and exploration needed later to learn the true-environment optimal policy, rather than treating robustness and exploration as separate mechanisms. 

{Our paper is built on Bayesian risk optimization and the BRMDP formulation. Bayesian risk optimization was introduced by \citet{zhouxie2015bro} as a flexible alternative to worst-case robustness in static (one-stage) stochastic optimization, and its statistical properties were established by \citet{wu2018bro}. This perspective was extended to sequential (multi-stage) decision making through BRMDP, spanning finite- and infinite-horizon formulations as well as offline and online learning settings \citep{lin2022brmdp,lin2025approximate,wang2023brql,wang2025online}. 
On a related note, a broader Bayesian RL literature studies how posterior uncertainty can be represented and used in sequential decision making \citep{ghavamzadeh2015survey}. In particular, Bayes-adaptive formulations augment the decision state with posterior beliefs and optimize Bayes-adaptive expected return \citep{duff2002bamdp,poupart2006analytic,ross2007bayesadaptive}. }

Finally, our paper should be distinguished from safe exploration. 
In this literature, risk is intrinsic to the true environment, i.e., the possibility that certain actions lead to inherently undesirable outcomes \citep{garcelon2020conservative,yamagata2024safe}. In contrast, the risk we consider does not arise from the environment itself, but from epistemic uncertainty: limited data can cause the agent to act on unreliable estimates of the environment. This risk diminishes as more data are collected.

\section{Bayesian Risk-Aware MDPs}
\label{sec:formulation}

\subsection{$\alpha$-quantile Bayesian Risk MDPs}
\paragraph{Unknown True Environment.}
We consider an infinite-horizon discounted Markov decision process (MDP)
$\mathcal{M}=(\mathcal S,\mathcal A,\gamma, r,P^c)$, where $\mathcal S$ and $\mathcal A$ are finite state and action spaces with $|\mathcal S|=S$ and $|\mathcal A|=A$, $\gamma\in(0,1)$ is the discount factor, $r:\mathcal S\times\mathcal A\to [0,1]$ is the reward function, and $P^c(s'\mid s,a)$ is the true but unknown transition kernel. For notational simplicity, we write $P^c_{s,a}(\cdot):=P^c(\cdot\mid s,a)$. 

Let $\Pi$ be the set of Markovian policies $\pi=\{\pi_t\}_{t\ge 0}$, where each $\pi_t(\cdot\mid s)$ is a probability distribution over $\mathcal A$ for every $s\in\mathcal S$. Given an initial state $s_0=s$, the objective is to maximize the expected total discounted reward
\begin{align}
\label{eq:true-objective}
    \sup_{\pi\in\Pi}
\mathbb E^\pi\left[\sum_{t=0}^\infty \gamma^t r(s_t,a_t) \,\middle|\, s_0=s\right],\quad \forall s\in\mathcal S,
\end{align}
where $a_t\sim \pi_t(\cdot\mid s_t)$ and $s_{t+1}\sim P^c(\cdot\mid s_t,a_t)$. {For a policy $\pi\in\Pi$, the objective function under a fixed policy $\pi$ in \eqref{eq:true-objective} defines the value function of $\pi$ at state $s$
\begin{equation}
V^\pi(s)
:=
\mathbb E^\pi\left[\sum_{t=0}^\infty \gamma^t r(s_t,a_t) \,\middle|\, s_0=s\right],
\qquad \forall s\in\mathcal S.
\end{equation}
Accordingly, the optimal value function is $V^*(s):=\sup_{\pi\in\Pi}V^\pi(s), $ $ \forall s\in\mathcal S,$ and a policy $\pi^*$ is optimal if $V^{\pi^*}(s)=V^*(s)$ for all $s\in\mathcal S$.}

{For tabular MDPs, there exists a stationary deterministic optimal policy \citep{puterman1994markov}. Thus, without loss of optimality, 
we henceforth restrict attention to stationary deterministic policies of the form $\pi:\mathcal S\to\mathcal A$.} For any such policy, the value function satisfies the Bellman equation, $\forall s\in\mathcal S,$
\begin{equation}
\label{eq:V-true-pi}
V^\pi(s)
=
r\bigl(s,\pi(s)\bigr)
+
\gamma\,\mathbb E_{P^c_{s,\pi(s)}}\!\left[V^\pi(s')\right],
\end{equation}
where $\mathbb E_P[V^\pi(s')] = \sum_{s'\in\mathcal S} P(s')V^\pi(s') = P^\top V^\pi$ is the expected value (EV) of the successor-state value function with respect to the distribution $P$. The optimal value function $V^*$ satisfies the Bellman optimality equation, $\forall s\in\mathcal S,$
\begin{equation}
\label{eq:V-true-opt}
V^*(s)
=
\max_{a\in\mathcal A}\left\{
r(s,a)
+
\gamma\,\mathbb E_{P^c_{s,a}}\!\left[V^*(s')\right]
\right\}.
\end{equation}
In practice, the transition kernel $P^c$ is unknown and must be learned from data, which introduces epistemic uncertainty. We assume throughout that the reward function $r$ is known, but the proposed approach can be extended to settings with unknown rewards.

\paragraph{Bayesian Modeling of the Transition Kernel.}
To model epistemic uncertainty in the unknown transition kernel $P^c$, we adopt a Bayesian approach. Specifically, we model each unknown transition vector $P^c_{s,a}$ by a Bayesian random vector $P_{s,a}$ and place independent Dirichlet conjugate priors on $\{P_{s,a}\}_{(s,a)\in\mathcal S\times\mathcal A}$. After observing a trajectory, the posterior of each $P_{s,a}$ remains Dirichlet by conjugacy, with a parameter vector determined by the prior and the transition counts along the observed trajectory. Details of the Dirichlet definition and posterior update are provided in Appendix~\ref{app:dirichlet-update}.

When analyzing the BR-MDP under the current posterior, we write $P_{s,a}\sim \Dir(\phi(s,a)),$ 
where $\phi(s,a)=(\phi(s,a,s'))_{s'\in\mathcal S}$ is the current posterior parameter vector. We also write $\phi=\{\phi(s,a)\}_{(s,a)\in\mathcal S\times\mathcal A}$ for the collection of current posterior parameters. 
The dependence of $\phi$ on the prior and the observed trajectory is suppressed for notational simplicity in this section.
\paragraph{Bayesian Risk Value Function.}
To account for epistemic uncertainty represented by the current posterior over transition kernels, we evaluate policies by applying a risk measure with respect to this posterior distribution. 

For a policy $\pi$, the value function of BR-MDP with posterior parameter $\phi$ is defined through the Bellman equation \citep{wang2023brql,wang2025online}, $\forall s\in\mathcal S,$
\begin{align}
V^\pi_{\phi,\alpha}(s)=
r\bigl(s,\pi(s)\bigr)+
\gamma\,\rho^\alpha_{\phi(s,\pi(s))}
\!\bigl(P^\top V^\pi_{\phi,\alpha}\bigr),
\qquad 
\label{eq:VR-pi}
\end{align}
where $\rho_{\phi(s,\pi(s))}^\alpha(\cdot)$ is a risk measure at level $\alpha\in(0,1)$ with respect to $\Dir(\phi(s,\pi(s)))$. Notably, the Bayesian risk value function $V^\pi_{\phi,\alpha}$ depends on the posterior parameter $\phi$. As new data are observed, $\phi$ is updated via \eqref{eq:update-of-phi}, which changes the posterior distribution of the transition kernel and the associated risk evaluation. Consequently, both the value function and the induced optimal policy evolve over time, reflecting the agent's updated belief about the environment.

As shown by \citet{wang2023brql,wang2025online}, the value function of BR-MDP also admits an equivalent nested representation:
\begin{align}
V^\pi_{\phi,\alpha}(s_0)
&=
r(s_0,a_0)
+\gamma \rho^\alpha_{\phi(s_0,a_0)}
\!\Bigl(
\mathbb E_{P^1}\Bigl[
r(s_1,a_1)
+
\gamma \rho^\alpha_{\phi(s_1,a_1)}
\!\Bigl(
\mathbb E_{P^2}
\bigl[r(s_2,a_2)+\cdots\bigr]
\Bigr)
\Bigr]
\Bigr).
\label{eq:nested V}
\end{align}
where, independently across stages, $P^t\sim \Dir(\phi(s_{t-1},a_{t-1}))$, $s_t\sim P^t$ for $t\ge 1$, and $a_t=\pi(s_t)$ for $t\ge 0$.
Moreover, the optimal value function can be computed via the following Bayesian risk Bellman optimality equation: $\forall s\in\mathcal S$,
\begin{align}
V^*_{\phi,\alpha}(s)
= \max_{a\in\mathcal{A}} \Bigl\{
r\bigl(s,a\bigr)
+
\gamma\,\rho^\alpha_{\phi(s,a)}
\!\bigl(P^\top V^*_{\phi,\alpha}\bigr)\Bigr\}.
\label{eq:VR-opt}
\end{align}
By solving \eqref{eq:VR-opt}, we can obtain an optimal policy, $\pi^*_{\phi,\alpha}$, that is deterministic and stationary.

We also define the corresponding Bellman operator under the policy $\pi$ and the optimal Bellman operator as, $\forall s\in\mathcal S,$ 
\begin{align}
(\mathcal T^\pi_{\phi,\alpha}V)(s)
&:=
r\bigl(s,\pi(s)\bigr)
+
\gamma\,\rho^\alpha_{\phi(s,\pi(s))}
\!\bigl(P^\top V\bigr),
\label{eq:TR-pi}\\
(\mathcal T^*_{\phi,\alpha}V)(s)
&:=
\max_{a\in\mathcal A}
\Bigl\{
r(s,a)
+
\gamma\,\rho^\alpha_{\phi(s,a)}
\!\bigl(P^\top V\bigr)
\Bigr\}.
\label{eq:TR}
\end{align}
Under structural conditions on the risk measure discussed by \citet{wang2023brql}, $\mathcal T^\pi_{\phi,\alpha}$ and $\mathcal T^*_{\phi,\alpha}$ are $\gamma$-contractions under $\|\cdot\|_\infty$ and therefore admit unique fixed points. 
\paragraph{$\alpha$-quantile BR-MDP.} We specialize the risk measure $\rho^\alpha(\cdot)$ to the $\alpha$-quantile, thereby obtaining an $\alpha$-quantile BR-MDP. We refer to $V^\pi_{\phi,\alpha}$ as the corresponding $\alpha$-quantile BR-MDP value function. Specifically, for a random variable $X$ and $\alpha\in(0,1)$, the $\alpha$-quantile of $X$ is defined as
$
    \rho^\alpha(X):=\inf\{z\in\mathbb R:\mathbb P(X\le z)\ge \alpha\}.
$
Under this choice, the quantile level $\alpha$ determines which part of the posterior distribution of $P_{s,a}^\top V$ is emphasized in the Bellman backup, thereby encoding a risk attitude toward epistemic uncertainty. 

To illustrate the effect of the quantile level $\alpha$, we define the one-step adjustment induced by 
replacing the true EV of successor states with its posterior $\alpha$-quantile as follows:
$$
b^{\alpha}_{\phi(s,a)}(V)
:=
\rho^\alpha_{\phi(s,a)}\!\bigl(P^\top V\bigr)
-\mathbb E_{P^c_{s,a}}[V(s')].
$$
The optimal Bellman operator can then be written as
$$(\mathcal T^*_{\phi,\alpha}V)(s)
=
\max_{a\in\mathcal A}
\Bigl\{
r(s,a)
+\gamma\,\mathbb E_{P^c_{s,a}}[V(s')]+
\gamma\,b^{\alpha}_{\phi(s,a)}(V)
\Bigr\},
\qquad \forall s\in\mathcal S.$$

This representation decomposes the Bellman backup into $r(s,a)+\gamma\,\mathbb E_{P^c_{s,a}}[V(s')]$, the backup under the true transition kernel for $(s,a)$, and the one-step adjustment $b^{\alpha}_{\phi(s,a)}(V)$ that captures both the risk attitude and the belief about  epistemic uncertainty in the transition kernel. 

We next briefly discuss how the risk level $\alpha$ and the posterior parameter $\phi$ together affect the optimal policy from a \textbf{Bayesian perspective}.
Conditional on the current posterior parameter $\phi(s,a)$, the unknown true transition vector $P^c_{s,a}$ is viewed as a draw from the posterior $\Dir(\phi(s,a))$ and $b^{\alpha}_{\phi(s,a)}(V)
=
\rho^\alpha_{\phi(s,a)}\!\bigl(P^\top V\bigr)
-(P^c_{s,a})^\top V.$ Therefore, by the definition of the $\alpha$-quantile, the posterior probability of $b^{\alpha}_{\phi(s,a)}(V) \ge 0$ is at least $\alpha$. 
For large $\alpha$, the Bellman backup in \eqref{eq:VR-opt} emphasizes the upper tail of the posterior distribution of $P_{s,a}^\top V$. In this case, $b^{\alpha}_{\phi(s,a)}(V)$ is positive with high posterior probability $\alpha$ and acts as an implicit bonus for exploration, reflecting an optimistic attitude toward epistemic uncertainty about the transition kernel.
Moreover, for such large $\alpha$, the magnitude of $b^{\alpha}_{\phi(s,a)}(V)$ increases with larger epistemic uncertainty (i.e., a more dispersed posterior distribution of $P_{s,a}$). Therefore, $b^{\alpha}_{\phi(s,a)}(V)$ acts as an exploration incentive for state-action pairs $(s,a)$ with high epistemic uncertainty about $P_{s,a}$.
Conversely, when $\alpha$ is small, $b^{\alpha}_{\phi(s,a)}(V)$ is negative with high posterior probability $1-\alpha$ and its magnitude increases with larger epistemic uncertainty. As a result, the one-step adjustment $b^{\alpha}_{\phi(s,a)}(V)$ acts as an implicit penalty for epistemic uncertainty about $P_{s,a}$ and thereby yields a policy that is more robust to epistemic uncertainty. 

Overall, when $b^{\alpha}_{\phi(s,a)}(V)$ is positive, it plays a role analogous to an optimistic exploration bonus in online RL methods such as UCBVI-$\gamma$ \citep{he2021nearly}, but it is induced by the posterior quantile rather than added as an explicit term derived by concentration inequalities. This adjustment can also be negative when the quantile level is chosen in the lower tail, therefore acting as a pessimistic penalty.

\subsection{Asymptotic Analysis of $\alpha$-quantile BR-MDP}
In the previous subsection, we introduced $\alpha$-quantile BR-MDP to account for epistemic uncertainty in the unknown transition kernel and discussed how the quantile level $\alpha$ reflects a risk attitude toward this uncertainty.
In this subsection, we further refine this interpretation by studying the asymptotic behavior of the gap between the $\alpha$-quantile BR-MDP value function $V^\pi_{\phi,\alpha}$ and the true value function $V^\pi$ as the Dirichlet posterior concentrates.
Let $P_\pi^c:=\bigl(P^c_{s,\pi(s)}\bigr)_{s\in\mathcal S}$ denote the true transition matrix induced by policy $\pi$, and let $\diag(x)$ denote the diagonal matrix with diagonal entries given by the entries of $x$.
We impose the following regularity conditions on the data-collection scheme.

\begin{assumption}
\label{ass:quantile_wc}
Fix a stationary deterministic policy $\pi$.
\begin{enumerate}[label=(\roman*)]
    \item 
    Let $N$ denote the total number of observed transitions. The data are collected along an on-policy trajectory $\{s_0,a_0,s_1,\ldots,s_{N-1},a_{N-1},s_N\}$, where $a_i=\pi(s_i)$ and $s_{i+1}\sim P^c_{s_i,a_i}$ for $i=0,\ldots,N-1$. Define the visit count of each state-action pair by $N(s,a):=\sum_{i=0}^{N-1}\mathbb I\{s_i=s,a_i=a\}$. For each $s\in\mathcal S$, there exists a constant $\bar n_s\in(0,1)$ such that
    $$
    \frac{N(s,\pi(s))}{N}\xrightarrow{\mathrm{a.s.}}\bar n_s,
    \qquad
    \sum_{s\in\mathcal S}\bar n_s=1.
    $$
    \label{ass:quantile_wc_1}

    \item For each state-action pair $(s,a)\in\mathcal S\times\mathcal A$, the prior on $P_{s,a}$ is a uniform
    Dirichlet prior with $\phi_0(s,a,s')=1$, $\forall s'\in\mathcal S$.\label{ass:quantile_wc_3}
\end{enumerate}
\end{assumption}
\assref{ass:quantile_wc}{ass:quantile_wc_1} is a trajectory-based sampling condition. It is weaker than the state-action-wise independent sampling condition used in \citet{wang2025online}, which assumes independent transition samples within each state-action pair and mutual independence across different state-action pairs. Here, the data are collected along a single on-policy trajectory, and the induced temporal dependence is handled in the proof by a martingale central limit theorem (CLT). The ratio condition ensures that each on-policy state-action pair $(s,\pi(s))$ receives a nonvanishing fraction of the total number of observations; it holds, for instance, when the Markov chain induced by $P_\pi^c$ is ergodic with a strictly positive stationary mass on every state.
\assref{ass:quantile_wc}{ass:quantile_wc_3} is used only for notational simplicity. 
More generally, any fixed Dirichlet prior with strictly positive parameters contributes only $O(1)$ pseudo-counts and therefore does not affect the limiting behavior.

\begin{theorem}[Asymptotic normality for $\alpha$-quantile BR-MDP]
\label{thm:quantile_weak}
Fix $\alpha\in(0,1)$ and let $z_\alpha:=\Phi^{-1}(\alpha)$, where $\Phi$ is the cdf of the
standard normal distribution. Under Assumption~\ref{ass:quantile_wc},
$$
    \sqrt N\,(I-\gamma P_\pi^c)\bigl(V^\pi_{\phi,\alpha}-V^\pi\bigr)
\Rightarrow
\mathcal N\!\Bigl(\gamma\lambda_\pi,\ \operatorname{diag}\bigl((\gamma\sigma_\pi)^2\bigr)\Bigr),
$$
where, for each $s\in\mathcal S$, $\sigma_\pi^2(s)
:=
\frac{1}{\bar n_s}\,
\operatorname{Var}_{s'\sim P^c_{s,\pi(s)}}\!\bigl[V^\pi(s')\bigr]$, and $\lambda_\pi(s):=z_\alpha\,\sigma_\pi(s)$.
\end{theorem}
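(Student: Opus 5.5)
Write $\Delta:=V^\pi_{\phi,\alpha}-V^\pi$. Subtracting the Bellman equation \eqref{eq:V-true-pi} from \eqref{eq:VR-pi} gives, for each $s$ with $a=\pi(s)$,
\[
\Delta(s)-\gamma (P^c_{s,a})^\top\Delta
=\gamma\Bigl[\rho^\alpha_{\phi(s,a)}\bigl(P^\top V^\pi_{\phi,\alpha}\bigr)-(P^c_{s,a})^\top V^\pi_{\phi,\alpha}\Bigr]
=\gamma\, b^\alpha_{\phi(s,a)}\bigl(V^\pi_{\phi,\alpha}\bigr).
\]
Hence $(I-\gamma P^c_\pi)\Delta=\gamma\, b$, where $b(s):=b^\alpha_{\phi(s,\pi(s))}(V^\pi_{\phi,\alpha})$. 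The theorem therefore reduces to showing that $\sqrt N\, b(s)$, jointly over $s$, converges to independent $\mathcal N(\lambda_\pi(s),\sigma_\pi^2(s))$ limits.

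\textbf{Step 1: freezing $V$.} I would first replace $V^\pi_{\phi,\alpha}$ by the fixed vector $V^\pi$ inside $b$. The $\alpha$-quantile is monotone and translation-equivariant, so $|\rho(P^\top V)-\rho(P^\top W)|\le\|V-W\|_\infty$. Consequently
\[
|b^\alpha(V^\pi_{\phi,\alpha})-b^\alpha(V^\pi)|\le 2\|\Delta\|_\infty ,
\qquad
\|\Delta\|_\infty\le \tfrac{\gamma}{1-\gamma}\max_s |b^\alpha_{\phi(s,\pi(s))}(V^\pi_{\phi,\alpha})| .
\]
The Lipschitz bound alone is not enough, since we need the error to be $o(N^{-1/2})$. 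The sharper argument is a first-order expansion: for a Dirichlet with large counts, $\rho^\alpha(P^\top V)=\hat P^\top V+z_\alpha\sqrt{V^\top\Sigma V}+o(N^{-1/2})$, uniformly on bounded $V$. Here $\hat P$ is the posterior mean and $\Sigma=(\diag(\hat P)-\hat P\hat P^\top)/(\|\phi\|_1+1)$. The map $V\mapsto \sqrt{V^\top\Sigma V}$ is Lipschitz with constant $O(N^{-1/2})$. Combining this with a crude bound $\|\Delta\|_\infty=O_p(N^{-1/2})$, which follows from the contraction together with the expansion, the substitution error is $O(N^{-1/2})\cdot O_p(N^{-1/2})=o_p(N^{-1/2})$.

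\textbf{Step 2: posterior quantile expansion.} Under Assumption~\ref{ass:quantile_wc}, $\phi(s,a,s')=1+N(s,a,s')$ and $N(s,a)/N\to\bar n_s>0$ almost surely. I would represent the Dirichlet as normalized Gammas and apply a conditional CLT (Bernstein--von Mises for Dirichlet), which gives
\[
\sqrt{N(s,a)}\,(P-\hat P)^\top V \;\Rightarrow\; \mathcal N\bigl(0,\operatorname{Var}_{\hat P}[V]\bigr)
\]
conditionally on the data. Because the limit cdf is continuous and strictly increasing, quantiles converge, and
\[
\rho^\alpha = \hat P^\top V^\pi + z_\alpha \sqrt{\operatorname{Var}_{\hat P}[V^\pi]/N(s,a)}+o_p(N^{-1/2}).
\]
Since $\hat P\to P^c$ almost surely, $\sqrt{N}\cdot$ (second term) $\to z_\alpha\sigma_\pi(s)=\lambda_\pi(s)$ almost surely.

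\textbf{Step 3: martingale CLT for the empirical part.} It remains to handle $\sqrt N(\hat P_{s,a}-P^c_{s,a})^\top V^\pi$. Up to $O(1/N)$ prior corrections, this equals
\[
\frac{\sqrt N}{N(s,a)}\sum_{i<N}\mathbb I\{s_i=s\}\bigl(V^\pi(s_{i+1})-(P^c_{s,a})^\top V^\pi\bigr).
\]
The summands are martingale differences with respect to the trajectory filtration. Differences for distinct $s$ are conditionally uncorrelated because the indicators are disjoint. The conditional quadratic variation of $N^{-1/2}\sum_i$ tends to $\diag\bigl(\bar n_s\operatorname{Var}_{P^c_{s,\pi(s)}}[V^\pi]\bigr)$, and the Lindeberg condition holds trivially because the summands are bounded. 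The multivariate martingale CLT (via Cramér--Wold) plus Slutsky then yields the limit $\mathcal N(0,\diag(\sigma_\pi^2))$. Adding the deterministic-limit bias $\lambda_\pi$ and multiplying by $\gamma$ gives the claim.

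\textbf{Main obstacle.} I expect the hardest part to be Step 2 made uniform. The quantile expansion must hold with an $o_p(N^{-1/2})$ remainder uniformly over random, data-dependent $V$ near $V^\pi$, and simultaneously with the random posterior parameters. This is also what Step 1 needs. I would first try a Berry--Esseen bound for the Gamma-representation linear functional, which gives a conditional Kolmogorov error of order $O(N^{-1/2})$. Combined with a density lower bound near $z_\alpha$, this converts into a quantile error of order $o(N^{-1/2})$ on scale $N^{-1/2}$. Degenerate cases with $\operatorname{Var}[V^\pi]=0$ would be treated separately, since there the quantile and mean both collapse.
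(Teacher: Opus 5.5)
Your proposal is correct and follows the paper's proof essentially step for step. It uses the same split of $(I-\gamma P^c_\pi)\Delta$ into the backup error at the frozen vector $V^\pi$ plus a substitution remainder, the same crude rate $\|\Delta\|_\infty=O_p(N^{-1/2})$ from the sup-norm Lipschitz property of the quantile and the contraction, the Dirichlet Bernstein--von Mises quantile expansion, and the martingale CLT for the empirical transition frequencies. The paper centers at the empirical frequencies $\widetilde P_N$ with covariance $\Sigma(P^c_{s,a})$ rather than at the posterior mean and covariance, an asymptotically negligible difference. The uniformity you flag as the main obstacle is unnecessary: $V^\pi_{\phi,\alpha}$ is $\mathcal F_N$-measurable and converges in probability to $V^\pi$, so the paper simply applies the conditional quantile expansion along that single sequence instead of invoking a Berry--Esseen bound. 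Note also that the substitution remainder contains the cross term $\gamma(\widetilde P_N(s,a)-P^c_{s,a})^\top\Delta$, which you did not mention; it is $O_p(N^{-1})$ by the same martingale rate bound.
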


The proof of Theorem~\ref{thm:quantile_weak} is deferred to Appendix~\ref{sec:Proof of Weak Convergence}. In particular, Theorem~\ref{thm:quantile_weak} yields the following expansion of the $\alpha$-quantile BR-MDP value function:
\begin{align}
V^\pi_{\phi,\alpha}
&=
V^\pi
+
(I-\gamma P_\pi^c)^{-1}
\times
\left(
\frac{\gamma\lambda_\pi}{\sqrt N}
+
\frac{\gamma\,\operatorname{diag}(\sigma_\pi)\,Z}{\sqrt N}
\right)
+
o_p(N^{-1/2}),
\label{eq:quantile_expansion}
\end{align}
where $\sigma_\pi:=\bigl(\sigma_\pi(s)\bigr)_{s\in\mathcal S}$, $o_p(\cdot)$ denotes little-$o$ notation in probability with respect to the randomness of the observed data, and
$Z\sim\mathcal N(0,I_{S})$ is a standard multivariate normal vector. 
For fixed $\alpha\in(0,1)$, \eqref{eq:quantile_expansion} yields a stochastic expansion of $V^\pi_{\phi,\alpha}$ around $V^\pi$, whose leading term decomposes into a deterministic bias term of order $N^{-1/2}$ and a Gaussian fluctuation term, with remainder $o_p(N^{-1/2})$.
More specifically, before Bellman propagation, i.e., before multiplying by
$(I-\gamma P_\pi^c)^{-1}$ in \eqref{eq:quantile_expansion}, the asymptotic mean of the one-step adjustment $b^{\alpha}_{\phi(s,\pi(s))}(V^\pi)$ is given by
\begin{align}
\label{eq:asy mean}
    \frac{\lambda_{\pi}(s)}{\sqrt{N}}
&=
z_\alpha\sqrt{\frac{\operatorname{Var}_{s'\sim P^c_{s,\pi(s)}}\!\bigl[V^\pi(s')\bigr]}{\bar n_s N}}.
\end{align}
The magnitude of \eqref{eq:asy mean} is larger for states that are sampled less frequently (i.e. small $\bar n_s$) or for which the next-state value $V^\pi(s')$ is more dispersed under the true transition kernel (i.e. large $\operatorname{Var}_{s'\sim P^c_{s,\pi(s)}}[V^\pi(s')]$).
The sign of \eqref{eq:asy mean} is determined by $z_\alpha$: when $\alpha<0.5$, the asymptotic mean of the one-step adjustment is negative, so it discourages visiting less frequently visited states on average, reflecting a conservative attitude toward epistemic uncertainty and inducing an underestimation of $V^\pi_{\phi,\alpha}$ after Bellman propagation; 
{when $\alpha=0.5$, $z_{0.5}=0$, so the deterministic bias term in the asymptotic expansion vanishes. This makes the $0.5$-quantile BR-MDP asymptotically risk-neutral, similar to applying the posterior expectation in \eqref{eq:VR-pi} under the normal approximation;}
when $\alpha>0.5$, $z_\alpha$ becomes positive, so the one-step adjustment provides a bonus to visit less frequently visited states on average, encouraging exploration, reflecting an optimistic attitude and inducing an overestimation due to such exploration incentives. 
Thus, Theorem~\ref{thm:quantile_weak} shows how the quantile level $\alpha$ of the quantile BR-MDP accounts for epistemic uncertainty in the asymptotic regime.

The matrix $(I-\gamma P_\pi^c)^{-1}$ then propagates this one-step adjustment through state transitions. For fixed $N$, the deterministic component in \eqref{eq:quantile_expansion}, $(I-\gamma P_\pi^c)^{-1}
{\gamma\lambda_\pi}/{\sqrt N}$, represents the mean shift of $V^\pi_{\phi,\alpha}$ away from $V^\pi$ induced by the $\alpha$-quantile in the Bellman backup. Since $\lambda_\pi = z_\alpha\sigma_\pi\to\pm\infty$ as $\alpha\uparrow 1$ or $\alpha\downarrow 0$, the magnitude of the mean shift increases as the quantile level moves farther into the upper tail for exploration or farther into the lower tail for robustness. On the other hand, for every fixed $\alpha\in(0,1)$, since
$\|(I-\gamma P_\pi^c)^{-1}\|_\infty\le (1-\gamma)^{-1}$, the deterministic component $(I-\gamma P_\pi^c)^{-1}
{\gamma\lambda_\pi}/{\sqrt N}$ is bounded
in sup-norm by
$\frac{\gamma}{1-\gamma}\,|z_\alpha|\,\|\sigma_\pi\|_\infty/\sqrt N$.
Hence, the mean shift induced by a fixed quantile level $\alpha$ shrinks automatically as more data are collected, which implies that the trade-off between robustness and exploration diminishes with more data.
As a result, for every fixed $\alpha\in(0,1)$, the difference
$V^\pi_{\phi,\alpha}-V^\pi$ is of order $O_p(N^{-1/2})$. 

Finally, the asymptotic covariance $\sigma^2_\pi$ in Theorem~\ref{thm:quantile_weak} does not depend on $\alpha$. Thus, $\alpha$ only controls the direction and magnitude of the mean shift, whereas the intrinsic variability is determined by the true transition dynamics and the policy used.

\subsection{Finite-sample Robustness of $\alpha$-quantile BR-MDP}
\label{sec:robustness lower bounds}

In the previous subsection, we introduced $\alpha$-quantile BR-MDP to account for epistemic uncertainty in the unknown transition kernel and discussed how the quantile level $\alpha$ reflects a risk attitude toward this uncertainty.
In this subsection, we further provide a robustness interpretation of $\alpha$-quantile BR-MDP.

With a slight abuse of notation, let $P=\bigl(P_{s,a}\bigr)_{(s,a)\in\mathcal S \times\mathcal{A}}$ be the random transition kernel obtained by independently drawing $P_{s,a}\sim \Dir(\phi(s,a))$ for each state-action pair. 
Let \(\mathbb P_\phi\) denote the posterior probability measure over the transition kernel \(P\) and $\rho^\alpha_{\phi}(\cdot)$ denote the $\alpha$-quantile with respect to the transition kernel $P$.
Given a realized transition kernel $P$ and a policy $\pi$, let $V^\pi_{P}$ denote the value function of $\pi$ under the transition kernel $P$.
For each state $s\in\mathcal S$, we define the posterior $\alpha$-quantile value of policy $\pi$ by
\begin{align}
{V^{\pi,\mathrm q}_{\phi,\alpha}(s)}
:=
\sup\Bigl\{
c\in\mathbb R:
\mathbb P_\phi\bigl(V^\pi_{P}(s)\ge c\bigr)\ge 1-\alpha
\Bigr\}.
\label{eq:posterior-lb}
\end{align}
Thus, \(V^{\pi,\mathrm q}_{\phi,\alpha}(s)\) is the largest value threshold that \(V^\pi_P(s)\) exceeds with posterior probability at least \(1-\alpha\). Smaller values of $\alpha$ correspond to a stronger robustness guarantee.

Assume that, for every fixed policy \(\pi\) and state \(s\), the posterior distribution of
\(V^\pi_P(s)\) is continuous, then $V^{\pi,\mathrm q}_{\phi,\alpha}(s)=\rho^\alpha_{\phi}\!\bigl(V^\pi_{P}(s)\bigr)$. Here, $\rho^\alpha_{\phi}\!\bigl(V^\pi_{P}(s)\bigr)$ applies the risk measure to the entire value function $V^\pi_P$, rather than applying the risk measure at each transition stage, as in the nested formulation of $V^\pi_{\phi,\alpha}$ in \eqref{eq:nested V}. Since the quantile functional is non-additive, {$V^{\pi,\mathrm q}_{\phi,\alpha}$} does not admit a Bellman equation in general. 
The hardness results of \citet{DelageMannor2010} imply that directly optimizing the corresponding posterior quantile objective, i.e. $\max_{\pi} V^{\pi,\mathrm{q}}_{\phi,\alpha}(s)$, is NP-hard under general uncertainty in the transition parameters. Even under independent Dirichlet transition priors, exact optimization remains difficult.
Instead, $\alpha$-quantile BR-MDP value function can serve as a tractable lower bound, as shown in the following proposition.

\begin{proposition}
\label{prop:hp-lb-no-subst}
For $\alpha\in(0,1)$, let $\bar\alpha := 1-(1-\alpha)^{1/S}$. Assume that, for every fixed policy \(\pi\) and state \(s\), the posterior distribution of
\(V^\pi_P(s)\) is continuous. For any fixed policy $\pi$,
$$
\mathbb P_\phi\Bigl(
V^\pi_{P}(s)\ge V^\pi_{\phi,\bar\alpha}(s),
\quad \forall s\in\mathcal S
\Bigr)
\ge 1-\alpha.
$$
Consequently, for every $s\in\mathcal S$, 
{$V^{\pi,\mathrm q}_{\phi,\alpha}(s)\ge V^\pi_{\phi,\bar\alpha}(s).$}
\end{proposition}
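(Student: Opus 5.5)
Fix $\pi$ and write $V:=V^\pi_{\phi,\bar\alpha}$; the plan is to build a high-probability ``good event'' on which the Bellman operator of the random kernel $P$ dominates $V$, and then use monotonicity. For each state $s$, let $a_s:=\pi(s)$ and define
\[
E_s:=\Bigl\{P_{s,a_s}^\top V\ \ge\ \rho^{\bar\alpha}_{\phi(s,a_s)}\bigl(P^\top V\bigr)\Bigr\},
\]
where the quantile is taken with respect to $\Dir(\phi(s,a_s))$ and $V$ is a \emph{deterministic} vector, since it is computed from $\phi$ only. The $\bar\alpha$-quantile $q_s$ of the random variable $X_s:=P_{s,a_s}^\top V$ satisfies $\mathbb P(X_s<q_s)\le\bar\alpha$ by the definition of the quantile as an infimum, because $\mathbb P(X_s\le z)<\bar\alpha$ for every $z<q_s$ and one lets $z\uparrow q_s$. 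Hence $\mathbb P_\phi(E_s)\ge 1-\bar\alpha$. Continuity is not even needed for this step.

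Next, the events $E_s$ for distinct $s$ involve distinct rows $P_{s,a_s}$, which are drawn independently under $\mathbb P_\phi$. Therefore
\[
\mathbb P_\phi\Bigl(\bigcap_{s} E_s\Bigr)\ \ge\ (1-\bar\alpha)^S\ =\ 1-\alpha,
\]
which is exactly why $\bar\alpha$ is defined as $1-(1-\alpha)^{1/S}$. A union bound would give only $1-S\bar\alpha$, so independence is the essential ingredient here.

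On $\bigcap_s E_s$, the fixed-point equation \eqref{eq:VR-pi} gives, for every $s$,
\[
V(s)=r(s,a_s)+\gamma q_s\le r(s,a_s)+\gamma P_{s,a_s}^\top V=(\mathcal T^\pi_P V)(s),
\]
where $\mathcal T^\pi_P$ is the standard policy-evaluation operator under kernel $P$. This operator is monotone and a $\gamma$-contraction with fixed point $V^\pi_P$. Iterating $V\le \mathcal T^\pi_P V\le (\mathcal T^\pi_P)^2V\le\cdots\to V^\pi_P$ yields $V\le V^\pi_P$ componentwise on this event, which proves the first claim. The step that needs the most care is making sure $V$ is non-random with respect to $\mathbb P_\phi$, so that the quantile comparison is legitimate, and that only the on-policy rows matter, so the $S$ events are genuinely independent. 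Both hold because $V^\pi_{\phi,\bar\alpha}$ is a deterministic functional of $\phi$ and $\pi$ is deterministic.

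For the consequence, fix $s$. The first part gives $\mathbb P_\phi\bigl(V^\pi_P(s)\ge V^\pi_{\phi,\bar\alpha}(s)\bigr)\ge 1-\alpha$. So $c=V^\pi_{\phi,\bar\alpha}(s)$ belongs to the set in \eqref{eq:posterior-lb}, and hence $V^{\pi,\mathrm q}_{\phi,\alpha}(s)\ge V^\pi_{\phi,\bar\alpha}(s)$ by the definition of the supremum. The continuity assumption is needed only to identify this quantity with $\rho^\alpha_\phi(V^\pi_P(s))$, as remarked before the proposition.
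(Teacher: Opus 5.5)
Your proposal is correct and follows the paper's proof essentially step by step: the per-state left-quantile bound, independence of the on-policy posterior rows giving $(1-\bar\alpha)^S=1-\alpha$, the Bellman fixed-point inequality $V^\pi_{\phi,\bar\alpha}\le\mathcal T^\pi_P V^\pi_{\phi,\bar\alpha}$, monotone iteration to $V^\pi_P$, and finally the definition of the supremum in \eqref{eq:posterior-lb}. Your explicit justification that $\mathbb P_\phi(X_s<q_s)\le\bar\alpha$, and your remark that the continuity assumption is used only to identify $V^{\pi,\mathrm q}_{\phi,\alpha}(s)$ with $\rho^\alpha_\phi(V^\pi_P(s))$, are accurate refinements of that same argument.
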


The proof of Proposition~\ref{prop:hp-lb-no-subst} is in Appendix~\ref{app:hp-lb-no-subst}.
As an immediate consequence, for any fixed state $s$, 
{$\max_{\pi} V^{\pi,\mathrm q}_{\phi,\alpha}(s)
\ge
\max_{\pi} V^\pi_{\phi,\bar\alpha}(s).$}
Thus, maximizing the $\bar\alpha$-quantile value function gives a conservative surrogate for maximizing the {posterior $\alpha$-quantile value}. 
{Proposition}~\ref{prop:hp-lb-no-subst}, together with \eqref{eq:posterior-lb}, shows that, for small $\alpha$, $\alpha$-quantile BR-MDP admits a finite-sample robustness interpretation: the posterior $\alpha$-quantile value $V^{\pi,\mathrm q}_{\phi,\alpha}$ provides a posterior high-probability ($1-\alpha$) performance guarantee, and the nested $\bar\alpha$-quantile BR-MDP value $V^\pi_{\phi,\bar\alpha}$ is a tractable lower bound for this robustness guarantee.

\section{Online RL via Adaptive Quantile Scheduling}
\label{sec:online-adaptive-quantile-scheduling}
Section~\ref{sec:formulation} showed that the quantile level in $\alpha$-quantile BR-MDP provides a flexible way to handle epistemic uncertainty captured by the posterior distribution for different purposes, namely robustness and exploration. 
We now turn to the online setting, where this flexibility becomes algorithmically useful for adapting to the evolving robustness--exploration trade-off over the course of learning. To illustrate such a trade-off, two examples that separately demonstrate the robustness and exploration sides of this trade-off are presented in Section~\ref{sec:robustness-exploration-tension}.
%
Section~\ref{sec:alpha} introduces an adaptive quantile schedule that tracks this evolving trade-off by varying the quantile level across learning stages and across state-action pairs.
Then, Section~\ref{sec:episode} embeds this mechanism into a pseudo-episode scheme for discounted infinite-horizon problems, enabling a fully online learning algorithm. Specifically, at the start of each pseudo-episode, we update the posterior, compute the quantile level, solve the corresponding $\alpha$-quantile BR-MDP, and then execute the resulting policy to collect more data until the next update. 

\subsection{Adaptive Quantile Scheduling}
\label{sec:alpha}

To track the evolving robustness--exploration trade-off identified in Section~\ref{sec:robustness-exploration-tension}, we let the quantile level vary across pseudo-episodes and across state-action pairs.

We index the pseudo-episodes by $k=1,2,\ldots$. At the beginning of the $k$th pseudo-episode, for each $(s,a)\in\mathcal S\times\mathcal A$, let $N_k(s,a,s')$ denote the number of transitions from $(s,a)$ to $s'$, and let $N_k(s,a):=\sum_{s'\in\mathcal{S}}N_k(s,a,s')$ denote the total number of visits to $(s,a)$. 
Under the prior $\Dir(1,\ldots,1)$, for each $(s,a)$, the posterior distribution of $P_{s,a}$ at pseudo-episode $k$ is denoted by $\Dir(\phi_k(s,a))$ and the posterior parameters are updated as follows:
\begin{align}
\phi_k(s,a,s')
&=1+N_k(s,a,s'),\quad
\phi_k(s,a)
=\bigl(\phi_k(s,a,s')\bigr)_{s'\in\mathcal S}.
\label{eq:update_phi_k}
\end{align}
Let $\phi_k=\{\phi_k(s,a)\}_{(s,a)\in\mathcal{S}\times\mathcal{A}}$  denote the collection of posterior parameters at the beginning of the $k$th pseudo-episode.
{To construct the adaptive quantile schedule, we first define the adjusted visit count and its average by $N_k^+(s,a):=\max\{N_k(s,a),1\}$, and $\bar N_k^+:=\frac{1}{SA}\sum_{(s,a)\in\mathcal S\times\mathcal A} N_k^+(s,a).$
The relative visit count of $(s,a)$ is then defined as
    $$r_k(s,a):=\frac{N_k^+(s,a)}{\bar N_k^+},~ (s,a)\in\mathcal S\times\mathcal A,
    $$ 
which compares the number of visits to $(s,a)$ with the average adjusted visit count over the
state-action space. We also define the scaling factor:
    $$
    g_k:=\frac{\ln(2k)}{\sqrt{k}},
    $$
whose scaling form is chosen in accordance with the regret guarantee established in Section~\ref{sec:regret}.}

Given a problem-specified robustness target parameter $\underline{\alpha}\in(0,1)$, which encodes the degree of lower-tail robustness that the decision maker deems sufficient when acting under epistemic uncertainty, and an algorithmic sensitivity parameter $\delta>0$, we define the adaptive quantile schedule as follows:
\begin{align}
\alpha_k(s,a)
&=
\max\left\{
1-\delta\,r_k(s,a)\,g_k,\ \underline{\alpha}
\right\}.
\label{eq:alphak}
\end{align}
{Before explaining the intuition behind the schedule \eqref{eq:alphak}, we extend the formulation of the $\alpha$-quantile BR-MDP with a common quantile level $\alpha$ across all state-action pairs to allow state-action-dependent quantile levels. Specifically, let $\alpha:\mathcal S\times\mathcal A\to(0,1)$ and replace $\rho^\alpha_{\phi(s_t,a_t)}$ in \eqref{eq:nested V} with $\rho^{\alpha(s_t,a_t)}_{\phi(s_t,a_t)}$. With a slight abuse of notation, we write $\alpha=(\alpha(s,a))_{(s,a)\in\mathcal S\times\mathcal A}$ and use $\rho^\alpha_{\phi(s,a)}$ as shorthand for $\rho^{\alpha(s,a)}_{\phi(s,a)}$. With this notation, the Bellman equations \eqref{eq:VR-pi}--\eqref{eq:VR-opt} and the Bellman operators \eqref{eq:TR-pi}--\eqref{eq:TR} retain exactly the same form in the state-action-dependent case.
Moreover, under the same structural conditions on the risk measure as in Section~\ref{sec:formulation}, the corresponding Bellman operators remain $\gamma$-contractions under $\|\cdot\|_\infty$ and therefore admit unique fixed points.}

The factor $g_k$ in \eqref{eq:alphak} depends only on $k$ and controls how far the quantile levels are pushed toward the lower tail across different pseudo-episodes. Early in learning, before visit counts become highly imbalanced across state-action pairs, $r_k(s,a)$ typically does not vary much across state-action pairs, 
whereas the common factor $g_k$ is relatively large in the early pseudo-episodes. Accordingly, the quantile levels $\alpha_k(s,a)$ remain in the lower-tail regime, so the Bellman backup of the corresponding quantile BR-MDP focuses on lower-tail performance of the EV of successor states and therefore yields a more robust policy as discussed in Section~\ref{sec:formulation}. 

This robustness is not uniform across state--action pairs. The factor $r_k(s,a)$ adjusts the quantile level according to relative visit counts. Before the floor $\underline{\alpha}$ becomes binding, a less-visited $(s,a)$ pair has a smaller $r_k(s,a)$ and thus a larger $\alpha_k(s,a)$. Consequently, in the Bellman backup for such $(s,a)$, the EV of successor states is evaluated at a less extreme lower quantile level.
This effect is not confined to the local state--action pair but propagates backward to states from which this less-visited pair can be reached through repeated Bellman backups. 

As $k$ increases, the common factor $g_k$ decreases and visit counts $N_k(s,a)$ can become imbalanced across $(s,a)$. Consequently, $r_k(s,a)g_k$ may become sufficiently small for less-visited $(s,a)$ pairs, causing the schedule to move the quantile level $\alpha_k(s,a)$ toward the upper tail and thereby encouraging exploration of these less-visited pairs, as discussed in Section~\ref{sec:formulation}. Moreover, in \eqref{eq:alphak}, $\delta$ controls the sensitivity of the schedule to relative visit counts.

\subsection{Pseudo-Episodes and Algorithm}
\label{sec:episode}
In finite-horizon MDPs, it is natural to update the posterior and recompute the policy at the beginning of each episode~\citep{osband2013psrl,osband2016without}. In the discounted infinite-horizon setting considered here, however, no natural episodes are available. Updating the posterior and recomputing the policy after every transition would be computationally demanding, whereas long deterministic artificial episodes may delay the use of newly collected data. We therefore partition the interaction stream into pseudo-episodes, as in \citet{xu2024continuing}. 

For each $t\ge 1$, after observing the transition at time $t$, we draw an independent pseudo-episode indicator $X_{t+1}\sim \mathrm{Bernoulli}(\gamma)$, and interpret $X_{t+1}=1$ as continuing the current pseudo-episode and $X_{t+1}=0$ as starting a new pseudo-episode at time $t+1$. We initialize $X_1=0$, so that time $1$ starts the first pseudo-episode. 
Then, the pseudo-episode length $L$ follows
a Geometric distribution with parameter $1-\gamma$ on $\{1,2,\ldots\}$, denoted by
$L\sim \mathrm{Geom}(1-\gamma)$, and is independent of the trajectory. With such a random length, the expectation of the accumulated reward with respect to $L$ is equal to the discounted accumulated reward, i.e. for any realized reward sequence
$\{r(s_i,a_i)\}_{i\ge0}$ indexed from the start of the pseudo-episode,
$$\mathbb E_{L\sim\mathrm{Geom}(1-\gamma)}
\left[\sum_{i=0}^{L-1} r(s_i,a_i)\right]
=
\sum_{i=0}^{\infty}\mathbb P(L\ge i+1)r(s_i,a_i)
=
\sum_{i=0}^{\infty}\gamma^i r(s_i,a_i).$$

Let $K_T$ denote the number of pseudo-episodes started by time $T$, and hence $\mathbb E[K_T]=1+(T-1)(1-\gamma)$. 
Hence, the expected number of pseudo-episodes, which is equivalent to the number of posterior updates and policy recomputations, grows on the order of $(1-\gamma)T$ rather than $T$.

Let $\mathcal H_t:=\{(s_\tau,a_\tau,s_{\tau+1})\}_{\tau=1}^{t-1}$ denote the interaction history available before time $t$. 
By incorporating the adaptive quantile schedule with the pseudo-episodic posterior updates, we obtain the Adaptive Quantile BR-MDP (AQ-BRMDP), summarized in Algorithm~\ref{alg:AQ-BRMDP}.
\begin{algorithm}[htbp]
\caption{AQ-BRMDP}
\label{alg:AQ-BRMDP}
\begin{algorithmic}[1]
\State \textbf{Input:} Discount factor $\gamma$, total learning time $T$, schedule parameters $\delta$ and $\underline{\alpha}$
\State Initialize $t\gets 1$, $k\gets 0$, $X_1\gets 0$, and $\mathcal H_1\gets\emptyset$
\While{$t\le T$}
    \If{$X_t=0$}
        \State $k\gets k+1$ and $t_k\gets t$
        \State Update the posterior $\phi_k$ using $\mathcal H_{t_k}$ according to \eqref{eq:update_phi_k}
        \State Compute $\alpha_k(s,a)$ for all $(s,a)\in\mathcal S\times\mathcal A$ via \eqref{eq:alphak}
        \State Compute $\pi_k$ by solving the $\alpha_k$-quantile BR-MDP under $\phi_k$ via value iteration (Algorithm~\ref{alg:brmdp-vi-k} in Appendix~\ref{sec:value-iteration-exact})
    \EndIf
    \State Take action $a_t\gets \pi_k(s_t)$ and observe the next state $s_{t+1}$
    \State Update the history $\mathcal H_{t+1}\gets \mathcal H_t\cup\{(s_t,a_t,s_{t+1})\}$
    \If{$t<T$}
        \State Sample $X_{t+1}\sim \mathrm{Bernoulli}(\gamma)$
    \EndIf
    \State $t\gets t+1$
\EndWhile
\end{algorithmic}
\end{algorithm}
\section{Regret Analysis}
\label{sec:regret}

Section~\ref{sec:online-adaptive-quantile-scheduling} introduced the adaptive quantile
schedule and its pseudo-episodic implementation to account for the evolving robustness--exploration trade-off in online learning. We now establish performance guarantees for AQ-BRMDP.
We analyze Bayesian regret relative to two complementary benchmarks.
The first benchmark is the true-optimal value $V^*$, which measures regret relative to the optimal value function in the true environment.
The second benchmark is the robust-optimal value in the $k$th pseudo-episode $V^*_{\phi_k,\underline{\alpha}}$, which measures regret relative to the optimal value function of the $\underline{\alpha}$-quantile BR-MDP under posterior parameters $\phi_k$.
In implementation, the quantiles are approximated by sampling transition kernels from the posterior distribution, and the corresponding quantile BR-MDP is solved in the simulator. Compared with the cost of interacting with the true environment, the computational cost of posterior sampling and solving the quantile BR-MDP is treated as negligible. Accordingly, we assume that the policy computed in each pseudo-episode can be made arbitrarily close to the exact optimal policy of the corresponding quantile BR-MDP, and we ignore the resulting approximation error in the regret analysis.


Throughout this section, $s_{k,i}$ denotes the state at the $i$th time step of pseudo-episode $k$, $\pi_k=\pi^*_{\phi_k,\alpha_k}$ is the policy executed in pseudo-episode $k$, $V_k:=V^*_{\phi_k,\alpha_k}$ denotes the optimal value of the corresponding adaptive $\alpha_k$-quantile BR-MDP, and \(L_k\) denotes the realized length of pseudo-episode \(k\) within the interaction horizon \(T\). Bayesian regret (BR) relative
to the true-optimal benchmark is defined by
\begin{align}
\label{eq:BRT}
BR(T)
:=
\mathbb E\!\left[
\sum_{k=1}^{K_T}\sum_{i=1}^{L_k}
\Bigl(
V^*(s_{k,i})-V^{\pi_k}(s_{k,i})
\Bigr)
\right],
\end{align}
where the expectation is taken over the prior distribution of the true transition kernel,
the trajectory randomness, and the randomness of the pseudo-episode lengths.
To quantify whether exploration incurs substantial loss relative to a conservative benchmark,
we also define Bayesian regret relative to the robust-optimal benchmark (BR-R) as follows:
\begin{align}
\label{eq:BRRT}
BR\text{-}R(T)\!
:=\!
\mathbb E\!\left[
\sum_{k=1}^{K_T}\sum_{i=1}^{L_k}
\Bigl(
V^*_{\phi_k,\underline{\alpha}}(s_{k,i})-V^{\pi_k}_{\phi_k,\underline{\alpha}}(s_{k,i})
\Bigr)
\right],
\end{align}
where the expectation is taken over the prior distribution of the true transition kernel,
the trajectory randomness, and the randomness of the pseudo-episode lengths. Let \(\mathcal F_t\) denote the sigma-field generated by the interaction history and the pseudo-episode indicators observed before time \(t\): $\mathcal F_t
:=
\sigma\!\left((s_\tau,a_\tau,s_{\tau+1})_{\tau=1}^{t-1},\,X_1,\ldots,X_t\right).$


\begin{lemma}[Dual-benchmark optimism]
\label[lemma]{lem:bayes_optimism}
Fix pseudo-episode $k$ with starting time $t_k$. Then:
\begin{enumerate}[label=(\roman*)]
    \item $\mathbb P\!\left(
    \forall s\in\mathcal S:\ V^*(s)\le V_k(s)
    \,\middle|\,
     \mathcal F_{t_k}
    \right)
    \ge
    1-\frac{\delta SA\ln(2k)}{\sqrt{k}}.$
    \item $V^*_{\phi_k,\underline{\alpha}}(s)\le V_k(s),
    \qquad \forall s\in\mathcal S.$
\end{enumerate}
\end{lemma}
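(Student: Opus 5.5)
Part (ii) follows from monotonicity of the quantile Bellman operator in the quantile level; part (i) follows from a Bayesian (posterior-sampling) argument.

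\textbf{Part (ii).} By \eqref{eq:alphak}, $\alpha_k(s,a)\ge\underline{\alpha}$ for every $(s,a)$. For any fixed $V$, the map $\alpha\mapsto\rho^\alpha_{\phi_k(s,a)}(P^\top V)$ is nondecreasing, since the quantile function is nondecreasing in its level. Hence
\[
\mathcal T^*_{\phi_k,\underline{\alpha}}V\le \mathcal T^*_{\phi_k,\alpha_k}V
\quad\text{pointwise, for every } V.
\]
Each operator is also monotone in $V$: if $V\le V'$ then $P^\top V\le P^\top V'$ almost surely, and the quantile is monotone.

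Start value iteration from $V^*_{\phi_k,\underline{\alpha}}$. Monotonicity of $\mathcal T^*_{\phi_k,\alpha_k}$ and induction give
\[
V^*_{\phi_k,\underline{\alpha}}
=\mathcal T^*_{\phi_k,\underline{\alpha}}V^*_{\phi_k,\underline{\alpha}}
\le \mathcal T^*_{\phi_k,\alpha_k}V^*_{\phi_k,\underline{\alpha}}
\le (\mathcal T^*_{\phi_k,\alpha_k})^nV^*_{\phi_k,\underline{\alpha}}.
\]
By the $\gamma$-contraction, the right-hand side converges to $V_k$ as $n\to\infty$, which gives (ii).

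\textbf{Part (i).} Conditional on $\mathcal F_{t_k}$, the true kernel $P^c$ is distributed as the posterior $\Dir(\phi_k)$, with independent components across $(s,a)$. This holds because the prior is the Dirichlet prior and the data-collection policy and the $X_t$'s are $\mathcal F$-adapted, so they do not alter the likelihood. The comparison target $V^*$ is itself random, being a function of $P^c$, so I would avoid applying quantiles to it directly.

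Instead, define the event
\[
E:=\Bigl\{(P^c_{s,a})^\top V_k\le \rho^{\alpha_k(s,a)}_{\phi_k(s,a)}\bigl(P^\top V_k\bigr)\ \ \forall (s,a)\Bigr\}.
\]
The function $V_k$ is $\mathcal F_{t_k}$-measurable, being determined by $\phi_k$ and $\alpha_k$. So for each $(s,a)$, conditional on $\mathcal F_{t_k}$, $(P^c_{s,a})^\top V_k$ has exactly the law of $P^\top V_k$ with $P\sim\Dir(\phi_k(s,a))$. By the definition of the $\alpha$-quantile (right-continuity of the cdf),
\[
\mathbb P\bigl((P^c_{s,a})^\top V_k>\rho^{\alpha_k(s,a)}\,\big|\,\mathcal F_{t_k}\bigr)\le 1-\alpha_k(s,a)\le \delta\, r_k(s,a)\,g_k .
\]
A union bound over $(s,a)$ then gives
\[
\mathbb P(E^c\mid\mathcal F_{t_k})\le \delta g_k\sum_{s,a}r_k(s,a)=\delta g_k SA,
\]
because $\sum_{s,a}r_k(s,a)=SA$ by the definition of $\bar N_k^+$. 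Since $g_k=\ln(2k)/\sqrt k$, this is exactly the stated bound.

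\textbf{From $E$ to optimism.} On $E$, for every $s$,
\[
V_k(s)=\max_a\bigl\{r(s,a)+\gamma\rho^{\alpha_k}_{\phi_k(s,a)}(P^\top V_k)\bigr\}
\ge \max_a\bigl\{r(s,a)+\gamma (P^c_{s,a})^\top V_k\bigr\}
=(\mathcal T^* V_k)(s),
\]
where $\mathcal T^*$ is the true Bellman optimality operator. Since $\mathcal T^*$ is monotone and a contraction, iterating $V_k\ge\mathcal T^*V_k\ge(\mathcal T^*)^nV_k\to V^*$ yields $V^*\le V_k$ on $E$.

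\textbf{Main obstacle.} The delicate step is justifying that $P^c\mid\mathcal F_{t_k}$ has exactly the product Dirichlet posterior, despite the adaptive, history-dependent policies and the random pseudo-episode indicators. I would argue this through the standard likelihood factorization: action choices and the $X_t$'s depend only on past data and independent randomness, so they contribute factors free of $P^c$.
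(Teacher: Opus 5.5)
Your proof is correct and follows essentially the same route as the paper. Part (i) uses the same per-pair quantile event and the same union bound with $\sum_{s,a} r_k(s,a)=SA$, and part (ii) uses the identical monotone-iteration and contraction argument; the only cosmetic differences are that you deduce $V^*\le V_k$ on the good event via $V_k\ge\mathcal T^*V_k$ and iteration of the true Bellman operator (the paper uses the equivalent bound $\Delta_{\max}\le\gamma\Delta_{\max}$), and your likelihood-factorization remark justifies the posterior identity under adaptive data collection, which the paper simply asserts.
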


We defer the proof of Lemma~\ref{lem:bayes_optimism} to Appendix~\ref{app:bayes_optimism}.
Part~(i) in Lemma~\ref{lem:bayes_optimism} shows that the true optimal value function $V^*$ can be upper bounded by $V_k$ with posterior probability tending to 1 as learning progresses (i.e., as $k$ increases).
Part~(ii) in Lemma~\ref{lem:bayes_optimism} shows that the robust-optimal value function $V^*_{\phi_k,\underline{\alpha}}$ can also be upper bounded by $V_k$ for every state $s$.

Define the optimistic event $\mathcal G_k:=\cap_{(s,a)\in\mathcal{S}\times\mathcal{A}}
\left\{
(P^c_{s,a})^\top V_k
\le
\rho^{\alpha_k}_{\phi_k(s,a)}(P^\top V_k)
\right\}$, which is the event that the BR-MDP EV overestimates the true EV. We show that on $\mathcal G_k$, $V^*(s)\le V_k(s)$ for all $s\in\mathcal S$ in Appendix~\ref{app:bayes_optimism}. Hence, we can decompose the upper bound for $BR(T)$ in \eqref{eq:BRT} before taking expectations as follows:
\begin{align}
&\sum_{k=1}^{K_T}\sum_{i=1}^{L_k}
\Bigl(
V^*(s_{k,i})-V^{\pi_k}(s_{k,i})
\Bigr)
\le
\sum_{k=1}^{K_T}\sum_{i=1}^{L_k}
\Bigl(
V_k(s_{k,i})-V^{\pi_k}(s_{k,i})
\Bigr)
+
\frac{1}{1-\gamma}
\sum_{k=1}^{K_T} L_k\,\mathbb I\{\mathcal G_k^c\}.
\label{eq:decomp-good-bad-rigorous-sa}
\end{align}
Using Lemma~\ref{lem:bayes_optimism} (ii), we can decompose the corresponding upper bound for $BR\text{-}R(T)$ in \eqref{eq:BRRT} as follows:
\begin{align}
&\sum_{k=1}^{K_T}\sum_{i=1}^{L_k}
\Bigl(
V_{\phi_k,\underline{\alpha}}^*(s_{k,i})
-V_{\phi_k,\underline{\alpha}}^{\pi_k}(s_{k,i})
\Bigr)
\le
\sum_{k=1}^{K_T}\sum_{i=1}^{L_k}
\Bigl(
V_k(s_{k,i})
-V_{\phi_k,\underline{\alpha}}^{\pi_k}(s_{k,i})
\Bigr).
\label{eq:decomp-BR-R}
\end{align}

Therefore, the main remaining terms to control are the value-difference terms in \eqref{eq:decomp-good-bad-rigorous-sa} and \eqref{eq:decomp-BR-R}, together with the non-optimistic-event term in \eqref{eq:decomp-good-bad-rigorous-sa}. The next lemma shows that the gap between the value of a fixed policy in the true environment $V^\pi$ and the value of $\alpha_k$-quantile BR-MDP under the posterior distribution with parameters $\phi_k$, $V^\pi_{\phi_k,\alpha_k}$, can be expressed as the cumulative one-step discrepancy between the $\alpha_k$-quantile BR-MDP Bellman backup and the corresponding Bellman backup under the true transition along the realized trajectory.

\begin{lemma}[Pseudo-episode value decomposition]
\label[lemma]{lem:value-decomposition-sum}
Fix pseudo-episode $k$ with starting time $t_k$, and fix a stationary deterministic policy
$\pi$. Let $\{(s_{k,i},a_{k,i})\}_{i\ge 1}$ be the infinite-horizon trajectory generated by
following $\pi$ from time $t_k$ in the true MDP, so that
$a_{k,i}:=\pi(s_{k,i})$ for all $i\ge 1$.
Define
$
E_{k,i}
:=
\rho^{\alpha_k}_{\phi_k(s_{k,i},a_{k,i})}
\!\bigl(P^\top V^\pi_{\phi_k,\alpha_k}\bigr)
-
(P^c_{s_{k,i},a_{k,i}})^\top V^\pi_{\phi_k,\alpha_k},$ $
 i\ge 1.
$

Then
\begin{align}
&\mathbb E\!\left[
\sum_{i=1}^{L_k}
\Bigl(
V^\pi_{\phi_k,\alpha_k}(s_{k,i})-V^\pi(s_{k,i})
\Bigr)
\,\middle|\,
\mathcal F_{t_k},\,P^c
\right]
=
\mathbb E\!\left[
\sum_{i=1}^{L_k} i\,\gamma\, E_{k,i}
\,\middle|\,
\mathcal F_{t_k},\,P^c
\right].
\label{eq:agg-vd}
\end{align}
Here the conditional expectation is taken over the trajectory randomness under $\pi$ in the
true MDP and the independent pseudo-episode lengths $\{L_k:k\ge 1\}$.
\end{lemma}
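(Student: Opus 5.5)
Throughout, write $W:=V^\pi_{\phi_k,\alpha_k}$ and $D:=W-V^\pi$. Conditional on $\mathcal F_{t_k}$ and $P^c$, the quantities $\phi_k$, $\alpha_k$, $W$ and $V^\pi$ are deterministic. The trajectory $\{s_{k,i}\}$ is a Markov chain with kernel $P^c_\pi$, and $L_k$ is independent of it.

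\textbf{One-step recursion for $D$.} Subtracting the true Bellman equation \eqref{eq:V-true-pi} from the quantile BR-MDP Bellman equation \eqref{eq:VR-pi} (in its state-action-dependent form) gives, for every $s$,
\begin{align*}
D(s)&=\gamma\Bigl(\rho^{\alpha_k}_{\phi_k(s,\pi(s))}\bigl(P^\top W\bigr)-(P^c_{s,\pi(s)})^\top W\Bigr)+\gamma (P^c_{s,\pi(s)})^\top D\\
&=:\gamma e(s)+\gamma (P^c_{s,\pi(s)})^\top D .
\end{align*}
Along the trajectory $E_{k,i}=e(s_{k,i})$. Conditioning on $s_{k,i}$, we have $\mathbb E[(P^c_{s_{k,i},a_{k,i}})^\top D\mid s_{k,i}]=\mathbb E[D(s_{k,i+1})\mid s_{k,i}]$. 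Iterating, and using that $D$ is bounded (both value functions lie in $[0,1/(1-\gamma)]$) so the tail vanishes, yields
$$
\mathbb E[D(s_{k,i})\mid\cdot]=\mathbb E\Bigl[\sum_{j\ge i}\gamma^{j-i+1}E_{k,j}\,\Big|\,\cdot\Bigr].
$$

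\textbf{Averaging over the geometric length.} Because $L_k$ is independent of the trajectory, with $\mathbb P(L_k\ge i)=\gamma^{i-1}$,
$$
\mathbb E\Bigl[\sum_{i=1}^{L_k}D(s_{k,i})\Bigr]=\sum_{i\ge1}\gamma^{i-1}\,\mathbb E[D(s_{k,i})]=\sum_{i\ge1}\sum_{j\ge i}\gamma^{i-1}\gamma^{j-i+1}\,\mathbb E[E_{k,j}].
$$
The summand equals $\gamma\cdot\gamma^{j-1}\mathbb E[E_{k,j}]$, which does not depend on $i$. Swapping the order of summation (Fubini applies since $|E_{k,j}|\le 1/(1-\gamma)$ and $\sum_j j\gamma^{j}<\infty$), each $j$ appears $j$ times, so the double sum equals
$$
\sum_{j\ge1} j\,\gamma\,\gamma^{j-1}\,\mathbb E[E_{k,j}]=\mathbb E\Bigl[\sum_{j=1}^{L_k} j\,\gamma\,E_{k,j}\Bigr],
$$
where the last step again uses $\mathbb P(L_k\ge j)=\gamma^{j-1}$ and independence. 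This is \eqref{eq:agg-vd}.

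\textbf{Main point of care.} The delicate step is the bookkeeping of conditioning. One must check that $L_k$ is independent of the trajectory given $\mathcal F_{t_k}$; this holds because the indicators $X_t$ are drawn independently of everything else. One must also check that $W$ is fixed given $\mathcal F_{t_k}$, so that the tower-property step along the Markov chain is legitimate. A related subtlety is that the lemma uses the infinite-horizon trajectory, so truncation at $T$ plays no role here.
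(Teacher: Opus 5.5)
Your proposal is correct and follows the paper's proof essentially step for step. Both arguments take the Bellman-difference recursion $D=\gamma e+\gamma P^c_\pi D$ and iterate it, using boundedness, into $\mathbb E[D(s_{k,i})]=\sum_{h\ge 0}\gamma^{h+1}\mathbb E[E_{k,i+h}]$. Both then average against $\mathbb P(L_k\ge i)=\gamma^{i-1}$ and use a Fubini swap to collapse the double sum to $\sum_{j\ge1} j\gamma^{j}\,\mathbb E[E_{k,j}]$, which matches the right-hand side computed the same way.
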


The proof of Lemma~\ref{lem:value-decomposition-sum} is deferred to Appendix~\ref{app:value-decomposition-sum}.
For the robust regret, the following lemma provides an analogous decomposition between the $\alpha_k$-quantile BR-MDP value function and the $\underline{\alpha}$-quantile BR-MDP value function.

\begin{lemma}
\label[lemma]{lemma:value-decomposition-sum-robust}
Fix pseudo-episode $k$ with starting time $t_k$, and fix a stationary deterministic policy
$\pi$. Let $\{(s_{k,i},a_{k,i})\}_{i\ge 1}$ be the infinite-horizon trajectory generated by
following $\pi$ from time $t_k$ in the true environment, so that
$a_{k,i}:=\pi(s_{k,i})$ for all $i\ge 1$.
Define
$
E^-_{k,i}
:=
(P^c_{s_{k,i},a_{k,i}})^\top V^\pi_{\phi_k,\underline{\alpha}}
-
\rho^{\underline{\alpha}}_{\phi_k(s_{k,i},a_{k,i})}
\!\bigl(P^\top V^\pi_{\phi_k,\underline{\alpha}}\bigr),
$
and
$
E_{k,i}
:=
\rho^{\alpha_k}_{\phi_k(s_{k,i},a_{k,i})}
\!\bigl(P^\top V^\pi_{\phi_k,\alpha_k}\bigr)
-
(P^c_{s_{k,i},a_{k,i}})^\top V^\pi_{\phi_k,\alpha_k}.
$
Then
\begin{align}
&\mathbb E\!\left[
\sum_{i=1}^{L_k}
\Bigl(
V^\pi_{\phi_k,\alpha_k}(s_{k,i})-V^\pi_{\phi_k,\underline{\alpha}}(s_{k,i})
\Bigr)
\,\middle|\,
\mathcal F_{t_k},\,P^c
\right]
=
\mathbb E\!\left[
\sum_{i=1}^{L_k}i\,\gamma\,
\bigl(E_{k,i}+E^-_{k,i}\bigr)
\,\middle|\,
\mathcal F_{t_k},\,P^c
\right].
\label{eq:agg-vd-floor}
\end{align}
Here the conditional expectation is taken over the trajectory randomness under $\pi$ in the
true environment and the independent pseudo-episode lengths $\{L_k:k\ge 1\}$.
\end{lemma}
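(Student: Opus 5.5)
The plan is to treat Lemma~\ref{lemma:value-decomposition-sum-robust} as a corollary of Lemma~\ref{lem:value-decomposition-sum}, applied twice. We split
\[
V^\pi_{\phi_k,\alpha_k}-V^\pi_{\phi_k,\underline{\alpha}}
=\bigl(V^\pi_{\phi_k,\alpha_k}-V^\pi\bigr)-\bigl(V^\pi_{\phi_k,\underline{\alpha}}-V^\pi\bigr)
\]
and take the conditional expectation of $\sum_{i=1}^{L_k}$ of each piece. The sum is finite almost surely. The values lie in $[0,(1-\gamma)^{-1}]$, because rewards are in $[0,1]$ and quantiles of $P^\top V$ lie in $[\min V,\max V]$. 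Since $\mathbb E[L_k]<\infty$, linearity of conditional expectation applies, and no truncation by $T$ needs to be handled beyond what the earlier lemma already does.

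For the first piece, Lemma~\ref{lem:value-decomposition-sum} applies verbatim with the state-action-dependent level $\alpha_k$. It gives $\mathbb E[\sum_{i\le L_k} i\gamma E_{k,i}\mid\mathcal F_{t_k},P^c]$.

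For the second piece, I rerun the same lemma with $\alpha_k$ replaced by the constant level $\underline{\alpha}$. Its proof only uses three facts:
\begin{itemize}
\item the Bellman equation~\eqref{eq:VR-pi} for a fixed posterior $\phi_k$ and a fixed (possibly state-action-dependent) level;
\item the true Bellman equation~\eqref{eq:V-true-pi};
\item the fact that $L_k$ is independent of the trajectory with $\mathbb P(L_k\ge i)=\gamma^{i-1}$.
\end{itemize}
None of these depends on how the level is chosen, as long as the level is $\mathcal F_{t_k}$-measurable, and $\underline{\alpha}$ is deterministic. The result is
\[
\mathbb E\Bigl[\sum_{i\le L_k}\bigl(V^\pi_{\phi_k,\underline{\alpha}}-V^\pi\bigr)(s_{k,i})\Bigm|\mathcal F_{t_k},P^c\Bigr]
=\mathbb E\Bigl[\sum_{i\le L_k} i\gamma\,\tilde E_{k,i}\Bigm|\mathcal F_{t_k},P^c\Bigr],
\]
where $\tilde E_{k,i}=\rho^{\underline{\alpha}}_{\phi_k(s_{k,i},a_{k,i})}(P^\top V^\pi_{\phi_k,\underline{\alpha}})-(P^c_{s_{k,i},a_{k,i}})^\top V^\pi_{\phi_k,\underline{\alpha}}=-E^-_{k,i}$. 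Subtracting the two identities gives exactly \eqref{eq:agg-vd-floor}.

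If Lemma~\ref{lem:value-decomposition-sum} cannot simply be cited with a different level, I would reprove the mechanism directly. Let $D:=V^\pi_{\phi_k,\alpha_k}-V^\pi_{\phi_k,\underline{\alpha}}$. Subtracting the two Bellman equations, and adding and subtracting $\gamma (P^c_{s,\pi(s)})^\top$ applied to both value functions, gives
\[
D(s)=\gamma\bigl(E(s)+E^-(s)\bigr)+\gamma (P^c_{s,\pi(s)})^\top D .
\]
Here $E(s)$ and $E^-(s)$ are the state versions of $E_{k,i}$ and $E^-_{k,i}$; all of these are deterministic given $(\mathcal F_{t_k},P^c)$ since $\alpha_k$ is $\mathcal F_{t_k}$-measurable. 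Unrolling along the true trajectory by the tower property gives
\[
D(s_{k,i})=\sum_{j\ge i}\gamma^{j-i}\,\gamma\,\mathbb E\bigl[E(s_{k,j})+E^-(s_{k,j})\mid s_{k,i}\bigr].
\]
The remainder $\gamma^n\mathbb E[D(s_{k,i+n})]$ vanishes because $D$ is bounded.

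The step where care is needed is the summation over the random length. We use independence of $L_k$ from the trajectory, write $\sum_{i\le L_k}$ as $\sum_i \mathbb I\{L_k\ge i\}$ with $\mathbb P(L_k\ge i)=\gamma^{i-1}$, and swap the order of summation. Each $j$ then collects the weight
\[
\sum_{i\le j}\gamma^{i-1}\gamma^{j-i}=j\gamma^{j-1}=j\,\mathbb P(L_k\ge j).
\]
Converting back gives $\mathbb E[\sum_{j\le L_k} j\gamma(E_{k,j}+E^-_{k,j})]$. Fubini is justified by the boundedness of the summands together with $\sum_j j\gamma^{j}<\infty$.
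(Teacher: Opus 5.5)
Your proposal is correct and follows the paper's proof: apply Lemma~\ref{lem:value-decomposition-sum} with level $\alpha_k$, rerun the same argument with the constant level $\underline{\alpha}$ (whose one-step discrepancy is $-E^-_{k,i}$), and subtract the two identities. Your fallback argument, which starts from the recursion $D(s)=\gamma\bigl(E(s)+E^-(s)\bigr)+\gamma (P^c_{s,\pi(s)})^\top D$ and uses the weight computation $j\gamma\,\mathbb P(L_k\ge j)$, is also correct; it is the same mechanism applied directly to the difference $D$.
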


Using \cref{lem:bayes_optimism,lem:value-decomposition-sum,lemma:value-decomposition-sum-robust} together with Dirichlet-posterior concentration in \cref{lem:dirichlet_quantile_BR-MDP}, we upper bound the Bayesian regret of AQ-BRMDP under both benchmarks.

\begin{theorem}
\label{thm:main-order}
Suppose that the schedule parameter $\delta$ in \eqref{eq:alphak} satisfies $\delta>0$.
If
$T\ge S^2A$, then
$$
    BR(T)\le
    \widetilde O\!\left(
    \frac{\gamma\sqrt{SA\,T\ln\frac{e}{1-\underline{\alpha}}}}{(1-\gamma)^2}
    +
    \frac{\delta\,SA\,\sqrt{T}}{(1-\gamma)^2}
    +
    \frac{SA}{(1-\gamma)^3}
    \right),
$$
and
$$
    BR\text{-}R(T)
    \le
    \widetilde O\!\left(
    \frac{\gamma\sqrt{SA\,T\ln\frac{1}{\min\{1-\underline{\alpha},\underline{\alpha}\}}}}{(1-\gamma)^2}
    +
    \frac{SA}{(1-\gamma)^3}
    \right),
$$
where $\widetilde O(\cdot)$ suppresses polylogarithmic factors in $S$, $A$, $T$, $1/\delta$, and $1/(1-\gamma)$.
\end{theorem}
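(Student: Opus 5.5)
We start from the decompositions \eqref{eq:decomp-good-bad-rigorous-sa} and \eqref{eq:decomp-BR-R} and bound three kinds of terms: the non-optimistic-event term, the value-difference term along the trajectory, and, for BR-R, the floor term. We also use the fact that pseudo-episode lengths are independent geometric variables with mean $1/(1-\gamma)$. For the bad-event term we condition on $\mathcal F_{t_k}$. Since $L_k$ is independent of $\mathcal F_{t_k}$ and of $P^c$, Lemma~\ref{lem:bayes_optimism}(i) (more precisely, the union bound over $(s,a)$ defining $\mathcal G_k$, which yields $\mathbb P(\mathcal G_k^c\mid\mathcal F_{t_k})\le \delta SA\ln(2k)/\sqrt k$) gives
\[
\mathbb E\bigl[L_k\mathbb I\{\mathcal G_k^c\}\bigr]\le \frac{\delta SA\ln(2k)}{(1-\gamma)\sqrt k}.
\]
Summing over $k\le K_T$ and using $\mathbb E K_T\le 1+(1-\gamma)T$ with Jensen produces $\widetilde O\bigl(\delta SA\sqrt{T}/(1-\gamma)^{2}\bigr)$ after the prefactor $1/(1-\gamma)$. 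Here we use $\sum_{k\le K}k^{-1/2}\lesssim\sqrt K$ and $\sqrt{(1-\gamma)T}\le\sqrt T$.

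For the value-difference term, we write $V_k-V^{\pi_k}=V^{\pi_k}_{\phi_k,\alpha_k}-V^{\pi_k}$, because $\pi_k$ is optimal for the $\alpha_k$-BR-MDP. We then apply Lemma~\ref{lem:value-decomposition-sum} with $\pi=\pi_k$, which is $\mathcal F_{t_k}$-measurable. This reduces the term to $\mathbb E[\sum_i i\gamma E_{k,i}]$. Next we bound $|E_{k,i}|$ by a Dirichlet concentration statement, \cref{lem:dirichlet_quantile_BR-MDP}. Both the posterior quantile $\rho^{\alpha}_{\phi_k(s,a)}(P^\top V)$ and, on a high-probability event under the Bayesian prior, the true mean $(P^c_{s,a})^\top V$ lie within
\[
\frac{1}{1-\gamma}\sqrt{\frac{\ln(e/\min\{\alpha,1-\alpha\})+\ln(\cdot)}{N_k^+(s,a)}}
\]
of the posterior mean. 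The schedule gives $\alpha_k\ge\underline\alpha$ and $1-\alpha_k\ge \min\{\delta r_k g_k,1-\underline\alpha\}$. On the upper side, the $\ln(1/(1-\alpha_k))$ factor is therefore at most $\ln\frac{e}{1-\underline\alpha}$ plus a polylog in $k,S,A,1/\delta$, which is absorbed into $\widetilde O$. This explains why only $1-\underline\alpha$ appears in the first bound.

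The factor $i$ in \eqref{eq:agg-vd} needs care: $\mathbb E[\sum_{i\le L}i\,x_i]=\sum_i i\gamma^{i-1}x_i$, and summed over pseudo-episodes this has total weight $O(1/(1-\gamma))$ per actual visit relative to counting visits. So the sum becomes $\frac{\gamma}{(1-\gamma)^{2}}\sum_t\sqrt{\ln(\cdot)/N^+_{k(t)}(s_t,a_t)}$. The standard pigeonhole argument bounds this by $\sqrt{SAT}$. Because counts are frozen within a pseudo-episode, a lag correction is needed: $N_k$ versus the current count differ by at most $L_k$. We handle it in the usual way, splitting on whether $N_k(s,a)\ge L_k$ and charging the opposite case to an $O(SA/(1-\gamma)^{3})$ burn-in term, using geometric tails of $L_k$ and $T\ge S^2A$. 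The high-probability concentration failure contributes another lower-order term.

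For BR-R, \eqref{eq:decomp-BR-R} and Lemma~\ref{lemma:value-decomposition-sum-robust} reduce the problem to bounding $|E_{k,i}|+|E^-_{k,i}|$. The term $E^-$ involves the $\underline\alpha$-quantile, whose deviation carries $\ln(1/\underline\alpha)$. Combining this with the previous step gives $\ln(1/\min\{1-\underline\alpha,\underline\alpha\})$. No bad-event term arises, because Lemma~\ref{lem:bayes_optimism}(ii) is deterministic. This explains the absence of the $\delta$ term.

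The main obstacle I expect is the weighted sum $\sum_i i\gamma E_{k,i}$ combined with the lag between frozen posterior counts and trajectory counts. The factor $i$ is what generates the second power of $1/(1-\gamma)$, and it has to be handled without losing an extra $\sqrt{1/(1-\gamma)}$. The first thing I would try is exchanging the expectation over $L_k$ to obtain explicit weights $i\gamma^{i-1}$, and then using tail bounds for $\max_k L_k$ of order $\ln T/(1-\gamma)$ to control the count lag.
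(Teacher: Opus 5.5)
Your proposal is correct and follows essentially the same route as the paper. It uses the same ingredients:
\begin{itemize}
\item the optimism-failure/value-difference split with $\mathbb P(\mathcal G_k^c\mid\mathcal F_{t_k})\le \delta SA\ln(2k)/\sqrt k$;
\item Lemma~\ref{lem:value-decomposition-sum}, and for BR-R Lemma~\ref{lemma:value-decomposition-sum-robust} together with the deterministic optimism of Lemma~\ref{lem:bayes_optimism}(ii), which is why no $\delta$ term appears;
\item the Dirichlet quantile deviation of Lemma~\ref{lem:dirichlet_quantile_BR-MDP}, combined with posterior concentration of $(P^c_{s,a})^\top V_k$ around the posterior mean;
\item the schedule bound $\ln\frac{1}{1-\alpha_k}\le\ln\frac{1}{1-\underline\alpha}+\text{polylog}$, followed by a pigeonhole sum.
\end{itemize}

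The differences are only technical variants, and both versions give the same $\widetilde O(SA/(1-\gamma)^3)$ burn-in. First, the weight $i$ attached to each step is genuinely per-step and is only $O(1/(1-\gamma))$ on average, not per visit. The paper handles it by truncating at $m_T=\widetilde O(1/(1-\gamma))$ and using an expected-tail bound (Lemma~\ref{lem:mk-constant}), whereas you use a high-probability bound on $\max_k L_k$. Second, the paper controls the count lag through a count-doubling event counted by a witness-pair argument, whereas you use a per-visit split on $N_k(s,a)\ge L_k$.
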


Theorem~\ref{thm:main-order} formalizes that the adaptive schedule accounts for the evolving robustness--exploration trade-off described in Section~\ref{sec:online-adaptive-quantile-scheduling}. For fixed problem parameters, the Bayesian regret bound for AQ-BRMDP grows as $\widetilde O(\sqrt{T})$ in the total number of interactions $T$. The term $\delta SA\sqrt{T}/(1-\gamma)^2$ captures the additional regret associated with using a more robust interaction policy early in learning. Choosing a larger constant $\delta$ would strengthen early robustness, at the cost of a larger $\delta SA\sqrt{T}/(1-\gamma)^2$ term in the regret bound.
If the focus is on the standard exploration--exploitation trade-off rather than on prioritizing additional early-stage robustness, we can set $\delta=1/\sqrt{SA}$. 
This choice keeps the robustness-induced regret term at the same order as the first term, $\sqrt{SA T}/(1-\gamma)^2$, in its dependence on $S$, $A$, $T$, and $(1-\gamma)^{-1}$ up to the explicit factor $\gamma\sqrt{\ln(e/(1-\underline{\alpha}))}$ and polylogarithmic factors hidden in $\widetilde O(\cdot)$. The $T$-independent lower-order term $SA/(1-\gamma)^3$ comes from the finite-time analysis of the pseudo-episode construction for discounted infinite-horizon learning. When $T \ge SA\max\{S,1/(1-\gamma)^2\}$, this lower-order term is absorbed by the $\sqrt T$ terms, so the dominant scaling becomes $\widetilde O(\sqrt{SA T}/(1-\gamma)^2)$. 
Thus, with this choice of $\delta$, AQ-BRMDP achieves the same Bayesian regret rate as state-of-the-art Bayesian RL algorithms, such as the $\widetilde O(H\sqrt{SAT})$ bound for posterior sampling in RL (PSRL) \citep{osband2017psbetter}, up to logarithmic and horizon-dependent factors.

The difference in horizon dependence stems from the different definitions of Bayesian regret in finite-horizon episodic and discounted infinite-horizon settings. Finite-horizon episodic PSRL bounds $\widetilde O(H\sqrt{SAT})$ are typically expressed with a linear dependence on the horizon $H$ \citep{osband2017psbetter}, because regret is accumulated once per episode through the value gap at the initial state of each episode. 
In contrast, our discounted infinite-horizon regret accumulates a value gap at every interaction time $t$. 
Since each value gap itself represents a discounted future loss over an effective horizon of order $(1-\gamma)^{-1}$, this per-step regret criterion introduces an additional effective-horizon factor, leading to the $(1-\gamma)^{-2}$ dependence in our bound.

The Bayesian robust regret for AQ-BRMDP, $BR\text{-}R(T)$, indicates that the adaptive quantile schedule, while becoming less robust to encourage learning of the true-environment optimal policy, does not incur large cumulative regret relative to the robust-optimal value. It grows as $\widetilde O(\sqrt{SAT})$, 
and increases as $\underline{\alpha}$ approaches $0$ or $1$, following an order of $O\left(\sqrt{\ln\frac{1}{\min\{1-\underline{\alpha},\underline{\alpha}\}}}\right)$. 
This dependence is comparable to the
state-action and time dependence appearing in recent tabular online robust and distributionally
robust RL guarantees. For finite-horizon robust MDPs with $K$ episodes and horizon $H$,
\citet{dong2022onlinerobust} obtain frequentist robust-regret bounds of order
$\widetilde O(H^2S\sqrt{AK})$ for $(s,a)$-rectangular uncertainty sets. More recent online
$f$-divergence distributionally robust RL results obtain $\sqrt{SAK}$ dependence,
for example $\widetilde O(\sqrt{H^4(1+\sigma)SAK})$ for $\chi^2$ ambiguity sets, where $\sigma$ is the radius of the ambiguity set \citep{ghosh2025orvit}.
The comparison, however, should be interpreted carefully. Existing online robust RL results
typically establish frequentist robust regret bounds under a fixed but unknown
environment, with the benchmark given by the optimal policy for a worst-case value over an
ambiguity set. Such frequentist guarantees are
stronger than Bayesian regret guarantees when the model class and benchmark coincide, since a Bayesian regret bound is the prior average of the true environment regret bound.
In our setting, however, the BR-MDP formulation itself is posterior-based: epistemic
uncertainty is represented by the posterior distribution, and the Bellman backup is evaluated
through a posterior risk criterion. Bayesian regret is therefore the natural performance
measure, in the same spirit as PSRL, where regret is commonly analyzed after averaging over the prior distribution \citep{osband2013psrl,osband2017psbetter,xu2024continuing}.


\section{Experiments}
\label{sec:experiments}

We evaluate AQ-BRMDP in finite-state environments with two complementary settings: RiverSwim in Section~\ref{sec:exp-demanding}, which requires sustained exploration toward a distant rewarding state, and risky-branch FrozenLake in Section~\ref{sec:exp-costly}, where exploratory shortcut actions can lead to sticky holes. Appendix~\ref{app:sensitivity-analysis} reports the sensitivity analysis for the schedule parameter $\delta$ in AQ-BRMDP.
We also implement an extension of AQ-BRMDP to continuous-state spaces and evaluate its empirical performance in a continuous-state environment in Appendix~\ref{app:implementation-details}.

We compare AQ-BRMDP with two classes of algorithms: Continuing PSRL and fixed-quantile BR-MDPs, denoted by BRMDP-0.1, BRMDP-0.3, and BRMDP-0.5, which use constant quantile levels $\alpha=0.1,0.3,0.5$, respectively. Continuing PSRL uses the same pseudo-episode mechanism but samples one transition kernel from the current posterior and acts optimally in the sampled MDP. The fixed-quantile BR-MDP baselines use a constant quantile level throughout learning; smaller quantiles correspond to more robust policies as discussed in Section~\ref{sec:formulation}. For BRMDP-based methods, posterior quantiles in Bellman backups are approximated by empirical quantiles from posterior samples; the corresponding value-iteration procedure and Monte Carlo budgets are given in Appendix~\ref{sec:value-iteration}. Unless otherwise noted, curves report empirical means over $100$ independent runs with $95\%$ confidence intervals.

\subsection{Exploration-Demanding Environments}
\label{sec:exp-demanding}

We first consider RiverSwim-$n$ with $n\in\{6,10\}$. The state space is $\mathcal S_n=\{1,\ldots,n\}$, the initial state is $1$, and the action space is $\mathcal A=\{a_L,a_R\}$. The left action $a_L$ moves deterministically to the left neighboring state, and at state $1$ the next state remains $1$, i.e., $P^c(s-1\mid s,a_L)=1$ for $2\le s\le n$ and $P^c(1\mid1,a_L)=1$. The rewards satisfy $r(1,a_L,s')=0.005$, $r(n,a_R,n)=1$, and $r(s,a,s')=0$ otherwise. For the right action, the true transition kernel makes sustained rightward progress difficult: at interior states $2\le s\le n-1$, $P^c(s+1\mid s,a_R)=0.35$, $P^c(s\mid s,a_R)=0.60$, and $P^c(s-1\mid s,a_R)=0.05$; at the left end, $P^c(2\mid1,a_R)=0.60$ and $P^c(1\mid1,a_R)=0.40$; at the right end, $P^c(n\mid n,a_R)=0.60$ and $P^c(n-1\mid n,a_R)=0.40$. The agent maintains a Dirichlet posterior over each transition vector $P^c_{s,a}$. Thus, obtaining reward $1$ requires sustained rightward exploration through the chain under stochastic transitions. The ten-state version lengthens this path and makes sustained exploration harder. A schematic of RiverSwim-6 is provided in Appendix Figure~\ref{fig:finite-state-schematics}.

\paragraph{Results.}
Figure~\ref{fig:rs6-main} reports cumulative true regret, moving-average reward trajectories, and state-occupancy heatmaps for RiverSwim-6 and RiverSwim-10. The reward trajectory at time $t$ is the moving average $\bar r_t^{(w)}:=w^{-1}\sum_{\tau=t-w+1}^{t}r_\tau$ with a window size of $w=100$. The state-occupancy heatmap reports the empirical state-occupancy measure $\widehat d_T(s):=T^{-1}\sum_{t=0}^{T-1}\mathbf 1\{s_t=s\}$. In RiverSwim-10, the fixed-quantile BR-MDP baselines behave similarly and their curves largely overlap in the cumulative-regret and moving-average-reward panels.

We have the following observations.
\begin{enumerate}[label=(\arabic*)]
    \item \textbf{Sustained exploration by AQ-BRMDP.} Compared with Continuing PSRL, AQ-BRMDP achieves lower cumulative regret in both RiverSwim-6 and RiverSwim-10. Its moving-average reward increases more quickly early in learning, indicating that it reaches the right end of the chain earlier. The occupancy heatmaps also show that AQ-BRMDP spends more time in states near the right end of the chain, especially in RiverSwim-10. These observations indicate that AQ-BRMDP supports more effective sustained exploration than Continuing PSRL.

    \item \textbf{Effect of fixed quantile levels.} The fixed-quantile BR-MDP baselines show that small quantile levels can be overly conservative in exploration-demanding environments. In RiverSwim-6, BRMDP-0.1 remains concentrated near state $1$, its moving-average reward stays close to the local reward $0.005$, and its cumulative regret is the largest among the fixed-quantile baselines. BRMDP-0.5 reaches states near the right end of the chain more often and attains higher moving-average rewards. In RiverSwim-10, all three fixed-quantile baselines remain concentrated near the initial state and their cumulative regret grows approximately linearly.
\end{enumerate}
These results are consistent with the mechanism in Section~\ref{sec:robustness-exploration-tension}. A fixed lower-tail quantile can make the policy too conservative to move away from the initial region and collect the data needed to reach the right end of the chain. By contrast, the adaptive quantile schedule in AQ-BRMDP reduces this conservatism over learning and improves exploration toward distant high-reward states.

\begin{figure*}[!t]
\centering
\includegraphics[width=0.96\textwidth]{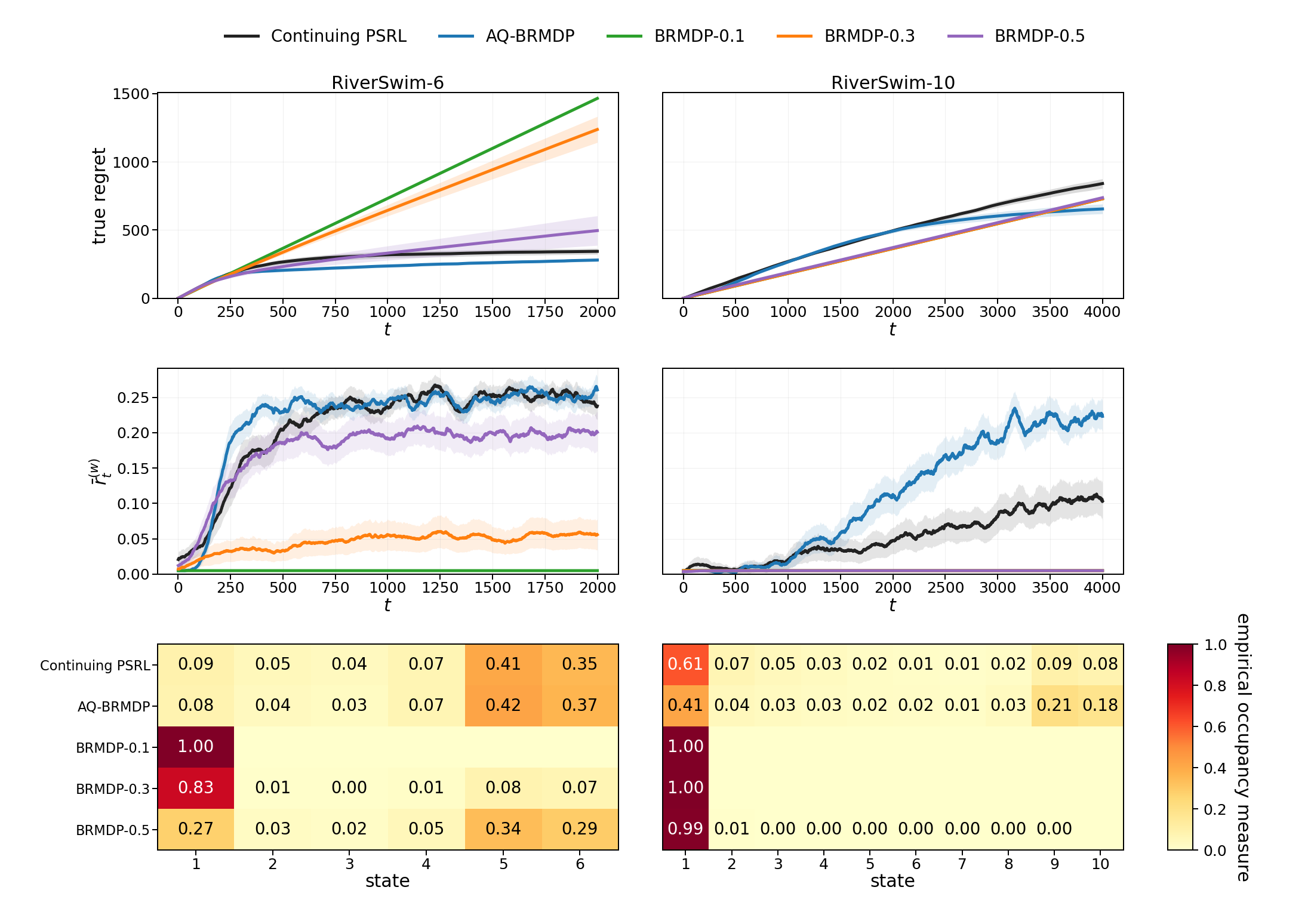}
\caption{Performance in RiverSwim-6 (left column) and RiverSwim-10 (right column). The first row shows cumulative true regret, the second row shows moving-average rewards with a window size of $100$, and the third row shows state-occupancy heatmaps.}
\label{fig:rs6-main}
\end{figure*}

\subsection{Exploration-Costly Environments}
\label{sec:exp-costly}

We next consider a risky-branch variant of FrozenLake. A description of this problem is provided in Appendix Figure~\ref{fig:finite-state-schematics}. The environment is a $4\times 4$ grid with states indexed by $\mathcal S=\{0,\ldots,15\}$, initial state $s_0=0$, goal state $s_G=15$, and hole states $H=\{5,7,11,12\}$. The action space is $\mathcal A=\{\mathrm{Left},\mathrm{Right},\mathrm{Up},\mathrm{Down}\}$. The reward is known: a transition to $s_G$ gives reward $1$, and all other transitions give reward $0$.
For each state that is neither a hole nor the goal, the transition is slippery. If the agent chooses an action $a$, then with probability $0.50$ the next state follows the intended move, and with probability $0.25$ for each perpendicular direction, it follows one of the two directions perpendicular to $a$. If a realized move leaves the grid, the agent remains in the current state. For $h\in H$ and action $a$, the agent moves in the intended direction with probability $p_h$ and remains at $h$ with probability $1-p_h$; we set $p_h=0.2$. After the agent reaches the goal, the next state is sampled from a uniform distribution supported on the non-hole, non-goal states.
The risky shortcut is at the state-action pair $(2,\mathrm{Down})$. Under the true transition kernel, taking action $\mathrm{Down}$ at state $2$ does not follow the default slippery transition rule. Instead,  $P^c(10\mid 2,\mathrm{Down})=1-\theta$ and $P^c(5\mid 2,\mathrm{Down})=\theta$.
Thus, this action can shorten the path to the goal when it succeeds, but with probability $\theta$ it sends the agent into a sticky hole. We consider $\theta=0.7$ and $\theta=0.9$. The agent treats the entire transition kernel $P^c$ as unknown and maintains an independent Dirichlet posterior over each $P^c_{s,a}$. 

In this environment, we also compute the posterior $0.1$-quantile value defined in \eqref{eq:posterior-lb} to evaluate the robustness of the policy executed during learning. For a time step $t$ in pseudo-episode $k$, let $\phi_t:=\phi_k$ and $\pi_t:=\pi_k$. The {posterior $0.1$-quantile value for the value of $\pi_t$ at the initial state $s_0$, $V^{\pi_t,\mathrm{q}}_{\phi_t,0.1}(s_0)$, means that when $\pi_t$ is deployed in the true environment, its value exceeds $V^{\pi_t,\mathrm{q}}_{\phi_t,0.1}(s_0)$ with posterior probability at least $0.9$.}
Hence, a larger $V^{\pi_t,\mathrm q}_{\phi_t,0.1}(s_0)$ indicates better robustness performance under the current posterior. As discussed in Appendix~\ref{app:posterior-lb-estimation}, we estimate $V^{\pi_t,\mathrm q}_{\phi_t,0.1}(s_0)$ via Monte Carlo sampling and report the approximate $\widehat V^{\pi_t,\mathrm q}_{\phi_t,0.1}(s_0)$ in Figure~\ref{fig:fl-quantile}.

\paragraph{Results.}

Figures~\ref{fig:fl-regret} and \ref{fig:fl-quantile} show the following patterns.
\begin{enumerate}[label=(\arabic*)]
     \item \textbf{Effective robustness--exploration trade-off.}
    {The cumulative robust-regret and true-regret curves together show that AQ-BRMDP achieves lower regret than Continuing PSRL under both criteria. This indicates that the adaptive quantile schedule can keep the policy robust relative to the optimal value of the $\underline{\alpha}$-quantile BR-MDP while also effectively learning the true-optimal policy.}

    \item \textbf{Reduced true regret of BRMDP-based policies in risky environments.}
    In both settings, AQ-BRMDP and the BRMDP-$\alpha$ baselines accumulate lower true regret than Continuing PSRL. This indicates that BRMDP-based policies are more effective in those risky-shortcut environments. The reason is that when the shortcut transition is not yet well estimated, a posterior sample may underestimate $P^c(5\,|\,2,\mathrm{Down})$, making policies that route the agent through state $2$ appear overly favorable. Under the true kernel, however, repeated use of action $\mathrm{Down}$ at state $2$ can move the agent to the sticky hole with high probability and produce larger regret.
    BRMDP-based policies instead conservatively evaluate the shortcut through posterior quantiles, and therefore tend to avoid the risky branch. 

    \item \textbf{Non-monotone effect of fixed quantile levels.} The BRMDP-$\alpha$ baselines reduce this early cost by evaluating lower-tail performance and therefore tend to avoid sticky holes and the risky branch. Among these baselines, BRMDP-0.3 generally yields lower regret than BRMDP-0.5, showing that some robustness is useful when the shortcut transition is not yet well estimated. However, BRMDP-0.1 can have higher regret than BRMDP-0.3, indicating that excessive conservatism may also cause large true regret. Thus, the performance of BRMDP-$\alpha$ is not monotone in the quantile level $\alpha$.

    \item \textbf{Posterior robustness under adaptive scheduling.} AQ-BRMDP generally attains the largest posterior $0.1$-quantile value in both settings, showing that its executed policies maintain stronger posterior robustness while still improving true regret relative to Continuing PSRL. 
\end{enumerate}

\begin{figure*}[!t]
\centering
\includegraphics[width=0.46\textwidth]{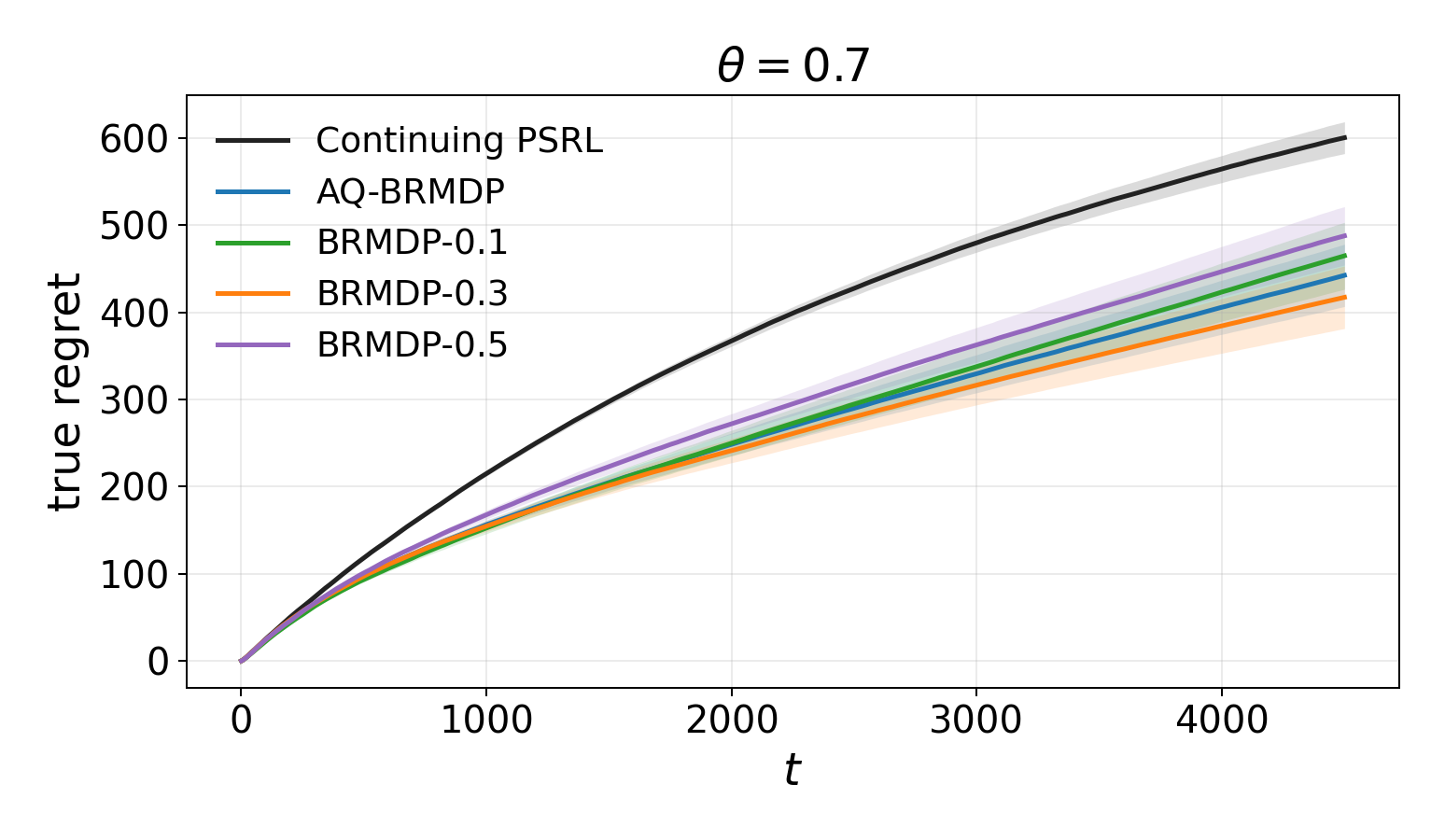}
\includegraphics[width=0.46\textwidth]{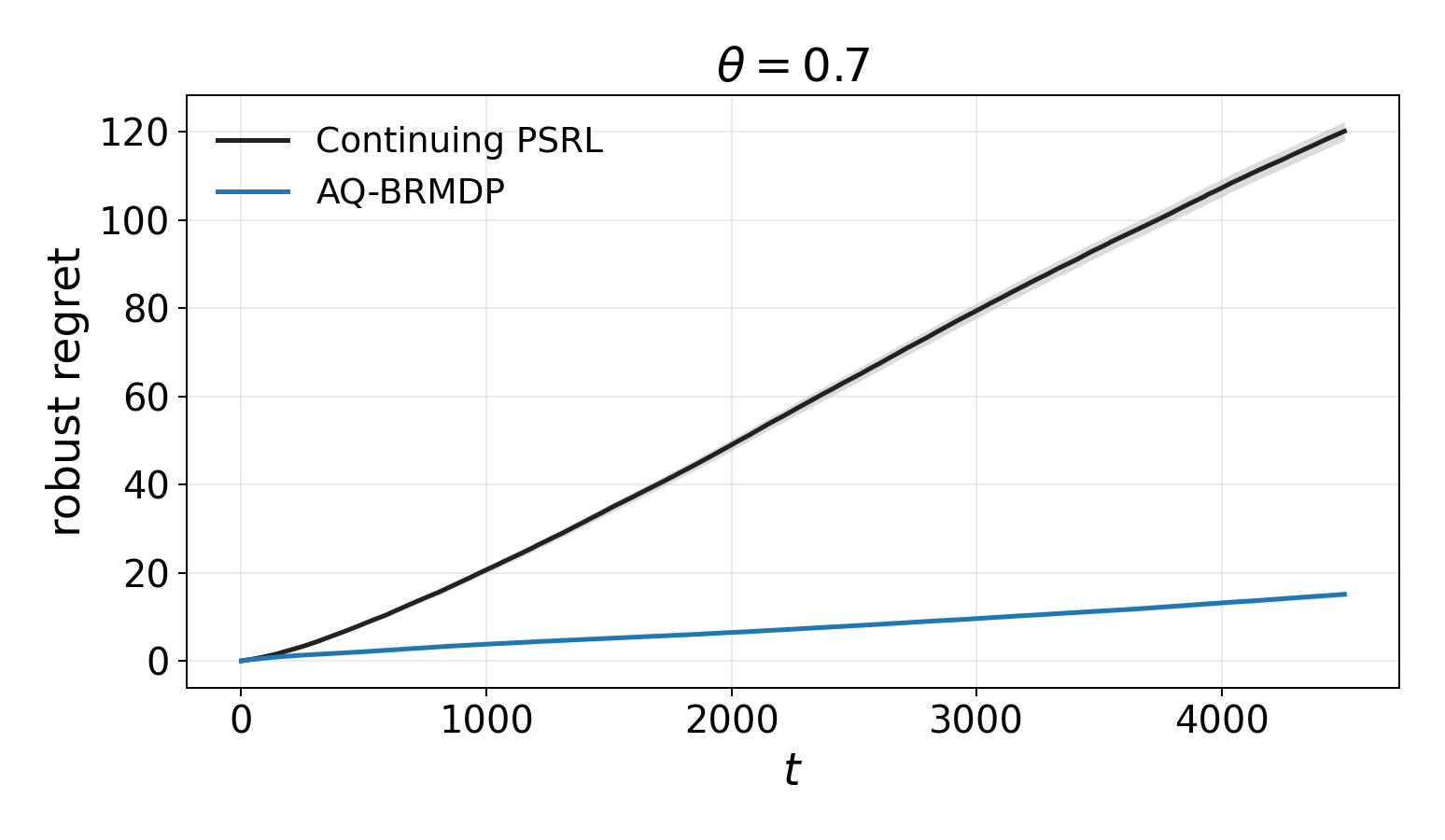}
\includegraphics[width=0.46\textwidth]{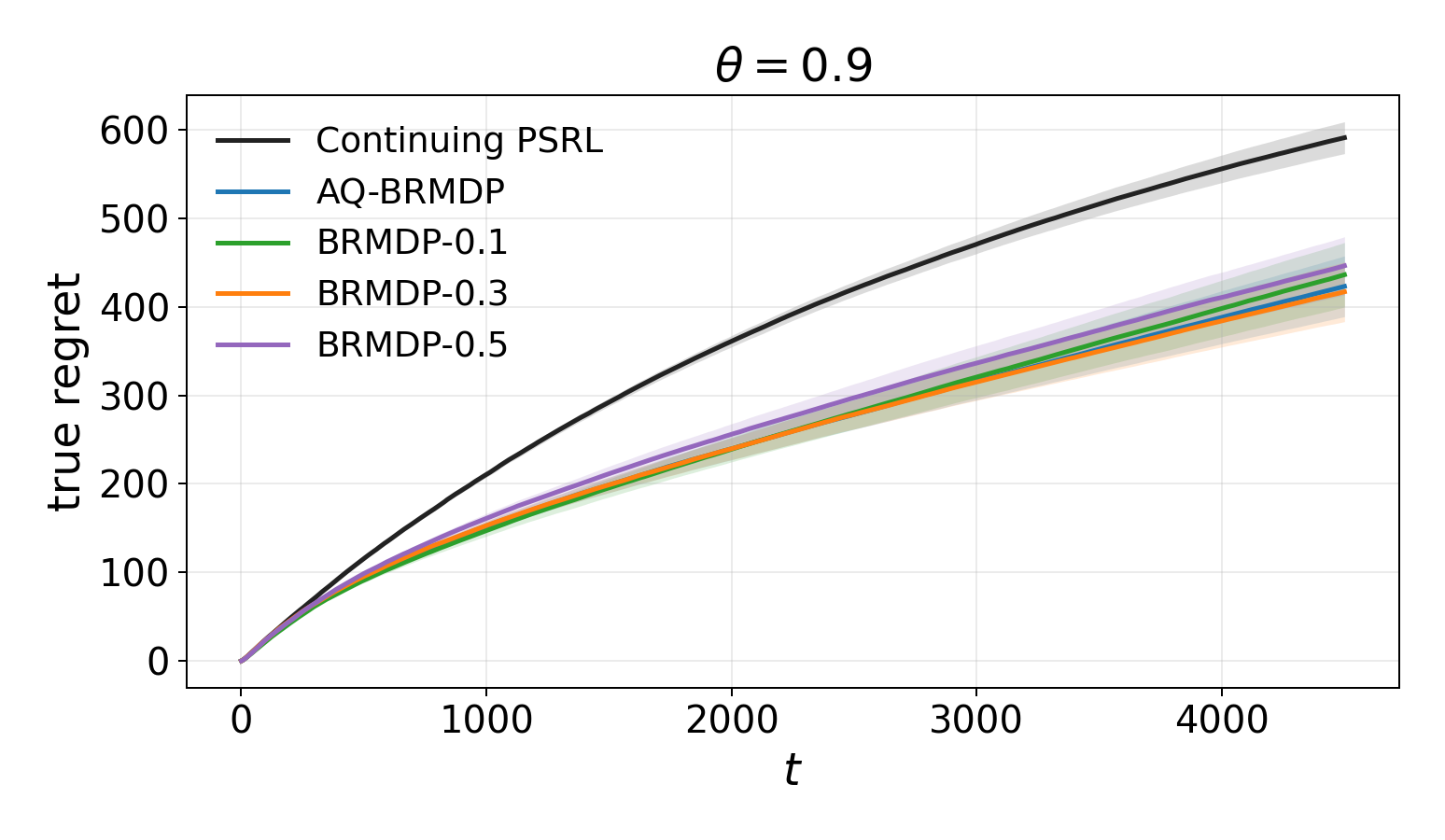}
\includegraphics[width=0.46\textwidth]{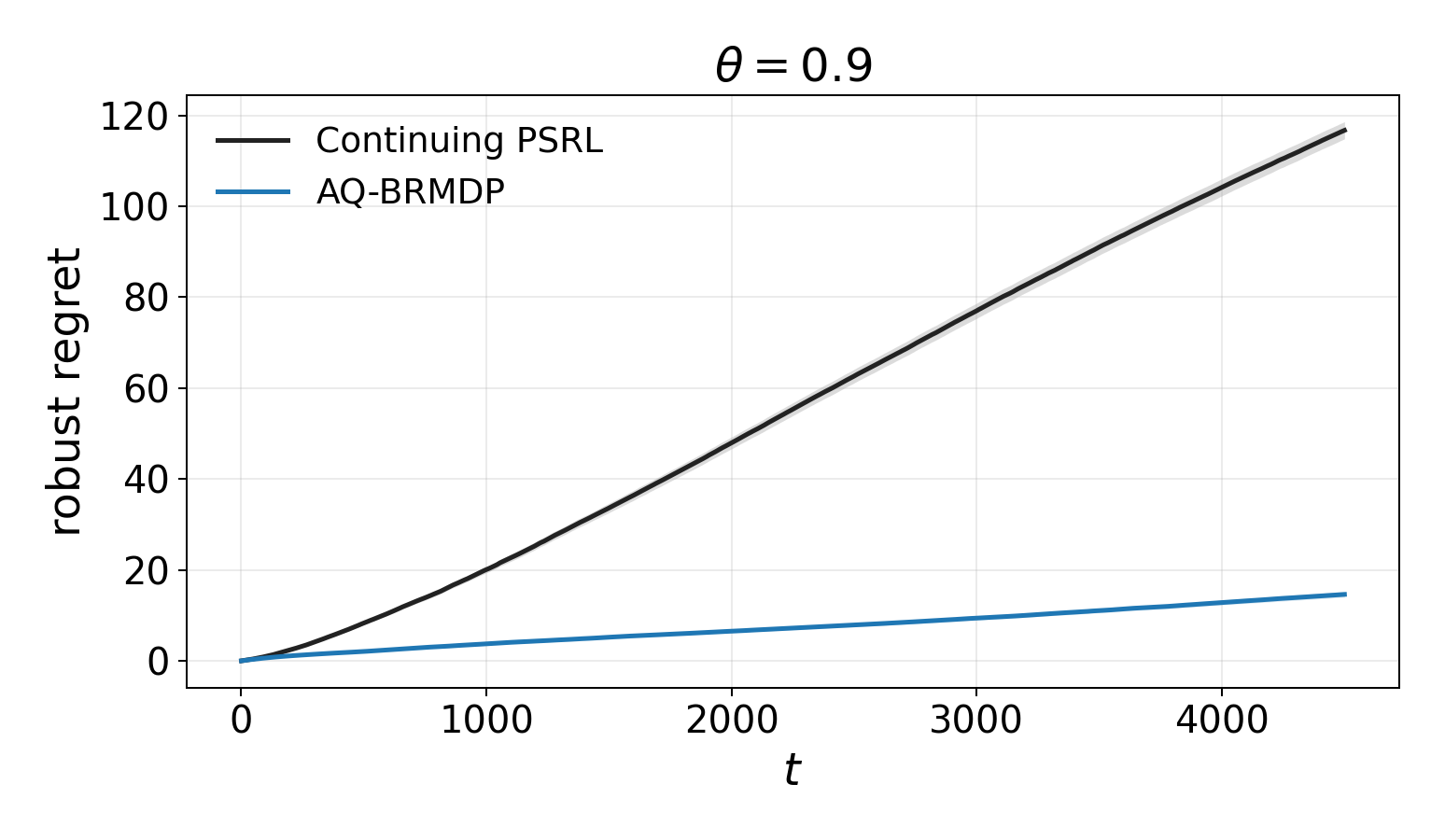}
\caption{Cumulative true regret (left column) and cumulative robust regret with $\underline{\alpha}=0.2$ (right column) in risky-branch FrozenLake for $\theta=0.7$ and $\theta=0.9$.}
\label{fig:fl-regret}
\end{figure*}

\begin{figure*}[!t]
\centering
\includegraphics[width=0.46\textwidth]{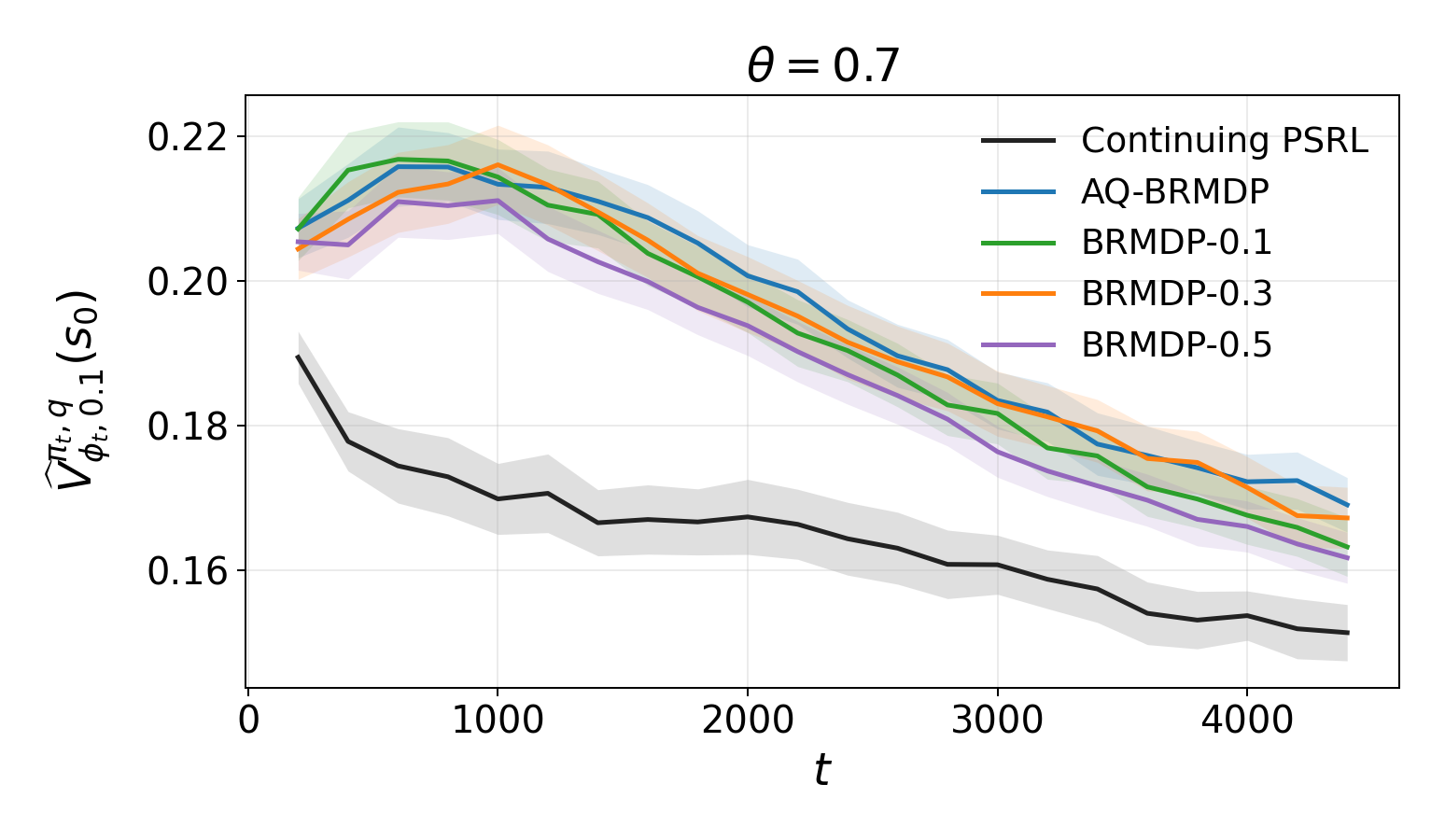}
\includegraphics[width=0.46\textwidth]{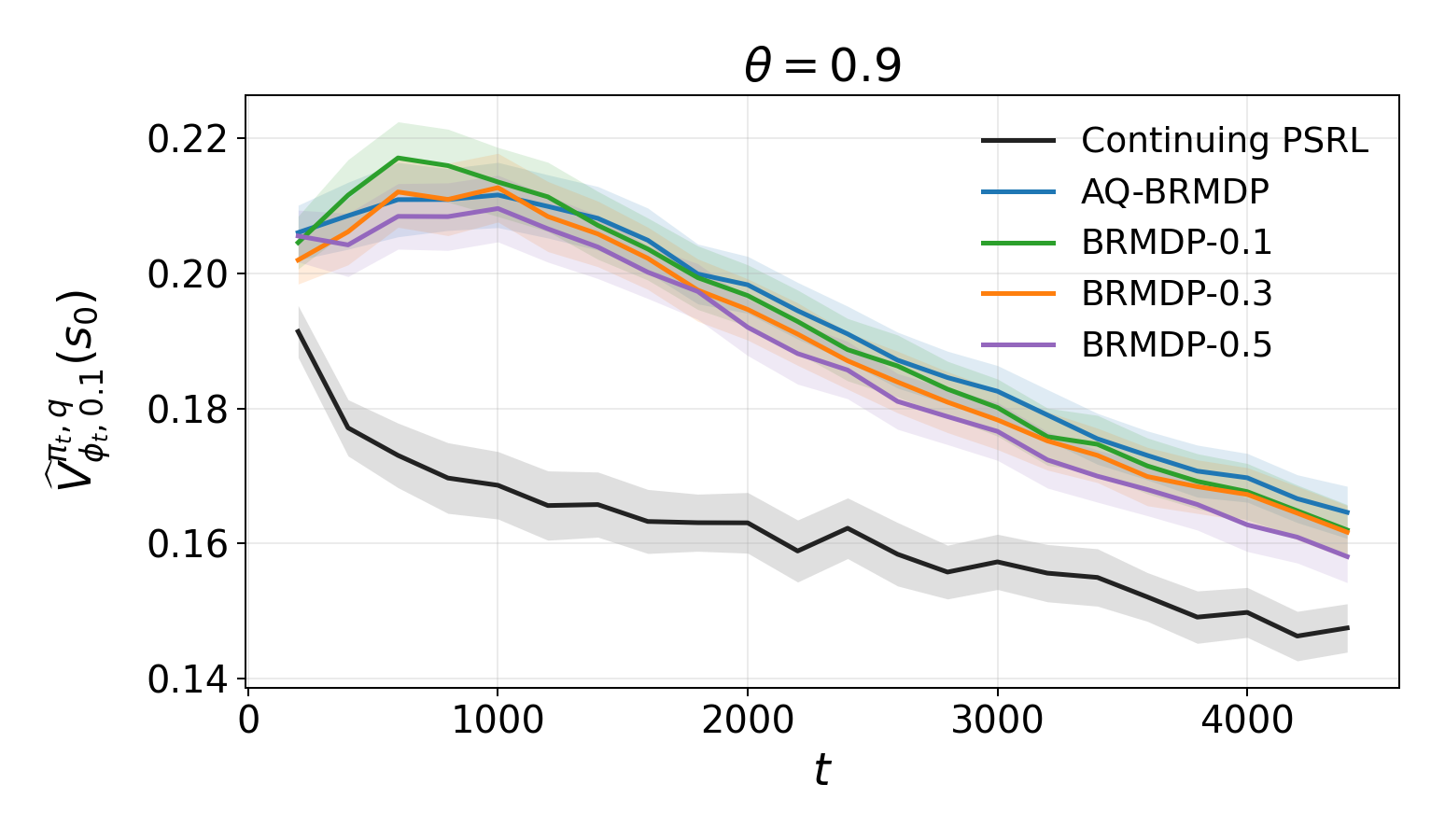}
\caption{Posterior $0.1$-quantile value $\widehat V^{\pi_t,\mathrm q}_{\phi_t,0.1}(s_0)$ in risky-branch FrozenLake for $\theta=0.7$ and $\theta=0.9$.}
\label{fig:fl-quantile}
\end{figure*}

\section{Conclusions}
This paper studies the $\alpha$-quantile BR-MDP from both modeling and algorithmic perspectives. From a modeling perspective, we formulate the $\alpha$-quantile BR-MDP, in which the quantile level provides a flexible way to adjust the risk attitude toward epistemic uncertainty. The formulation can induce robust or exploratory behavior. 
We further characterize this flexibility through an asymptotic normality result: for a fixed policy, the $\alpha$-quantile BR-MDP value function differs from the true value function by a mean term whose sign is determined by whether $\alpha$ is above or below $1/2$, and whose magnitude increases as $\alpha$ moves farther into either tail and shrinks as the posterior concentrates. The posterior quantile value result further provides a finite-sample robustness interpretation for lower-tail quantile evaluation.
From an algorithmic perspective, we develop AQ-BRMDP to account for the evolving robustness--exploration trade-off in online RL. The adaptive schedule varies with the pseudo-episode index and relative state-action visit counts, allowing the policy to be more conservative early in learning and less conservative when further exploration is needed. We theoretically prove a sublinear Bayesian regret bound and numerically demonstrate that AQ-BRMDP performs effectively in both exploration-demanding and exploration-costly environments. We also implement an extension of the proposed algorithm to continuous-state spaces and evaluate its empirical performance in a continuous-state environment.

Several directions remain open. 
The present analysis is developed for finite state and action spaces with direct transition parameterization and Dirichlet posteriors.
Extending the formulation and the theoretical guarantees to continuous-state and continuous-action spaces would broaden the scope of the approach. 
On the computational side, the method relies on repeated posterior sampling and solving the quantile BR-MDP at the beginning of each pseudo-episode. 
Although sampling transition kernels from the Dirichlet posterior is carried out in the simulator rather than through the more costly online interaction with the true environment, it can still materially increase computational cost. 
It would therefore be useful to understand how previously sampled transition kernels can be reused.

\section*{Acknowledgments}
This work was supported by the Air Force Office of Scientific Research (AFOSR) under Grant FA9550-25-1-0310 and the National Science Foundation under Award ECCS-2419562.

\bibliography{brrl}

\clearpage
\appendix
\renewcommand{\thesection}{EC.\arabic{section}}
\renewcommand{\theequation}{EC.\arabic{equation}}
\renewcommand{\thefigure}{EC.\arabic{figure}}
\renewcommand{\thetable}{EC.\arabic{table}}
\renewcommand{\thealgorithm}{EC.\arabic{algorithm}}
\makeatletter
\renewcommand{\theHsection}{EC.\arabic{section}}
\renewcommand{\theHsubsection}{EC.\arabic{section}.\arabic{subsection}}
\renewcommand{\theHsubsubsection}{EC.\arabic{section}.\arabic{subsection}.\arabic{subsubsection}}
\renewcommand{\theHequation}{EC.\arabic{equation}}
\renewcommand{\theHfigure}{EC.\arabic{figure}}
\renewcommand{\theHtable}{EC.\arabic{table}}
\renewcommand{\theHalgorithm}{EC.\arabic{algorithm}}
\providecommand{\theHALG@line}{}
\renewcommand{\theHALG@line}{EC.\arabic{algorithm}.\arabic{ALG@line}}
\makeatother
\setcounter{section}{0}
\setcounter{equation}{0}
\setcounter{figure}{0}
\setcounter{table}{0}
\setcounter{algorithm}{0}
\section*{Appendix: Proofs and Implementation Details}
\addcontentsline{toc}{section}{Appendix: Proofs and Implementation Details}

\setlength{\abovedisplayskip}{4pt plus 2pt minus 2pt}
\setlength{\belowdisplayskip}{4pt plus 2pt minus 2pt}
\setlength{\abovedisplayshortskip}{2pt plus 1pt minus 1pt}
\setlength{\belowdisplayshortskip}{3pt plus 1pt minus 1pt}
\setlength{\floatsep}{8pt plus 2pt minus 2pt}
\setlength{\textfloatsep}{8pt plus 2pt minus 2pt}
\setlength{\intextsep}{8pt plus 2pt minus 2pt}
\setlength{\dblfloatsep}{8pt plus 2pt minus 2pt}
\setlength{\dbltextfloatsep}{8pt plus 2pt minus 2pt}
\setlength{\abovecaptionskip}{3pt}
\setlength{\belowcaptionskip}{0pt}


\section{Dirichlet Posterior Update for the Transition Kernel}
\label{app:dirichlet-update}

Let $\Delta^{S}:=\{p\in\mathbb R_+^S:\sum_{s'\in\mathcal S}p(s')=1\}$ denote the probability simplex. For a parameter vector $\eta\in\mathbb R_{++}^S$, let $\Dir(\eta)$ denote the Dirichlet distribution on $\Delta^S$ with density
$$
f(p\mid \eta)\propto \prod_{s'\in\mathcal S}p(s')^{\eta(s')-1}.
$$
For each state-action pair $(s,a)$, we place the prior $P_{s,a}\sim \Dir(\phi_0(s,a)),$
where $\phi_0(s,a)=(\phi_0(s,a,s'))_{s'\in\mathcal S}$ is the prior parameter vector with $\phi_0(s,a,s')>0$. The prior parameter $\phi_0(s,a)$ can be chosen based on prior knowledge; in the absence of such information, a common choice is the uniform Dirichlet prior $\phi_0(s,a,s')=1$ for all $s'\in\mathcal S$.

Let $N(s,a,s')$ be the number of observed transitions from $(s,a)$ to $s'$, let $N(s,a):=\sum_{s'\in\mathcal S}N(s,a,s')$ denote the number of visits to $(s,a)$, and let $N:=\sum_{s,a} N(s,a)$ denote the total number of observations. Define the $\sigma$-field generated by the observed transitions as $\mathcal F_N=\sigma\{(s_i,a_i,s_{i+1}):i=0,\ldots,N-1\}$. By conjugacy, the posterior is
$$
P_{s,a}\mid \mathcal F_N \sim \Dir(\phi_N(s,a)),
$$
where
\begin{equation}
\label{eq:update-of-phi} \phi_N(s,a,s')=\phi_0(s,a,s')+N(s,a,s'), \qquad \forall s'\in\mathcal S.
\end{equation}
Thus, the posterior parameter $\phi_N$ is determined by the prior parameter $\phi_0$ and the transition counts induced by the observed trajectory. In the main text, when the current posterior is fixed, we suppress the dependence on $N$ and write $\phi(s,a)$ for $\phi_N(s,a)$.

\section{Proof of Asymptotic Normality}
\label{sec:Proof of Weak Convergence}

\begin{lemma}[Basic properties of the quantile functional]
\label{lem:quantile_properties}
Let $\rho^\alpha$ be the left $\alpha$-quantile functional, $\alpha\in(0,1)$.
Then for any real random variables $X,Y$:
\begin{enumerate}
    \item[(i)] for any constant $c\in\mathbb R$, $\rho^\alpha(X+c)=\rho^\alpha(X)+c$;
    \item[(ii)] for any constant $c>0$, $\rho^\alpha(cX)=c\,\rho^\alpha(X)$;
    \item[(iii)] if $X\ge Y$ almost surely, then $\rho^\alpha(X)\ge \rho^\alpha(Y)$;
    \item[(iv)] if $|X-Y|\le c$ almost surely for some $c\ge 0$, then $|\rho^\alpha(X)-\rho^\alpha(Y)|\le\|X-Y\|_\infty\le c$.
\end{enumerate}
\end{lemma}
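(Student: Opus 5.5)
The plan is to argue directly from the definition $\rho^\alpha(X)=\inf\{z:\mathbb P(X\le z)\ge\alpha\}$, writing $F_X(z):=\mathbb P(X\le z)$ and $A_X:=\{z: F_X(z)\ge\alpha\}$. Since $F_X$ is nondecreasing and right-continuous with limits $0$ and $1$, the set $A_X$ is a nonempty, bounded-below half-line $[\rho^\alpha(X),\infty)$. Hence $\rho^\alpha(X)$ is finite, and $z\ge\rho^\alpha(X)$ holds if and only if $F_X(z)\ge\alpha$. Every item will follow by comparing these sets.

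\textbf{Items (i) and (ii).} For (i), note that $F_{X+c}(z)=F_X(z-c)$. This gives $A_{X+c}=A_X+c$, and taking infima yields the claim. For (ii), with $c>0$ we have $F_{cX}(z)=F_X(z/c)$. This gives $A_{cX}=cA_X$, and since multiplication by a positive constant preserves infima, $\rho^\alpha(cX)=c\rho^\alpha(X)$.

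\textbf{Item (iii).} If $X\ge Y$ a.s., then $\{X\le z\}\subseteq\{Y\le z\}$ up to a null set. So $F_Y(z)\ge F_X(z)$ for all $z$, which gives $A_X\subseteq A_Y$. The infimum over the larger set is smaller, so $\rho^\alpha(Y)\le\rho^\alpha(X)$.

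\textbf{Item (iv).} Combine (iii) and (i). From $|X-Y|\le\|X-Y\|_\infty$ a.s. we get $Y-\|X-Y\|_\infty\le X\le Y+\|X-Y\|_\infty$ a.s. Monotonicity then gives $\rho^\alpha(Y)-\|X-Y\|_\infty\le\rho^\alpha(X)\le\rho^\alpha(Y)+\|X-Y\|_\infty$, where translation equivariance was used to pull the constant out. Finally, $\|X-Y\|_\infty\le c$ holds by the definition of the essential supremum.

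\textbf{Main point of care.} The only delicate step is the initial structural fact about $A_X$. It needs right-continuity of $F_X$, so that the infimum is attained and $A_X$ is closed, and the limits at $\pm\infty$, so that $A_X$ is nonempty and bounded below (recall $\alpha\in(0,1)$). Strictly speaking, (i)–(iii) only require the set identities, since the infimum commutes with translation and positive scaling. The attainment is used only implicitly to guarantee finiteness. Everything else is routine.
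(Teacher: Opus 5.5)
Your proposal is correct and complete. The paper states this lemma without proof and treats it as standard; it later invokes (iv), for instance, to obtain $\|\Delta_N\|_\infty\le\gamma\|\Delta_N\|_\infty+\|B_N\|_\infty$ in the proof of Theorem~\ref{thm:quantile_weak}. So there is no alternative route to compare against. Your comparison of the sets $A_X=\{z:F_X(z)\ge\alpha\}$ is the standard verification and fills that gap.

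One small correction to your closing remark. Finiteness of $\rho^\alpha(X)$ does not come from attainment of the infimum. It comes from $A_X$ being nonempty and bounded below, which uses only $F_X(z)\to 1$ as $z\to\infty$, $F_X(z)\to 0$ as $z\to-\infty$, and $\alpha\in(0,1)$. Right-continuity of $F_X$, and hence closedness of $A_X$, is never used in (i)--(iv). Finiteness itself does matter in (iv), because it ensures the difference $\rho^\alpha(X)-\rho^\alpha(Y)$ is well defined.
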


Let $(\Omega,\mathcal F,\mathbb P)$ be the underlying probability space on which the observed transition process is defined.
For each $N$, let $\mathcal D_N:=\{s_0,a_0,s_1,\ldots,s_{N-1},a_{N-1},s_N\}$ be the trajectory data in \assref{ass:quantile_wc}{ass:quantile_wc_1}, and set $\mathcal F_N:=\sigma(\mathcal D_N)$. For the martingale argument below, write $\mathcal F_i:=\sigma(s_0,a_0,\ldots,s_i,a_i)$ for the history before observing $s_{i+1}$, $i=0,\ldots,N-1$.
Throughout this proof, $O_p(\cdot)$, $o_p(\cdot)$, and $\xrightarrow{p}$ are understood with respect to the randomness of the observed data under $\mathbb P$ as $N\to\infty$.

For any integer $m\ge 2$ and any probability vector $p\in\mathbb R_+^m$ with $\sum_{i=1}^m p_i=1$, define
$\Sigma(p)\ :=\ \mathrm{diag}(p)-pp^\top.$
{Define the transition counts} $N(s,a,s')
:=
\sum_{i=0}^{N-1}
\mathbb I\{s_i=s,a_i=a,s_{i+1}=s'\},$ and then $N(s,a)=\sum_{s'\in\mathcal S}N(s,a,s').$
For each $(s,a)$ {with $N(s,a)>0$}, define $\widetilde P_N(s,a)(s')=\frac{N(s,a,s')}{N(s,a)}.$
{Under the uniform Dirichlet prior, the posterior parameter satisfies $\phi(s,a,s')=N(s,a,s')+1$.}

Here and below, for a real-valued posterior random variable $X_N$ and a real-valued random variable $X$ with cdf $F$, we write
\begin{align*}
X_N\mid {\mathcal F_N} \Rightarrow X \quad\text{in }\mathbb P\text{-probability}
\end{align*}
to mean conditional weak convergence in probability. More precisely, for every bounded continuous test function
$\psi:\mathbb R\to\mathbb R$,
\begin{align*}
\mathbb E[\psi(X_N)\mid {\mathcal F_N}] - \mathbb E[\psi(X)] \xrightarrow{\mathbb P}0.
\end{align*}
The conditional expectation is taken with respect to the posterior distribution given {$\mathcal F_N$}, whereas the convergence in probability is with respect to the randomness of the observed data.

Lemma~\ref{lem:dirichlet_posterior_clt} is a fixed-dimensional Dirichlet special case of the Bernstein--von Mises theorem for discrete probability distributions
\citep{boucheron2009bernstein}; we restate it in the form needed here. 

\begin{lemma}[Posterior CLT for a Dirichlet transition vector]
\label{lem:dirichlet_posterior_clt}
For $(s,a)\in\mathcal S\times\mathcal A$,
suppose $P_{s,a}\mid {\mathcal F_N} \sim \Dir\bigl(\phi(s,a)\bigr).$
Then, conditionally on {$\mathcal F_N$}, as $N(s,a)\to\infty$,
\begin{align*}
\sqrt{N(s,a)}\bigl(P_{s,a}-\widetilde P_N(s,a)\bigr) \Rightarrow \mathcal N\!\bigl(0,\Sigma(P^c_{s,a})\bigr)
\end{align*}
in $\mathbb P$-probability.
\end{lemma}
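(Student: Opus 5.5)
The statement to prove is Lemma~\ref{lem:dirichlet_posterior_clt}, the posterior CLT for a single Dirichlet transition vector. The plan is to use the gamma representation of the Dirichlet law and then apply a CLT to the count-normalized gamma variables.

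\textbf{Step 1 (gamma representation).} Conditionally on $\mathcal F_N$, write
\[
P_{s,a}(s')=\frac{G_{s'}}{\sum_{u}G_u},
\]
where the $G_{s'}\sim\mathrm{Gamma}(\phi(s,a,s'),1)$ are independent. Set $n:=N(s,a)$ and $\phi(s,a,s')=N(s,a,s')+1=:n_{s'}+1$. Each gamma variable decomposes as $G_{s'}=n_{s'}+1+\sqrt{n_{s'}+1}\,\xi_{s'}$, where $\xi_{s'}$ is standardized with mean $0$ and variance $1$.

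\textbf{Step 2 (consistency of the empirical frequencies).} We first need $\widetilde P_N(s,a)\to P^c_{s,a}$ in probability as $n\to\infty$. This follows from the martingale strong law: the increments
\[
\mathbb I\{s_i=s,a_i=a\}\bigl(\mathbb I\{s_{i+1}=s'\}-P^c_{s,a}(s')\bigr)
\]
are martingale differences with respect to $\mathcal F_i$, so their sum is $o(n)$ almost surely on $\{n\to\infty\}$.

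\textbf{Step 3 (conditional CLT for the gamma vector).} This step is handled separately for the two kinds of coordinates.

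\emph{Coordinates with $P^c_{s,a}(s')>0$.} Here $n_{s'}\to\infty$, and the gamma CLT gives $\xi_{s'}\Rightarrow\mathcal N(0,1)$. The convergence is uniform in the shape parameter along any sequence tending to infinity, for example via the Berry--Esseen bound for gamma variables viewed as sums of exponentials. Hence
\[
\frac{G_{s'}-(n_{s'}+1)}{\sqrt n}\Rightarrow \mathcal N\bigl(0,P^c_{s,a}(s')\bigr)
\]
in probability.

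\emph{Coordinates with $P^c_{s,a}(s')=0$.} Then $n_{s'}=0$, and $G_{s'}/\sqrt n\to0$ trivially.

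\textbf{Step 4 (delta method).} Let $\Gamma:=\sum_u G_u=n+S+O_p(\sqrt n)$. Apply the delta method to the map $g\mapsto g/\mathbf 1^\top g$ at the point $(n_{s'}+1)_{s'}$. This yields
\[
\sqrt n\bigl(P_{s,a}-\widehat p\bigr)\approx \frac{1}{\sqrt n}\Bigl(W-\widehat p\,\mathbf 1^\top W\Bigr),
\]
where $W$ collects the gamma fluctuations and $\widehat p=(n_{s'}+1)/(n+S)$. A Gaussian vector with independent coordinates of variances $p_{s'}$, after multiplication by the projection $(I-p\mathbf 1^\top)$, has covariance
\[
(I-p\mathbf 1^\top)\,\mathrm{diag}(p)\,(I-\mathbf 1p^\top)=\mathrm{diag}(p)-pp^\top=\Sigma(p).
\]
We take $p=P^c_{s,a}$, using Step 2 to replace $\widetilde P_N$ by $P^c_{s,a}$ in the limiting covariance.

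\textbf{Step 5 (centering shift).} Finally, $\sqrt n\bigl(\widehat p-\widetilde P_N(s,a)\bigr)=O(S/\sqrt n)\to0$, so centering at $\widetilde P_N(s,a)$ changes nothing in the limit. Slutsky's lemma, applied conditionally in probability via bounded Lipschitz test functions, gives the claim.

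The main obstacle is making the conditional weak convergence rigorous in $\mathbb P$-probability. The posterior law depends on the random counts, so we need convergence that holds uniformly over count vectors with $n_{s'}/n$ near $P^c_{s,a}(s')$. The approach is to fix $\varepsilon>0$ and restrict to the event $\{\|\widetilde P_N-P^c_{s,a}\|_\infty<\varepsilon,\ n>n_0\}$, whose probability tends to one. On that event, bound the bounded Lipschitz distance between the posterior law and $\mathcal N(0,\Sigma(P^c_{s,a}))$ by a deterministic quantity that is small in $n_0$ and $\varepsilon$. Alternatively, one can simply cite the Bernstein--von Mises theorem of \citet{boucheron2009bernstein}; its i.i.d.\ sampling assumption enters only through the consistency in Step 2, which we have established for trajectory data.
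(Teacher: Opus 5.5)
Your proof is correct, but it takes a different route from the paper's.

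\textbf{The paper's route.} The paper computes nothing directly. It invokes the Bernstein--von Mises theorem for discrete distributions on the coordinates $J_+=\{x:P^c_{s,a}(x)>0\}$, where $P^c_{s,a}$ lies in the relative interior of the simplex. It treats the null coordinates $x\in J_0$ separately: $N(s,a,x)=0$ almost surely, so $\mathbb E[P_{s,a}(x)\mid\mathcal F_N]=O(N(s,a)^{-1})$ and $\sqrt{N(s,a)}\,P_{s,a}(x)\to 0$. The two pieces are then combined.

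\textbf{Your route.} You derive the limit from the gamma representation. Your delta-method step is in fact the exact identity $P_{s,a}-\widehat p=\Gamma^{-1}(I-\widehat p\,\mathbf 1^\top)W$. Everything therefore reduces to a coordinatewise gamma CLT, $n/\Gamma\to 1$, and the computation $(I-p\mathbf 1^\top)\diag(p)(I-\mathbf 1p^\top)=\Sigma(p)$.

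\textbf{What each buys.} The paper's route is brief, and through BvM it typically gives a total-variation rather than merely weak limit. Yours is self-contained and makes explicit two things the paper leaves implicit.
\begin{enumerate}
\item The posterior depends on the trajectory only through the count vector. The Markov dependence of the data therefore enters solely through the consistency $\widetilde P_N(s,a)\to P^c_{s,a}$, which is your martingale SLLN step. The cited BvM theorem, by contrast, is formulated for i.i.d.\ samples.
\item The null coordinates are handled jointly with the others: they are fixed $\mathrm{Gamma}(1,1)$ variables contributing $O_p(1)$ to $\Gamma$. You never need to glue a limit on $J_+$ (where the subvector of $P_{s,a}$ does not sum to one) to marginal degenerate limits on $J_0$.
\end{enumerate}

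\textbf{Your ``main obstacle'' is lighter than you suggest.} The conditional law of $\sqrt{n}\,(P_{s,a}-\widetilde P_N(s,a))$ given $\mathcal F_N$ is a deterministic function of the counts. It therefore suffices to prove weak convergence along every deterministic count sequence with $n\to\infty$ and $n_{s'}/n\to P^c_{s,a}(s')$ for all $s'$. Step~2 then gives conditional convergence almost surely, hence in $\mathbb P$-probability. No uniform Berry--Esseen bound is needed, though one is directly available here anyway, since the shapes $n_{s'}+1$ are integers under the uniform prior.
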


\begin{proof}
    The Bernstein--von Mises (BvM) theorem applies directly to the coordinates
\(J_+:=\{x:P^c_{s,a}(x)>0\}\), on which the true transition vector lies in the
relative interior of the simplex \citep{boucheron2009bernstein}. For \(x\in J_0:=\{x:P^c_{s,a}(x)=0\}\), no
transition to \(x\) is observed almost surely, so \(N(s,a,x)=0\). Since the prior
parameters are fixed and positive, $\mathbb E[P_{s,a}(x)\mid\mathcal F_N]
=
O(N(s,a)^{-1}),$
and hence \(P_{s,a}(x)=O_p(N(s,a)^{-1})\). Therefore
\[
\sqrt{N(s,a)}
\bigl(P_{s,a}(x)-\widetilde P_N(s,a)(x)\bigr)
=
\sqrt{N(s,a)}P_{s,a}(x)
\xrightarrow{p}0.
\]
Combining the BvM limit on \(J_+\) with this degenerate limit on \(J_0\)
gives the stated Gaussian limit with covariance
\(\Sigma(P^c_{s,a})\), possibly singular.

\end{proof}

In the following lemma, we show that conditional weak convergence of random variables implies convergence of the corresponding conditional quantiles.
\begin{lemma}
\label{lem:conditional_weak_to_quantile}
Let $(Z_N)_{N\ge1}$ be real-valued random variables on $(\Omega,\mathcal F,\mathbb P)$, and let
$(\mathcal F_N)_{N\ge1}$ be a sequence of sub-$\sigma$-fields of $\mathcal F$.
For each $N$, define the conditional cdf of $Z_N$ given $\mathcal F_N$ by
\begin{align*}
F_N(x):= {\mathbb P(Z_N\le x\mid \mathcal F_N)},\qquad x\in\mathbb R,
\end{align*}
Assume that there exists a real-valued random variable $Z$ with cdf $F$ such that $Z_N\mid {\mathcal F_N} \Rightarrow Z
\text{ in }\mathbb P\text{-probability}$.
%
For $\alpha\in(0,1)$, define
\begin{align*}
q_N:=\inf\{x\in\mathbb R:F_N(x)\ge \alpha\}, \qquad q_\alpha:=\inf\{x\in\mathbb R:F(x)\ge \alpha\}.
\end{align*}
If $F$ is continuous and strictly increasing on a neighborhood of $q_\alpha$, then
\begin{align*}
q_N\xrightarrow{p}q_\alpha.
\end{align*}
In particular, if $F=\Phi$ is the standard normal cdf, then
$q_\alpha=\Phi^{-1}(\alpha)=z_\alpha$, and hence $q_N\xrightarrow{p}z_\alpha$.
\end{lemma}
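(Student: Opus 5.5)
We prove that $q_N$ eventually lies in any small interval around $q_\alpha$ with probability tending to one, by comparing the conditional cdf $F_N$ with $F$ at two fixed points. Fix $\varepsilon>0$ small enough that $F$ is continuous and strictly increasing on $[q_\alpha-\varepsilon,q_\alpha+\varepsilon]$. Strict monotonicity and continuity, together with the definition of $q_\alpha$ as the left quantile, give
\[
F(q_\alpha-\varepsilon)<\alpha<F(q_\alpha+\varepsilon).
\]
Let $\eta:=\min\{\alpha-F(q_\alpha-\varepsilon),\,F(q_\alpha+\varepsilon)-\alpha\}>0$. It then suffices to show that
\[
F_N(q_\alpha\pm\varepsilon)\xrightarrow{p}F(q_\alpha\pm\varepsilon).
\]
On the event $\{|F_N(q_\alpha-\varepsilon)-F(q_\alpha-\varepsilon)|<\eta\}\cap\{|F_N(q_\alpha+\varepsilon)-F(q_\alpha+\varepsilon)|<\eta\}$ we have $F_N(q_\alpha-\varepsilon)<\alpha$ and $F_N(q_\alpha+\varepsilon)\ge\alpha$. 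Since $F_N$ is nondecreasing, this forces $q_N\in(q_\alpha-\varepsilon,q_\alpha+\varepsilon]$. The probability of this event tends to one, so $\mathbb P(|q_N-q_\alpha|>\varepsilon)\to0$.

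The main step is to pass from convergence in probability of $\mathbb E[\psi(Z_N)\mid\mathcal F_N]$, for bounded continuous $\psi$, to pointwise convergence of the conditional cdf at a continuity point $x$ of $F$. This is the conditional, in-probability version of the portmanteau theorem.

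For a fixed $x$ and $h>0$, sandwich the indicator $\mathbb I\{\cdot\le x\}$ between two piecewise-linear continuous functions:
\begin{itemize}
\item $\psi_h^-$ equals $1$ on $(-\infty,x-h]$ and $0$ on $[x,\infty)$;
\item $\psi_h^+$ equals $1$ on $(-\infty,x]$ and $0$ on $[x+h,\infty)$.
\end{itemize}
Then
\[
\mathbb E[\psi_h^-(Z_N)\mid\mathcal F_N]\le F_N(x)\le\mathbb E[\psi_h^+(Z_N)\mid\mathcal F_N].
\]
The two outer terms converge in probability to $\mathbb E\psi_h^\pm(Z)$. These limits lie in $[F(x-h),F(x+h)]$, and by continuity of $F$ at $x$ this interval is within any prescribed $\eta'$ of $F(x)$ once $h$ is small. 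Hence $\mathbb P(|F_N(x)-F(x)|>2\eta')\to0$, and $F_N(x)\xrightarrow{p}F(x)$.

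Apply this with $x=q_\alpha\pm\varepsilon$; both points are continuity points because they lie in the neighborhood where $F$ is continuous. This completes the argument above.

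For the final claim, the standard normal cdf $\Phi$ is continuous and strictly increasing on all of $\mathbb R$, so $q_\alpha=\Phi^{-1}(\alpha)=z_\alpha$ and the hypotheses hold.
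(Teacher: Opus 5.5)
Your proof is correct and follows essentially the same route as the paper. You sandwich $\mathbb I\{t\le x\}$ between the same two piecewise-linear test functions to get $F_N(x)\xrightarrow{p}F(x)$ at continuity points, then apply this at $q_\alpha\pm\varepsilon$, where $F(q_\alpha-\varepsilon)<\alpha<F(q_\alpha+\varepsilon)$, to trap $q_N$ in $(q_\alpha-\varepsilon,q_\alpha+\varepsilon]$ with probability tending to one; the only differences are that you make the margin $\eta$ explicit and present the two steps in the opposite order.
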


\begin{proof}
Fix a continuity point $x$ of $F$ and $\delta>0$. For $\varepsilon>0$, define
\begin{align*}
\psi^-_{x,\varepsilon}(t):=
\begin{cases}
1, & t\le x-\varepsilon,\\
1-\dfrac{t-(x-\varepsilon)}{\varepsilon}, & x-\varepsilon<t<x,\\
0, & t\ge x,
\end{cases}
\qquad
\psi^+_{x,\varepsilon}(t):=
\begin{cases}
1, & t\le x,\\
1-\dfrac{t-x}{\varepsilon}, & x<t<x+\varepsilon,\\
0, & t\ge x+\varepsilon.
\end{cases}
\end{align*}
Then $\psi^-_{x,\varepsilon}\le \mathbf 1\{t\le x\}\le \psi^+_{x,\varepsilon}$, so
\begin{align*}
{\mathbb E[\psi^-_{x,\varepsilon}(Z_N)\mid \mathcal F_N] \le F_N(x) \le \mathbb E[\psi^+_{x,\varepsilon}(Z_N)\mid \mathcal F_N].}
\end{align*}
By the assumed conditional weak convergence,
\begin{align*}
{\mathbb E[\psi^\pm_{x,\varepsilon}(Z_N)\mid \mathcal F_N] \xrightarrow{p} \mathbb E[\psi^\pm_{x,\varepsilon}(Z)].}
\end{align*}
Since $F$ is continuous at $x$,
$\mathbb E[\psi^-_{x,\varepsilon}(Z)]\uparrow F(x)$ and
$\mathbb E[\psi^+_{x,\varepsilon}(Z)]\downarrow F(x)$ as $\varepsilon\downarrow0$.
Choose $\varepsilon>0$ so that
\begin{align*}
F(x)-\frac{\delta}{2} \;\le\; \mathbb E[\psi^-_{x,\varepsilon}(Z)] \;\le\; F(x) \;\le\; \mathbb E[\psi^+_{x,\varepsilon}(Z)] \;\le\; F(x)+\frac{\delta}{2}.
\end{align*}
Then
\begin{align*}
\mathbb P\!\left(F_N(x)<F(x)-\delta\right) \le \mathbb P\!\left( \mathbb E[\psi^-_{x,\varepsilon}(Z_N)\mid\mathcal F_N] < \mathbb E[\psi^-_{x,\varepsilon}(Z)]-\frac{\delta}{2} \right)\to0,
\end{align*}
and similarly,
\begin{align*}
\mathbb P\!\left(F_N(x)>F(x)+\delta\right) \le \mathbb P\!\left( \mathbb E[\psi^+_{x,\varepsilon}(Z_N)\mid\mathcal F_N] > \mathbb E[\psi^+_{x,\varepsilon}(Z)]+\frac{\delta}{2} \right)\to0.
\end{align*}
Hence
\begin{align*}
F_N(x)\xrightarrow{p}F(x).
\end{align*}
For $\varepsilon>0$ small enough that
$F(q_\alpha-\varepsilon)<\alpha<F(q_\alpha+\varepsilon)$, which is possible because
$F$ is continuous and strictly increasing on a neighborhood of $q_\alpha$, we have
\begin{align*}
F_N(q_\alpha-\varepsilon)\xrightarrow{p}F(q_\alpha-\varepsilon), \qquad F_N(q_\alpha+\varepsilon)\xrightarrow{p}F(q_\alpha+\varepsilon).
\end{align*}
Hence, $\mathbb P(F_N(q_\alpha-\varepsilon)<\alpha<F_N(q_\alpha+\varepsilon))\rightarrow 1.$
By the definition of the left quantile, this implies
$\mathbb P(q_\alpha-\varepsilon<q_N\le q_\alpha+\varepsilon)\rightarrow1$. Therefore
$q_N\xrightarrow{p}q_\alpha$.
\end{proof}

\begin{lemma}[Posterior quantile expansion for bounded linear forms]
\label{lem:quantile_expansion}
For $(s,a)\in\mathcal S\times\mathcal A$, suppose $N(s,a)\to\infty$. Let $v_N$ be an $\mathcal F_N$-measurable random vector such that
$\|v_N\|_\infty\le B$ almost surely for some constant $B<\infty$ and assume that \(v_N\xrightarrow{p}v\) for some deterministic vector \(v\in\mathbb R^S\).
Define
{$\sigma^{2}(s,a;v_N):=v_N^\top \Sigma(P^c_{s,a})v_N$}.
Then
\begin{align*}
\rho_{\phi(s,a)}^\alpha\!\bigl(P^\top v_N\bigr) = \widetilde P_N(s,a)^\top v_N + \frac{z_\alpha}{\sqrt{N(s,a)}}\,\sigma(s,a;v_N) + o_p\bigl(N(s,a)^{-1/2}\bigr).
\end{align*}
\end{lemma}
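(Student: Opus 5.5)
We reduce the statement to the posterior CLT of Lemma~\ref{lem:dirichlet_posterior_clt} combined with the quantile-convergence result of Lemma~\ref{lem:conditional_weak_to_quantile}, using the translation and scale equivariance of Lemma~\ref{lem:quantile_properties}. Write $n:=N(s,a)$, $\widetilde P:=\widetilde P_N(s,a)$, $\Sigma:=\Sigma(P^c_{s,a})$, and $\sigma_N:=\sigma(s,a;v_N)$. Because $v_N$ is $\mathcal F_N$-measurable, it is a constant under the posterior given $\mathcal F_N$. So is $\widetilde P^\top v_N$. Lemma~\ref{lem:quantile_properties}(i),(ii) then give
\[
\rho^\alpha_{\phi(s,a)}\bigl(P^\top v_N\bigr)=\widetilde P^\top v_N+\frac{1}{\sqrt n}\,q_N,
\qquad q_N:=\rho^\alpha\Bigl(\sqrt n\,(P-\widetilde P)^\top v_N \,\Big|\, \mathcal F_N\Bigr).
\]
It therefore suffices to show $q_N-z_\alpha\sigma_N\xrightarrow{p}0$.

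\textbf{Step 1 (replace $v_N$ by $v$).} Set $W_N:=\sqrt n(P-\widetilde P)$. The error from swapping $v_N$ for $v$ is $|W_N^\top(v_N-v)|\le \|W_N\|_1\|v_N-v\|_\infty$. The Dirichlet posterior covariance is $O(1/n)$ entrywise, uniformly given $\mathcal F_N$ with high probability. Hence $\mathbb E[\|W_N\|_1\mid\mathcal F_N]=O_p(1)$. Since $\|v_N-v\|_\infty\xrightarrow{p}0$, conditional Markov gives $\mathbb P(|W_N^\top(v_N-v)|>\eta\mid\mathcal F_N)\xrightarrow{p}0$ for every $\eta>0$. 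By the continuous-mapping theorem applied to Lemma~\ref{lem:dirichlet_posterior_clt}, $W_N^\top v\mid\mathcal F_N\Rightarrow\mathcal N(0,v^\top\Sigma v)$ in probability. A conditional Slutsky argument, run through the bounded Lipschitz test functions of Lemma~\ref{lem:conditional_weak_to_quantile}, then shows that $W_N^\top v_N\mid\mathcal F_N\Rightarrow \mathcal N(0,\sigma^2)$ in probability, where $\sigma^2:=v^\top\Sigma v$. Note that $\sigma_N\xrightarrow{p}\sigma$ by continuity.

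\textbf{Step 2 (quantile convergence, with a degenerate case).} If $\sigma>0$, apply Lemma~\ref{lem:conditional_weak_to_quantile} to $Z_N:=W_N^\top v_N/\sigma$, whose limit cdf is $\Phi$. This gives $q_N/\sigma\xrightarrow{p}z_\alpha$, hence $q_N-z_\alpha\sigma_N\xrightarrow{p}0$. If $\sigma=0$, the conditional law of $W_N^\top v_N$ converges in probability to $\delta_0$. For every $\eta>0$, both $\mathbb P(W_N^\top v_N\le -\eta\mid\mathcal F_N)$ and $\mathbb P(W_N^\top v_N>\eta\mid\mathcal F_N)$ then tend to $0$ in probability. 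By the definition of the left quantile this forces $|q_N|\le\eta$ with probability tending to one, while $z_\alpha\sigma_N\xrightarrow{p}0$. Multiplying by $n^{-1/2}$ yields the claimed $o_p(n^{-1/2})$ remainder.

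\textbf{Main obstacle.} The delicate point is Step 1: Lemma~\ref{lem:dirichlet_posterior_clt} concerns a fixed vector, whereas here the direction $v_N$ is data-dependent. Conditional weak convergence must be kept in the ``in $\mathbb P$-probability'' form throughout. I would handle this by bounding $\mathbb E[\|W_N\|_2^2\mid\mathcal F_N]=n\sum_{s'}\mathrm{Var}(P(s')\mid\mathcal F_N)+n\|\mathbb E[P\mid\mathcal F_N]-\widetilde P\|^2$ explicitly, using the Dirichlet moment formulas with parameters $N(s,a,s')+1$. This bound is $O(1)$ deterministically, which makes the Slutsky step clean. The a.s. bound $\|v_N\|_\infty\le B$ is used only to ensure $\sigma_N$ is bounded and converges to $\sigma$.
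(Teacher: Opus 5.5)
Your proof is correct and follows essentially the paper's route: quantile equivariance reduces the claim to the posterior quantile of the rescaled linear form, Lemma~\ref{lem:dirichlet_posterior_clt} together with Lemma~\ref{lem:conditional_weak_to_quantile} handles the case $\sigma(s,a;v)>0$, and the degenerate case $\sigma(s,a;v)=0$ is treated separately. The only difference is that you explicitly justify replacing the fixed direction $v$ by the data-dependent $v_N$, using the deterministic Dirichlet second-moment bound and a conditional Slutsky step, and you normalize by the deterministic $\sigma$ rather than $\sigma(s,a;v_N)$; the paper simply attributes this step to the continuous mapping theorem.
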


\begin{proof}
Conditionally on $\mathcal F_N$, the vector $v_N$ is deterministic.
If \(\sigma(s,a;v)=0\), then \(v\) is constant on the support of \(P^c_{s,a}\), and Lemma~\ref{lem:dirichlet_posterior_clt} together with \(v_N\xrightarrow{p}v\) implies \(P^\top v_N-\widetilde P_N(s,a)^\top v_N=o_p(N(s,a)^{-1/2})\); moreover \(\sigma(s,a;v_N)\to0\), so the variance term is also \(o_p(N(s,a)^{-1/2})\).
Thus it remains to consider the case \(\sigma(s,a;v)>0\).

By Lemma~\ref{lem:dirichlet_posterior_clt} and the continuous mapping theorem,
\begin{align*}
Z_N:= \frac{\sqrt{N(s,a)}\bigl(P^\top v_N-\widetilde P_N(s,a)^\top v_N\bigr)} {\sigma(s,a;v_N)} \mid \mathcal F_N \Rightarrow \mathcal N(0,1)
\end{align*}
in $\mathbb P$-probability.
Let $F_N(x):=\mathbb P(Z_N\le x\mid \mathcal F_N)$, $x\in\mathbb R$, denote the posterior conditional cdf of $Z_N$ given $\mathcal F_N$.
Applying Lemma~\ref{lem:conditional_weak_to_quantile} with $F=\Phi$, we obtain
\begin{align*}
\rho^\alpha(Z_N)=z_\alpha+o_p(1).
\end{align*}
Now write
\begin{align*}
P^\top v_N = \widetilde P_N(s,a)^\top v_N + \frac{\sigma(s,a;v_N)}{\sqrt{N(s,a)}}\,Z_N.
\end{align*}
Using Lemma~\ref{lem:quantile_properties}(i)--(ii),
\begin{align*}
\rho_{\phi(s,a)}^\alpha\!\bigl(P^\top v_N\bigr) = \widetilde P_N(s,a)^\top v_N + \frac{\sigma(s,a;v_N)}{\sqrt{N(s,a)}} \bigl(z_\alpha+o_p(1)\bigr).
\end{align*}
Finally, since
$\sigma(s,a;v_N)^2=v_N^\top\Sigma(P^c_{s,a})v_N$
$\le \|v_N\|_\infty^2\le B^2$ almost surely,
we have $\sigma(s,a;v_N)=O_p(1)$, and therefore
\begin{align*}
\frac{\sigma(s,a;v_N)}{\sqrt{N(s,a)}}\,o_p(1) = o_p\bigl(N(s,a)^{-1/2}\bigr).
\end{align*}
This proves the claim.
\end{proof}

\begin{lemma}[Martingale CLT for empirical transition frequencies]
\label{lem:posterior_mean_clt}
{Under Assumption~\ref{ass:quantile_wc},}
\begin{align*}
\left( \sqrt N \bigl(\widetilde P_N(s,\pi(s))-P^c_{s,\pi(s)}\bigr)^\top V^\pi \right)_{s\in\mathcal S} \Rightarrow \mathcal N\!\left( 0,\, \diag\bigl(\sigma_\pi^2(s)\bigr)_{s\in\mathcal S} \right),
\end{align*}
{where}
\begin{align*}
\sigma_\pi^2(s) = \frac{1}{\bar n_s} (V^\pi)^\top \Sigma(P^c_{s,\pi(s)}) V^\pi.
\end{align*}
{Moreover, for each $s\in\mathcal S$, $\|\widetilde P_N(s,\pi(s))-P^c_{s,\pi(s)}\|_\infty=O_p(N(s,\pi(s))^{-1/2})=O_p(N^{-1/2})$.}
\end{lemma}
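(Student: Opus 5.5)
We would prove Lemma~\ref{lem:posterior_mean_clt} by a martingale central limit theorem applied to a single vector-valued martingale built from the on-policy trajectory. For each $s\in\mathcal S$ write $a_s:=\pi(s)$ and define the increments
\[
\xi_{i}(s):=\mathbb I\{s_i=s\}\bigl(V^\pi(s_{i+1})-(P^c_{s,a_s})^\top V^\pi\bigr),\qquad i=0,\ldots,N-1,
\]
and $M_N(s):=\sum_{i=0}^{N-1}\xi_i(s)$. Because $a_i=\pi(s_i)$, we have the identity $N(s,a_s)\bigl(\widetilde P_N(s,a_s)-P^c_{s,a_s}\bigr)^\top V^\pi=M_N(s)$. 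Since $s_{i+1}\mid\mathcal F_i\sim P^c_{s_i,a_i}$, each $\xi_i(s)$ is $\mathcal F_{i+1}$-measurable and satisfies $\mathbb E[\xi_i(s)\mid\mathcal F_i]=0$. Hence $(M_N(s))_{s\in\mathcal S}$ is a vector martingale, and its increments are bounded by $2\|V^\pi\|_\infty\le 2/(1-\gamma)$.

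The next step is to compute the conditional covariance. For $s\neq s'$ the indicators $\mathbb I\{s_i=s\}$ and $\mathbb I\{s_i=s'\}$ are disjoint, so the cross terms $\mathbb E[\xi_i(s)\xi_i(s')\mid\mathcal F_i]$ vanish identically. For the diagonal terms,
\[
\mathbb E[\xi_i(s)^2\mid\mathcal F_i]=\mathbb I\{s_i=s\}\,(V^\pi)^\top\Sigma(P^c_{s,a_s})V^\pi .
\]
Summing over $i$ and dividing by $N$ therefore gives $\frac{N(s,a_s)}{N}(V^\pi)^\top\Sigma(P^c_{s,a_s})V^\pi$. By \assref{ass:quantile_wc}{ass:quantile_wc_1}, this converges a.s. to $\bar n_s (V^\pi)^\top\Sigma(P^c_{s,a_s})V^\pi$.

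The conditional Lindeberg condition holds trivially because the increments are bounded and we normalize by $\sqrt N$. The multivariate martingale CLT (for instance the Hall--Heyde version, applied through the Cramér--Wold device) then yields
\[
N^{-1/2}M_N\Rightarrow\mathcal N\bigl(0,\diag(\bar n_s(V^\pi)^\top\Sigma(P^c_{s,a_s})V^\pi)\bigr).
\]

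To pass from $M_N$ to the target, we write
\[
\sqrt N(\widetilde P_N-P^c)^\top V^\pi=\frac{N}{N(s,a_s)}\cdot N^{-1/2}M_N(s).
\]
Since $N/N(s,a_s)\to 1/\bar n_s$ a.s. and $\bar n_s>0$, Slutsky's lemma gives the limit covariance $\bar n_s^{-2}\cdot\bar n_s(V^\pi)^\top\Sigma V^\pi=\sigma^2_\pi(s)$ on the diagonal and zero off the diagonal. The conditional variance $(V^\pi)^\top\Sigma(P^c_{s,a_s})V^\pi$ equals $\operatorname{Var}_{s'\sim P^c_{s,\pi(s)}}[V^\pi(s')]$, which matches the definition in Theorem~\ref{thm:quantile_weak}.

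For the sup-norm rate, we repeat the same argument coordinatewise with $V^\pi$ replaced by each indicator vector $e_x$, $x\in\mathcal S$. This shows that $\sqrt N(\widetilde P_N(s,a_s)-P^c_{s,a_s})(x)$ is tight, hence $O_p(1)$. A finite maximum over $x$ then gives $\|\widetilde P_N(s,a_s)-P^c_{s,a_s}\|_\infty=O_p(N^{-1/2})$, and the rate $O_p(N(s,a_s)^{-1/2})$ follows because $N(s,a_s)/N\to\bar n_s>0$. A simpler alternative for this step is a direct second-moment bound $\mathbb E[M_N(e_x)^2]\le N$ combined with Chebyshev's inequality.

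The main obstacle is verifying the conditional-variance convergence hypothesis of the martingale CLT with the correct random normalization. We handle it by normalizing with the deterministic $\sqrt N$ and absorbing the random factor $N/N(s,a_s)$ afterwards through Slutsky's lemma, which is exactly where the a.s. ratio condition in Assumption~\ref{ass:quantile_wc} is used.
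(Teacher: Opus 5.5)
Your proposal is correct and follows essentially the same route as the paper. It uses the same ingredients:
\begin{itemize}
\item the indicator-weighted martingale differences;
\item the diagonal conditional covariance $N(s,\pi(s))(V^\pi)^\top\Sigma(P^c_{s,\pi(s)})V^\pi$ normalized by $N$;
\item the bounded-increment Lindeberg check and the multivariate martingale CLT;
\item Slutsky's lemma with the factor $N/N(s,\pi(s))$;
\item the coordinatewise indicator argument for the sup-norm rate.
\end{itemize}
Your Chebyshev alternative ($\mathbb E[M_N(e_x)^2]\le N$) is a slightly more elementary way to obtain the $O_p(N^{-1/2})$ rate than reusing the martingale CLT, but the overall argument is the same.
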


\begin{proof}
For each $s\in\mathcal S$, define
\begin{align*}
 D_{i+1}^{s} := \mathbb I\{s_i=s,a_i=\pi(s)\} \left( V^\pi(s_{i+1}) - (P^c_{s,\pi(s)})^\top V^\pi \right), \qquad i=0,\ldots,N-1. 
\end{align*}
By \assref{ass:quantile_wc}{ass:quantile_wc_1}, $\mathbb E_{s_{i+1}\sim P^c_{s,\pi(s)}}[D_{i+1}^{s}\mid\mathcal F_i]=0$. Hence $M_N^s:=\sum_{i=0}^{N-1}D_{i+1}^{s}$ is a sum of martingale differences.
For $s,\tilde s\in\mathcal S$, the conditional covariance satisfies
\begin{align*}
 \sum_{i=0}^{N-1} \mathbb E\!\left[ D_{i+1}^{s}D_{i+1}^{\tilde s} \mid \mathcal F_i \right] = \mathbb I\{s=\tilde s\} N(s,\pi(s)) (V^\pi)^\top\Sigma(P^c_{s,\pi(s)})V^\pi. 
\end{align*}
Indeed, if $s\neq \tilde s$, the two indicators cannot both be one at the same time. Dividing by $N$ and using \assref{ass:quantile_wc}{ass:quantile_wc_1},
\begin{align*}
 \frac{1}{N} \sum_{i=0}^{N-1} \mathbb E\!\left[ D_{i+1}^{s}D_{i+1}^{\tilde s} \mid \mathcal F_i \right] \xrightarrow{\mathrm{a.s.}} \mathbb I\{s=\tilde s\} \bar n_s (V^\pi)^\top\Sigma(P^c_{s,\pi(s)})V^\pi. 
\end{align*}
Since the state space is finite and $V^\pi$ is bounded, the increments are uniformly bounded, so the conditional Lindeberg condition holds. The multivariate martingale CLT gives
\begin{align*}
 \left( \frac{M_N^s}{\sqrt N} \right)_{s\in\mathcal S} \Rightarrow \mathcal N\!\left( 0,\, \diag\left( \bar n_s (V^\pi)^\top\Sigma(P^c_{s,\pi(s)})V^\pi \right)_{s\in\mathcal S} \right). 
\end{align*}
Finally, note that $ M_N^s = N(s,\pi(s)) \bigl(\widetilde P_N(s,\pi(s))-P^c_{s,\pi(s)}\bigr)^\top V^\pi. $
Therefore,
\begin{align*}
 \sqrt N \bigl(\widetilde P_N(s,\pi(s))-P^c_{s,\pi(s)}\bigr)^\top V^\pi = \frac{N}{N(s,\pi(s))} \frac{M_N^s}{\sqrt N}.
\end{align*}
Slutsky's theorem and \assref{ass:quantile_wc}{ass:quantile_wc_1} imply the stated joint convergence.

It remains to show the rate bound. For each $s,x\in\mathcal S$, define
\begin{align*}
 D_{i+1}^{s,x} := \mathbb I\{s_i=s,a_i=\pi(s)\} \left( \mathbb I\{s_{i+1}=x\} - P^c(x\mid s,\pi(s)) \right). 
\end{align*}
The same martingale argument gives $\sum_{i=0}^{N-1}D_{i+1}^{s,x}=O_p(N^{1/2})$. Since
\begin{align*}
 \widetilde P_N(s,\pi(s))(x)-P^c(x\mid s,\pi(s)) = \frac{1}{N(s,\pi(s))} \sum_{i=0}^{N-1}D_{i+1}^{s,x}, 
\end{align*}
and $N(s,\pi(s))\asymp N$ by \assref{ass:quantile_wc}{ass:quantile_wc_1}, we obtain $\widetilde P_N(s,\pi(s))(x)-P^c(x\mid s,\pi(s))=O_p(N^{-1/2})$. The state space is finite, so the same bound holds in sup norm.
\end{proof}

Next, we are ready to prove Theorem \ref{thm:quantile_weak}.
\begin{proof}[Proof of Theorem~\ref{thm:quantile_weak}]
Let $\Delta_N:={V_{\phi,\alpha}^{\pi}} -V^\pi.$
For each state $s\in\mathcal S$, we define the posterior quantile map, for notational simplicity, $q_{N,s}(v) := \rho_{\phi(s,\pi(s))}^{\alpha}\!\bigl(P^\top v\bigr).$

Recall that ${V_{\phi,\alpha}^{\pi}} $ and $V^\pi$ are the unique fixed points of \eqref{eq:VR-pi} and \eqref{eq:V-true-pi}, respectively. Hence we have the two fixed-point equations:
\begin{align*}
{V_{\phi,\alpha}^{\pi}} (s) = r(s,\pi(s))+\gamma q_{N,s}({V_{\phi,\alpha}^{\pi}} ), \qquad V^\pi(s) = r(s,\pi(s))+\gamma (P^c_{s,\pi(s)})^\top V^\pi.
\end{align*}
For each $s\in\mathcal S$,
\begin{align*}
\Delta_N(s) &= \gamma\Big(q_{N,s}({V_{\phi,\alpha}^{\pi}} )-(P^c_{s,\pi(s)})^\top V^\pi\Big)\\
&= \gamma\Big(q_{N,s}(V^\pi)-(P^c_{s,\pi(s)})^\top V^\pi\Big) +\gamma (P^c_{s,\pi(s)})^\top \Delta_N\\
&\quad +\gamma\Big( q_{N,s}({V_{\phi,\alpha}^{\pi}} )-q_{N,s}(V^\pi) -(P^c_{s,\pi(s)})^\top \Delta_N \Big).
\end{align*}
Therefore, componentwise,
\begin{equation}\label{eq:main_identity_quantile_clean}
\bigl((I-\gamma P_\pi^c)\Delta_N\bigr)(s)
= \underbrace{\gamma\Big(
q_{N,s}(V^\pi)-(P^c_{s,\pi(s)})^\top V^\pi
\Big)}_{B_N(s)}+\underbrace{\gamma\Big(
q_{N,s}({V_{\phi,\alpha}^{\pi}} )-q_{N,s}(V^\pi)
-(P^c_{s,\pi(s)})^\top \Delta_N
\Big)}_{R_N(s)}.
\end{equation}

We next identify the weak limit of $B_N$.
Fix $s\in\mathcal S$ and $a=\pi(s)$.
Define $\widetilde\sigma_\pi^{2}(s)
:=
(V^\pi)^\top \Sigma(P^c_{s,a})V^\pi.$
Lemma~\ref{lem:quantile_expansion} with $v_N\equiv V^\pi$ yields
\begin{equation}\label{eq:q_at_Vpi_expansion}
q_{N,s}(V^\pi) = { \widetilde P_N(s,a)^\top V^\pi + \frac{z_\alpha}{\sqrt{N(s,a)}}\,\widetilde\sigma_\pi(s) } + o_p\bigl(N(s,a)^{-1/2}\bigr),
\end{equation}

{Combining \eqref{eq:q_at_Vpi_expansion} with \assref{ass:quantile_wc}{ass:quantile_wc_1},}
\begin{align*}
\sqrt N \Big( q_{N,s}(V^\pi)-(P^c_{s,\pi(s)})^\top V^\pi \Big) = \sqrt N \bigl(\widetilde P_N(s,\pi(s))-P^c_{s,\pi(s)}\bigr)^\top V^\pi + z_\alpha \sqrt{\frac{N}{N(s,\pi(s))}}\, \widetilde\sigma_\pi(s) + o_p(1).
\end{align*}
By Lemma~\ref{lem:posterior_mean_clt}, the first term on the right-hand side converges jointly to a centered normal vector. Since $N(s,\pi(s))/N\xrightarrow{\mathrm{a.s.}}\bar n_s$, Slutsky's theorem gives
\begin{align*}
\left( \sqrt N \Big( q_{N,s}(V^\pi)-(P^c_{s,\pi(s)})^\top V^\pi \Big) \right)_{s\in\mathcal S} \Rightarrow \mathcal N\!\left( \lambda_\pi,\, \diag\bigl(\sigma_\pi^2(s)\bigr)_{s\in\mathcal S} \right),
\end{align*}
{where}
\begin{align*}
\sigma_\pi^2(s) = \frac{(\widetilde\sigma_\pi(s))^2}{\bar n_s} = \frac{1}{\bar n_s} (V^\pi)^\top \Big( \diag(P^c_{s,\pi(s)}) - P^c_{s,\pi(s)}(P^c_{s,\pi(s)})^\top \Big)V^\pi, \qquad \lambda_\pi(s):=z_\alpha\sigma_\pi(s).
\end{align*}
{Therefore,}
\begin{equation}\label{eq:BN_joint_limit}
\sqrt N\,B_N \Rightarrow \mathcal N\!\Bigl( \gamma\lambda_\pi, \diag\bigl((\gamma\sigma_\pi)^2\bigr) \Bigr).
\end{equation}

We now derive the rate of $\Delta_N$ itself.
From the fixed-point equations {\eqref{eq:VR-pi} and \eqref{eq:V-true-pi}},
\begin{align*}
\Delta_N(s) = \gamma\Big( q_{N,s}({V_{\phi,\alpha}^{\pi}} )-q_{N,s}(V^\pi) \Big) + B_N(s).
\end{align*}
Taking sup norms and using Lemma~\ref{lem:quantile_properties}(iv), $\|\Delta_N\|_\infty
\le
\gamma\|\Delta_N\|_\infty+\|B_N\|_\infty.$
Hence $\|\Delta_N\|_\infty
\le
\frac{1}{1-\gamma}\|B_N\|_\infty.$
Since \eqref{eq:BN_joint_limit} implies $\sqrt N\,B_N$ is tight,
\begin{align}
\label{eq:DeltaN_rate_bound} \|\Delta_N\|_\infty=O_p(N^{-1/2}).
\end{align}

It remains to show that the remainder $R_N$ is negligible. 
Fix again $s\in\mathcal S$ and set $a=\pi(s)$.
Define, for any $v\in\mathbb R^{S}$, $\sigma^2(s,a;v):=v^\top \Sigma(P^c_{s,a})v$.
By \eqref{eq:DeltaN_rate_bound}, \(V^\pi_{\phi,\alpha}\xrightarrow{p}V^\pi\), so the convergence condition in Lemma~\ref{lem:quantile_expansion} is satisfied.
Applying Lemma~\ref{lem:quantile_expansion} twice, first with $v_N={V_{\phi,\alpha}^{\pi}}$,
then with $v_N\equiv V^\pi,$ we have 
\begin{align}
q_{N,s}({V_{\phi,\alpha}^{\pi}} )-q_{N,s}(V^\pi) &= \widetilde P_N(s,a)^\top \Delta_N \notag\\
&\quad +\frac{z_\alpha}{\sqrt{N(s,a)}} \Big( \sigma(s,a;{V_{\phi,\alpha}^{\pi}} ) - \sigma(s,a;V^\pi) \Big) + o_p\bigl(N(s,a)^{-1/2}\bigr). \label{eq:q_difference_clean}
\end{align}
Substituting \eqref{eq:q_difference_clean} into \eqref{eq:main_identity_quantile_clean},
\begin{align}
R_N(s) &= \gamma\bigl(\widetilde P_N(s,a)-P^c_{s,a}\bigr)^\top \Delta_N \notag\\
&\quad +\frac{\gamma z_\alpha}{\sqrt{N(s,a)}} \Big( \sigma(s,a;{V_{\phi,\alpha}^{\pi}} ) - \sigma(s,a;V^\pi) \Big) + o_p\bigl(N(s,a)^{-1/2}\bigr). \label{eq:RN_expansion_clean}
\end{align}

We estimate the middle term in \eqref{eq:RN_expansion_clean}.
Let $A^c(s,a):=\Sigma(P^c_{s,a})$.
Since $A^c(s,a)$ is positive semidefinite, $\sigma(s,a;v)=\|A^c(s,a)^{1/2}v\|_2.$
Hence,
\begin{align*}
&\Big| \sigma(s,a;{V_{\phi,\alpha}^{\pi}} ) - \sigma(s,a;V^\pi) \Big|\\
&\qquad = \Big| \|A^c(s,a)^{1/2}{V_{\phi,\alpha}^{\pi}} \|_2 - \|A^c(s,a)^{1/2}V^\pi\|_2 \Big|\\
&\qquad \le \|A^c(s,a)^{1/2}({V_{\phi,\alpha}^{\pi}} -V^\pi)\|_2 \le \|A^c(s,a)^{1/2}\|_{\mathrm{op}}\|\Delta_N\|_2.
\end{align*}
Since $A^c(s,a)\preceq \diag(P^c_{s,a})\preceq I,$ $\|A^c(s,a)\|^{1/2}_{\mathrm{op}}\le 1.$
Therefore
\begin{align*}
\Big| \sigma(s,a;{V_{\phi,\alpha}^{\pi}} ) - \sigma(s,a;V^\pi) \Big| \le \|A^c(s,a)\|^{1/2}_{\mathrm{op}}\|\Delta_N\|_2 \le \sqrt{S}\,\|\Delta_N\|_\infty = O_p(N^{-1/2}).
\end{align*}
After division by $\sqrt{N(s,a)}$, the middle term in \eqref{eq:RN_expansion_clean} becomes $O_p(N^{-1}).$

For the first term in \eqref{eq:RN_expansion_clean},
Lemma~\ref{lem:posterior_mean_clt} gives
\begin{align*}
\|\widetilde P_N(s,a)-P^c_{s,a}\|_\infty = O_p(N(s,a)^{-1/2}) = O_p(N^{-1/2}),
\end{align*}
while \eqref{eq:DeltaN_rate_bound} gives $\|\Delta_N\|_\infty=O_p(N^{-1/2}).$
Hence
\begin{align*}
\bigl(\widetilde P_N(s,a)-P^c_{s,a}\bigr)^\top \Delta_N = O_p(N^{-1}).
\end{align*}
Since also $o_p\bigl(N(s,a)^{-1/2}\bigr)=o_p(N^{-1/2}),$ 
we conclude from \eqref{eq:RN_expansion_clean} that $R_N(s)=o_p(N^{-1/2})$, $\forall s\in\mathcal S.$
Because $S<\infty$,
\begin{equation}\label{eq:RN_small_clean}
\sqrt N\,R_N\xrightarrow{p}0.
\end{equation}

Finally, multiply \eqref{eq:main_identity_quantile_clean} by $\sqrt N$:
\begin{align*}
\sqrt N\,(I-\gamma P_\pi^c)\Delta_N = \sqrt N\,B_N+\sqrt N\,R_N.
\end{align*}
By \eqref{eq:BN_joint_limit}, \eqref{eq:RN_small_clean}, and Slutsky's theorem,
\begin{align*}
\sqrt N\,(I-\gamma P_\pi^c)\bigl({V_{\phi,\alpha}^{\pi}} -V^\pi\bigr) \Rightarrow \mathcal N\!\Bigl( \gamma\lambda_\pi, \diag\bigl((\gamma\sigma_\pi)^2\bigr) \Bigr).
\end{align*}
Since $P_\pi^c$ is row-stochastic, $\|\gamma P_\pi^c\|_\infty\le \gamma<1,$ so $(I-\gamma P_\pi^c)^{-1}$ exists and is bounded.
Applying the continuous mapping theorem to the linear map $(I-\gamma P_\pi^c)^{-1}$ yields
\begin{align*}
\sqrt N\,\bigl({V_{\phi,\alpha}^{\pi}} -V^\pi\bigr)
\Rightarrow
\mathcal N\!\Bigl(
(I-\gamma P_\pi^c)^{-1}\gamma\lambda_\pi,
\,
(I-\gamma P_\pi^c)^{-1}
\diag\bigl((\gamma\sigma_\pi)^2\bigr)
(I-\gamma P_\pi^c)^{-T}
\Bigr).
\end{align*}
This completes the proof.
\end{proof}
\section{Proof of Proposition~\ref{prop:hp-lb-no-subst}}
\label{app:hp-lb-no-subst}
\begin{proof}
Fix a policy $\pi$. By the definition of the left $\bar\alpha$-quantile, for each
$s\in\mathcal S$,
\begin{align*}
{\mathbb P_\phi}\Bigl( P_{s,\pi(s)}^\top V^\pi_{\phi,\bar\alpha} \ge \rho^{\bar\alpha}_{\phi(s,\pi(s))}\!\bigl(P^\top V^\pi_{\phi,\bar\alpha}\bigr) \Bigr) \ge 1-\bar\alpha.
\end{align*}
Using the Bellman equation for $V^\pi_{\phi,\bar\alpha}$, this implies
\begin{align*}
{\mathbb P_\phi}\Bigl( r(s,\pi(s)) + \gamma\,P_{s,\pi(s)}^\top V^\pi_{\phi,\bar\alpha} \ge V^\pi_{\phi,\bar\alpha}(s) \Bigr) \ge 1-\bar\alpha.
\end{align*}
Because the posterior rows $\left(P_{s,\pi(s)}\right)_{s\in\mathcal S}$ are independent across states {under \(\mathbb P_\phi\)},
the above events are independent. Hence,
\begin{align*}
{\mathbb P_\phi}\Bigl( r(s,\pi(s)) + \gamma\,P_{s,\pi(s)}^\top V^\pi_{\phi,\bar\alpha} \ge V^\pi_{\phi,\bar\alpha}(s), \quad \forall s\in\mathcal S \Bigr) \ge (1-\bar\alpha)^{S} = 1-\alpha.
\end{align*}
On this event,
\begin{align*}
V^\pi_{\phi,\bar\alpha}(s) \le r(s,\pi(s)) + \gamma\,P_{s,\pi(s)}^\top V^\pi_{\phi,\bar\alpha}, \qquad \forall s\in\mathcal S.
\end{align*}
Let $\mathcal T^\pi_{P}$ denote the standard Bellman operator under policy $\pi$ and
realized transition matrix $P$:
\begin{equation*}
(\mathcal T^\pi_{P}V)(s) := r(s,\pi(s)) + \gamma\,P_{s,\pi(s)}^\top V, \qquad \forall s\in\mathcal S.
\end{equation*}

The preceding event implies $V^\pi_{\phi,\bar\alpha}\le \mathcal T^\pi_{P}V^\pi_{\phi,\bar\alpha}$ componentwise. By monotonicity of $\mathcal T^\pi_{P}$, $V^\pi_{\phi,\bar\alpha}\le (\mathcal T^\pi_{P})^m V^\pi_{\phi,\bar\alpha},$ $\forall m\ge 1.$
Since $\mathcal T^\pi_{P}$ is a $\gamma$-contraction under $\|\cdot\|_\infty$, its iterates
converge to the unique fixed point $V^\pi_{P}$. Letting $m\to\infty$ yields $V^\pi_{\phi,\bar\alpha}\le V^\pi_{P}$ componentwise on the above event. Then the claim follows
immediately from the definition of 
{$V^{\pi,\mathrm q}_{\phi,\alpha}(s)$}
in \eqref{eq:posterior-lb}.
\end{proof}

\section{Evolving Robustness--Exploration Trade-off in Online RL:  Illustrative Examples}

\label{sec:robustness-exploration-tension}
In online RL, the balance between robustness and exploration evolves over the course of learning. When the posterior distribution is still dispersed and epistemic uncertainty is relatively large across all state--action pairs, robustness induced by lower-tail evaluation (lower quantile level) can be beneficial. As more data are collected, the remaining epistemic uncertainty about $P^c_{s,a}$ becomes concentrated in less-visited state--action pairs, which remain critical for learning the optimal policy in the true environment. In this regime, a higher quantile level encourages exploration of these state--action pairs that have higher epistemic uncertainty. The examples in this subsection separately illustrate these two mechanisms and show why a fixed lower-tail rule, i.e., choosing actions according to the optimal policy of the $\alpha$-quantile BR-MDP with a fixed $\alpha$, fails to learn the optimal policy.

We begin with a regime in which the balance tilts toward robustness. 
For any transition kernel $P$, let $V^*_P$ denote the optimal value function under $P$. Given a posterior distribution $\varphi$ over transition kernels, let $\bar P_\varphi:= \mathbb E_{P\sim\varphi}[P]$ denote the posterior-mean kernel. We denote by $\pi^*_{\varphi,\alpha}$ an optimal risk-aware policy of the $\alpha$-quantile BR-MDP under the posterior $\varphi$, and by $\pi^*_{\bar P_\varphi}$ an optimal risk-neutral policy under the posterior-mean kernel $\bar P_\varphi$.
The latter is obtained by replacing $P^c$ with $\bar P_\varphi$ in \eqref{eq:V-true-opt} and solving the resulting Bellman optimality equation. 
To make precise when robustness is needed in this regime, we introduce the notion of posterior downside exposure in the following definition.
\begin{definition}[Posterior downside exposure]
\label{def:posterior-downside}
Fix a posterior distribution $\varphi$ over transition kernels, an initial state $s_0$, a policy $\pi$, and a threshold $\Lambda>0$. Define the set of transition kernels under which the regret of policy $\pi$ is at least the threshold $\Lambda$ as follows:
$$
\mathcal D(\pi,\Lambda)
:=
\Bigl\{
P:\;
V^*_P(s_0)-V^\pi_P(s_0)\ge \Lambda
\Bigr\}.
$$
We say that $\pi$ has $(\beta,\Lambda)$-\emph{posterior downside exposure} under $\varphi$ if
$
\varphi\!\left(\mathcal D(\pi,\Lambda)\right)\ge \beta.
$
\end{definition}

Definition~\ref{def:posterior-downside} measures how much posterior mass is placed on transition kernels under which the regret of policy $\pi$ is at least $\Lambda$ from the initial state $s_0$.

\begin{example}[A posterior downside-exposure example]
\label{ex:robust-necessity}
Fix a discount factor $\gamma\in(0,1)$, a constant $c\in(0,\gamma)$, 
and a loss level $L>0$. Consider the discounted MDP in 
Figure~\ref{fig:robust-necessity-schematic}, where the reward is known to the 
agent but the transition kernel is unknown. The current posterior is supported 
on two kernels,
\[
\varphi=\mu\,\delta_{P^G}+(1-\mu)\,\delta_{P^B}, 
\qquad \mu\in(0,1),
\]
where $\delta_x$ denotes the Dirac measure at $x$, and $P^G$ and $P^B$ differ 
only in the transition following the risky action $a_R$ at the initial state 
$s_0$, as shown in the figure.
\end{example}

\begin{figure}[htbp]
\centering
\includegraphics[width=0.6\textwidth]{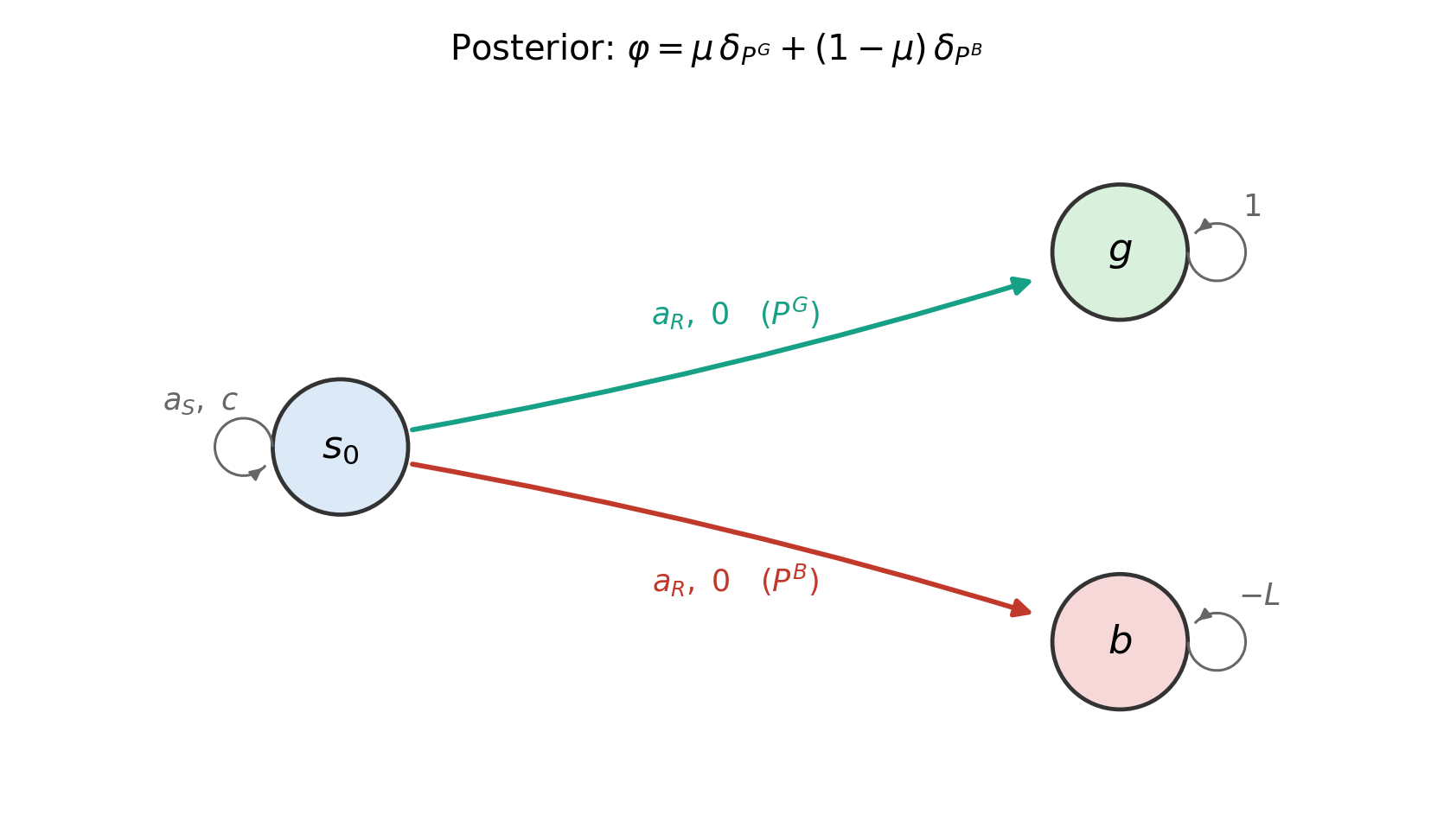}
\caption{Schematic of Example~\ref{ex:robust-necessity}. From the initial state $s_0$, the safe action $a_S$ yields reward $c$ and returns to $s_0$. The risky action $a_R$ yields reward $0$. Under $P^G$, it moves to the absorbing state $g$, which yields reward $1$ at every subsequent step; under $P^B$, it moves to the absorbing state $b$, which yields reward $-L$ at every subsequent step.}
\label{fig:robust-necessity-schematic}
\end{figure}


\begin{proposition}
\label{prop:robust-necessity}
In Example~\ref{ex:robust-necessity}, if $\frac{\gamma(1-L)}{2}<c<\gamma(\mu-(1-\mu)L)$, $\mu>1/2$, and $\alpha\le 1-\mu$, then $\pi^*_{\bar P_\varphi}(s_0)=a_R$ and $\pi^*_{\varphi,\alpha}(s_0)=a_S$. Moreover, for every $\Lambda$ satisfying
$\frac{\gamma-c}{1-\gamma}<\Lambda\le \frac{c+\gamma L}{1-\gamma},$
the optimal policy under the posterior-mean kernel $\pi^*_{\bar P_\varphi}$ has $(1-\mu,\Lambda)$-posterior downside exposure, whereas the regret of $\pi^*_{\varphi,\alpha}$ is below the threshold $\Lambda$ with posterior probability one, i.e., 
$$
\varphi\!\left(\mathcal D_\varphi(\pi^*_{\bar P_\varphi},\Lambda)\right)=1-\mu,
\qquad
\varphi\!\left(\mathcal D_\varphi(\pi^*_{\varphi,\alpha},\Lambda)\right)=0.
$$
\end{proposition}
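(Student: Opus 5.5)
The plan is to compute everything explicitly. The MDP in Example~\ref{ex:robust-necessity} has only two actions at $s_0$, and $g$, $b$ are absorbing with known rewards, so every quantity in the statement reduces to comparing two stationary policies at $s_0$: always $a_S$ (value $c/(1-\gamma)$), or $a_R$ at $s_0$. The first step is to record the values of the absorbing states. Under any kernel in the support of $\varphi$ they are $V(g)=1/(1-\gamma)$ and $V(b)=-L/(1-\gamma)$. The same holds in the BR-MDP, since the posterior is degenerate on the transitions out of $g$ and $b$, so the quantile of $P^\top V$ there is just the deterministic backup.

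Next I handle the posterior-mean kernel $\bar P_\varphi$, under which $a_R$ leads to $g$ with probability $\mu$ and to $b$ with probability $1-\mu$. This gives $Q(s_0,a_R)=\gamma(\mu-(1-\mu)L)/(1-\gamma)$. Playing $a_S$ forever gives $c/(1-\gamma)$, and any policy using $a_S$ at $s_0$ stays at $s_0$ forever. Comparing the two, the hypothesis $c<\gamma(\mu-(1-\mu)L)$ gives $\pi^*_{\bar P_\varphi}(s_0)=a_R$.

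For the quantile BR-MDP, the random variable $P_{s_0,a_R}^\top V$ equals $V(g)$ with posterior probability $\mu$ and $V(b)$ with probability $1-\mu$. Since $\mathbb P(X\le V(b))=1-\mu\ge\alpha$, the left $\alpha$-quantile is $V(b)=-L/(1-\gamma)$. Hence $Q_{\varphi,\alpha}(s_0,a_R)=-\gamma L/(1-\gamma)<0<c/(1-\gamma)$, using $c>0$. Solving \eqref{eq:VR-opt} at $s_0$ then yields $\pi^*_{\varphi,\alpha}(s_0)=a_S$.

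Finally I compute regrets kernel by kernel.
\begin{itemize}
\item Under $P^G$: $V^*_{P^G}(s_0)=\gamma/(1-\gamma)$, because $c<\gamma$. The risky policy has regret $0$, and the safe policy has regret $(\gamma-c)/(1-\gamma)$.
\item Under $P^B$: $V^*_{P^B}(s_0)=c/(1-\gamma)$, because $c>-\gamma L$. The safe policy has regret $0$, and the risky policy has regret $(c+\gamma L)/(1-\gamma)$.
\end{itemize}
For $\Lambda\in\bigl((\gamma-c)/(1-\gamma),(c+\gamma L)/(1-\gamma)\bigr]$, this gives $\mathcal D(\pi^*_{\bar P_\varphi},\Lambda)\cap\operatorname{supp}\varphi=\{P^B\}$, with mass $1-\mu$, while $\mathcal D(\pi^*_{\varphi,\alpha},\Lambda)$ contains neither support point. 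The condition $\gamma(1-L)/2<c$ is exactly what makes this interval for $\Lambda$ nonempty. The condition $\mu>1/2$ helps make the upper constraint on $c$ compatible.

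There is no real obstacle; the only care needed is with the left-quantile convention at the atom. The inequality $\alpha\le1-\mu$ is what selects the lower atom $V(b)$ rather than $V(g)$.
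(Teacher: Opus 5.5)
Your proposal is correct and matches the paper's proof essentially step for step. It computes the backups at $s_0$ explicitly under $\bar P_\varphi$ and under the left $\alpha$-quantile, which selects the atom $V(b)$ because $\alpha\le 1-\mu$, then computes regret kernel by kernel and notes that $c>\gamma(1-L)/2$ is equivalent to the $\Lambda$-interval being nonempty. The only cosmetic difference is that you compare the two stationary policies at $s_0$ directly, whereas the paper solves the scalar fixed-point equation $V(s_0)=\max\{c+\gamma V(s_0),R\}$, which amounts to the same comparison.
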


The proof of Proposition~\ref{prop:robust-necessity} is deferred to 
Appendix~\ref{app:robust-necessity}. The proposition highlights a failure mode 
of posterior-mean planning under model uncertainty. Since the posterior-mean 
kernel averages over the possible transition models, it assign a higher posterior-mean value to $a_R$ than to $a_S$; 
However, this averaged comparison hides the fact that, with posterior probability $1-\mu$, the selected action $a_R$ leads to the bad absorbing state $b$ and receives reward $-L$ thereafter. Under those posterior models, the posterior-mean policy suffers regret above the threshold $\Lambda$ specified in Proposition~\ref{prop:robust-necessity}. In contrast, when $\alpha\le 1-\mu$, the $\alpha$-quantile BR-MDP is sensitive to this lower-tail posterior outcome: it penalizes $a_R$ for its performance on the models where $a_R$ leads to $b$, and therefore selects the safe action $a_S$.


We next consider a regime in which the balance shifts toward exploration. An action can matter because after observing the transition it generates, the posterior distribution can become substantially more concentrated, or even collapse to a point mass at the true transition kernel. This becomes important once the main issue is no longer the large posterior downside exposure defined in Definition~\ref{def:posterior-downside} but whether taking the action substantially reduces the remaining epistemic uncertainty that matters for subsequent decisions. The next example illustrates this situation by contrasting an explorative action, whose transition reveals the kernel, with a safe action that leaves the posterior unchanged. Let $\varphi^{(s,a,s')}$ denote the updated posterior distribution after observing $(s,a,s')$.

\begin{example}[An informative exploration example]
\label{ex:probing}
Fix a discount factor $\gamma\in(0,1)$ and a constant $c\in(0,\gamma)$. Consider the discounted MDP in Figure~\ref{fig:probing-schematic}, where the reward is known to the agent but the transition kernel is unknown. The current posterior is supported on two kernels,
$$
\varphi=\mu\,\delta_{P^G}+(1-\mu)\,\delta_{P^B}, \qquad \mu\in(0,1),
$$
where $P^G$ and $P^B$ differ only along the branch reached after taking the explorative action shown in the figure.
\end{example}

\begin{figure}[htbp]
\centering
\includegraphics[width=0.6\textwidth]{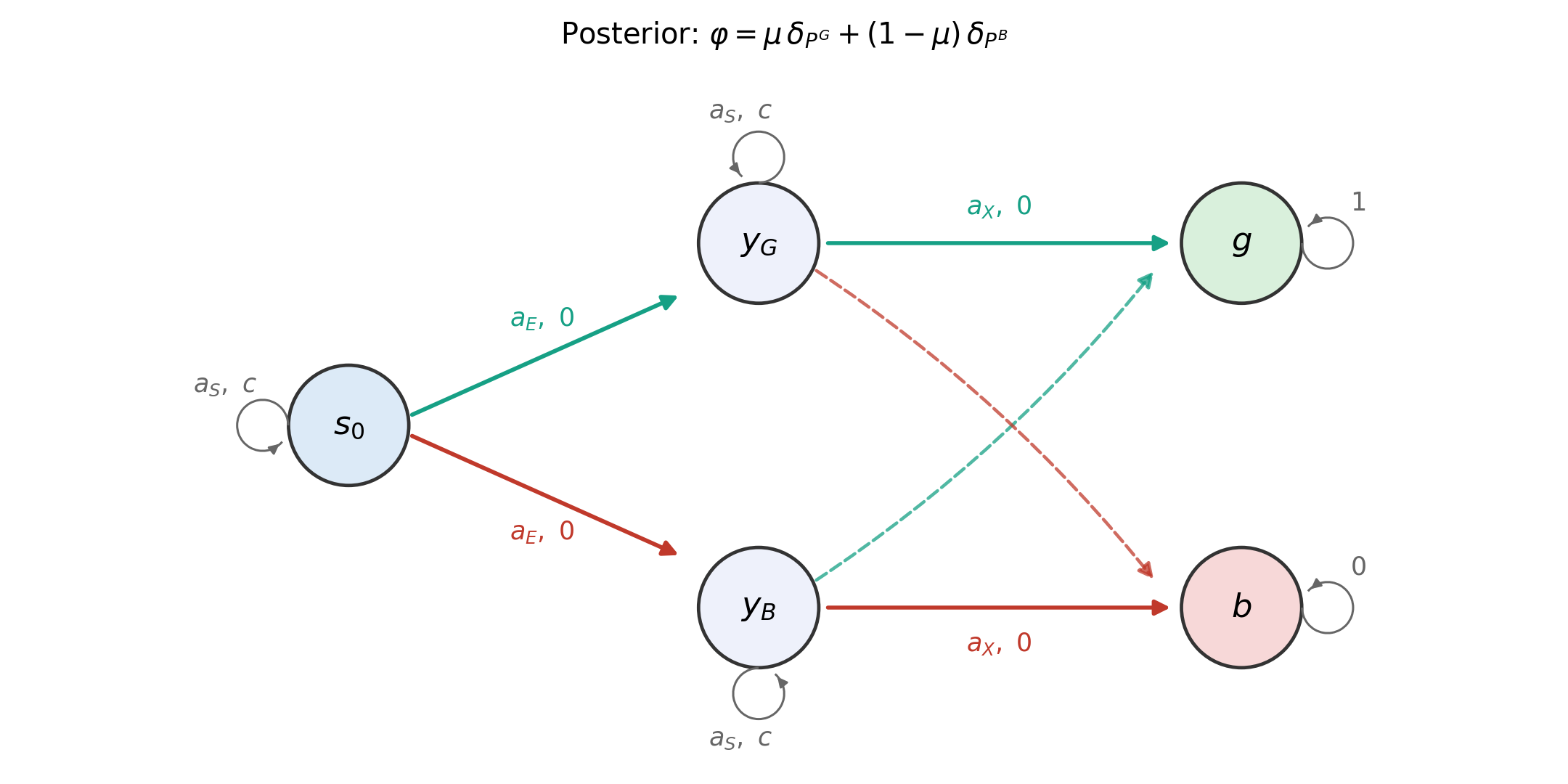}
\caption{Schematic of Example~\ref{ex:probing}. From the initial state $s_0$, the safe action $a_S$ yields reward $c$ and returns to $s_0$, whereas the exploratory action $a_E$ yields reward $0$ and moves to $y_G$ under $P^G$ and to $y_B$ under $P^B$. At each diagnostic state $y\in\{y_G,y_B\}$, action $a_S$ yields reward $c$ and stays at $y$, whereas action $a_X$ yields reward $0$ and moves to the absorbing state $g$ under $P^G$ and to the absorbing state $b$ under $P^B$. State $g$ yields reward $1$ at every subsequent step, and state $b$ yields reward $0$ at every subsequent step. All other transitions are identical under $P^G$ and $P^B$. Hence, the transition observed after $(s_0,a_E)$ reveals the kernel, whereas $(s_0,a_S)$ reveals no new information.}
\label{fig:probing-schematic}
\end{figure}

This example isolates a situation in which an action is valuable because of the information it reveals: after taking $a_E$ at $s_0$, the next-state observation reveals the kernel, so $\varphi^{(s_0,a_E,y_G)}=\delta_{P^G},$ and $\varphi^{(s_0,a_E,y_B)}=\delta_{P^B}.$ The optimal subsequent action can then be selected according to the updated posterior: take $a_X$ at $y_G$ and $a_S$ at $y_B$. Therefore, taking $a_E$ at $s_0$ produces an observation that fully resolves the remaining uncertainty relevant to subsequent decisions.
By contrast, taking the safe action $a_S$ at $s_0$ does not change the posterior: $P^G(s_0|s_0,a_S)=P^B(s_0|s_0,a_S)=1$, so observing $(s_0,a_S,s_0)$ cannot provide additional information for distinguishing $P^G$ from $P^B$, and hence $\varphi^{(s_0,a_S,s_0)}=\varphi.$
 
The next proposition shows that a fixed lower-tail rule can nevertheless avoid this informative action indefinitely. Once the agent chooses $a_S$, the posterior never changes, so the same decision rule continues to choose $a_S$ at every subsequent visit to $s_0$. If the true kernel is $P^G$, this yields linear regret.

\begin{proposition}\label{prop:robust-hurts-exploration}
In Example~\ref{ex:probing}, assume that $\alpha\le 1-\mu$ and $c<\gamma^2$. Suppose that after each realized transition the agent updates its posterior and then acts according to an optimal policy of the $\alpha$-quantile BR-MDP, $\pi_t:=\pi^*_{\varphi_t,\alpha}$ at time step $t$. If the initial state is $s_0$ and the prior distribution $\varphi_0=\varphi$, then
$$
\varphi_t=\varphi,
\qquad
a_t=\pi_t(s_t)=a_S
\qquad\text{and}\qquad
s_{t+1}=s_0
\qquad
\forall t\ge 0.
$$
If the true kernel is $P^G$, then under the cumulative regret criterion $R_T
:=
\sum_{t=0}^{T-1}
\Bigl(
V^*_{P^G}(s_t)-V^{\pi_t}_{P^G}(s_t)
\Bigr)$,
one has
$
R_T
=
T\,\frac{\gamma^2-c}{1-\gamma}.
$
\end{proposition}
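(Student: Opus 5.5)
The plan is an induction on $t$, driven by one key computation: under the unchanged posterior $\varphi$, the $\alpha$-quantile BR-MDP prefers $a_S$ at $s_0$.

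\textbf{Step 1 (the posterior prefers $a_S$).} Work with the two-point posterior $\varphi=\mu\delta_{P^G}+(1-\mu)\delta_{P^B}$ and the nested quantile Bellman equation \eqref{eq:VR-opt}. With a two-atom posterior, the one-step quantile of $P^\top V$ at a branching pair equals the value under $P^B$ whenever $\alpha\le 1-\mu$ and the $P^B$-branch value is the smaller one. This holds because $\mathbb P(X\le x_B)\ge 1-\mu\ge\alpha$, so the left $\alpha$-quantile is $x_B$. Next compute $V^*_{\varphi,\alpha}$ backwards:
\begin{itemize}
\item $V(g)=1/(1-\gamma)$ and $V(b)=0$.
\item At $y\in\{y_G,y_B\}$, action $a_X$ has quantile backup $\gamma\min\{V(g),V(b)\}=0$, while $a_S$ gives $c/(1-\gamma)$. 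So $V(y)=c/(1-\gamma)$.
\item At $s_0$, action $a_E$ leads to $y_G$ or $y_B$, both worth $c/(1-\gamma)$, so its value is $\gamma c/(1-\gamma)$. Action $a_S$ gives $c+\gamma V(s_0)$.
\end{itemize}
Hence $V(s_0)=c/(1-\gamma)$ with $a_S$ strictly optimal, because $c>\gamma c$. I need to check that this is the fixed point. One way is to verify that the candidate $V$ satisfies \eqref{eq:VR-opt}, using uniqueness of the contraction's fixed point. The subtle point is that the quantile is taken jointly over the two states' Dirichlet-like rows. Here, though, the posterior is a two-point mixture on whole kernels, not independent per-row priors. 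I will interpret $\rho^\alpha_{\varphi}$ at each $(s,a)$ via the marginal of that row, which for these rows is again a two-point law with masses $\mu,1-\mu$.

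\textbf{Step 2 (induction).} Suppose $\varphi_t=\varphi$ and $s_t=s_0$. By Step 1, $\pi_t(s_0)=a_S$. Both kernels send $(s_0,a_S)$ to $s_0$ deterministically, so $s_{t+1}=s_0$. The likelihood of the observation is $1$ under both kernels, so Bayes' rule gives $\varphi_{t+1}=\varphi$. The base case is the assumption $\varphi_0=\varphi$, $s_0$ initial.

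\textbf{Step 3 (regret).} Under $P^G$, the policy $\pi_t$ plays $a_S$ at $s_0$ forever, so $V^{\pi_t}_{P^G}(s_0)=c/(1-\gamma)$. The optimal value is $V^*_{P^G}(s_0)=\max\{c/(1-\gamma),\gamma^2/(1-\gamma)\}$: the route $a_E$ then $a_X$ yields $0,0,1,1,\dots$, discounted to $\gamma^2/(1-\gamma)$. Since $c<\gamma^2$, the optimal value is $\gamma^2/(1-\gamma)$. Each step therefore contributes $(\gamma^2-c)/(1-\gamma)$, and summing over $T$ steps gives the claim. Also check that $\pi_t$'s actions at unvisited states do not matter, since only $V^{\pi_t}_{P^G}(s_0)$ enters.

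\textbf{Main obstacle.} The delicate step is making Step 1 rigorous: confirming the quantile convention (left quantile, ties) and that the lower atom is selected exactly when $\alpha\le1-\mu$. I would handle it by explicitly writing the two-point cdf.
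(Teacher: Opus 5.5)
Your proposal is correct and follows the paper's proof essentially step for step. It uses the same backward computation $V(g)=1/(1-\gamma)$, $V(b)=0$, $V(y_G)=V(y_B)=c/(1-\gamma)$, which gives $a_E$ the value $\gamma c/(1-\gamma)<c/(1-\gamma)$; the same observation that $(s_0,a_S,s_0)$ has equal likelihood under $P^G$ and $P^B$, so the posterior never moves; and the same per-step regret $(\gamma^2-c)/(1-\gamma)$. The only difference is cosmetic: you certify optimality at $s_0$ by checking that the candidate satisfies the quantile Bellman optimality equation and invoking uniqueness of the contraction's fixed point, whereas the paper compares the always-safe value with the values of delaying $a_E$ by $k$ steps.
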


The proof of Proposition~\ref{prop:robust-hurts-exploration} is deferred to Appendix~\ref{app:robust-hurts-exploration}.
It shows how a fixed lower-tail rule can create what we call a self-confirming trap: under the current posterior, the optimal policy of the $\alpha$-quantile BR-MDP is suboptimal in the true environment, yet following this policy generates no new information to update the posterior. As a result, re-solving the same $\alpha$-quantile BR-MDP under the unchanged posterior and fixed quantile level $\alpha$ selects the same action again.
When the true kernel is $P^G$, the fixed lower-tail rule continues to choose $a_S$ and the agent therefore never discovers the higher future value available after probing, which leads to linear regret.

Taken together, these results show that the robustness induced by the lower-tail quantile can be desirable when posterior downside exposure is substantial due to high epistemic uncertainty, but that using a fixed lower-tail rule can later hinder informative exploration and create a self-confirming trap with linear regret. The reason is that the trade-off between robustness and exploration evolves over learning.

\subsection{Proof of Proposition \ref{prop:robust-necessity}}
\label{app:robust-necessity}
\begin{proof}
Let $R_{\bar P}
:=
\frac{\gamma(\mu-(1-\mu)L)}{1-\gamma}.$
Under the posterior-mean kernel $\bar P_\varphi$, if action $a_R$ is taken at $s_0$, the value obtained after the transition is $R_{\bar P}
=
\mu\frac{\gamma}{1-\gamma}
+
(1-\mu)\frac{-\gamma L}{1-\gamma}.$
The optimal value at $s_0$ under $\bar P_\varphi$ satisfies
\begin{align*}
V^*_{\bar P_\varphi}(s_0) = \max\left\{ c+\gamma V^*_{\bar P_\varphi}(s_0), \, R_{\bar P} \right\}.
\end{align*}
Since $\gamma(\mu-(1-\mu)L)>c$, we have $R_{\bar P}>\frac{c}{1-\gamma}.$
Equivalently, $c+\gamma R_{\bar P}<R_{\bar P}.$
Hence the fixed point is $V^*_{\bar P_\varphi}(s_0)=R_{\bar P},$
and the unique optimal action at $s_0$ is $a_R$. Therefore $\pi^*_{\bar P_\varphi}(s_0)=a_R.$

Next consider the lower-tail $\alpha$-quantile BR-MDP. Since $g$ and $b$ are absorbing, $V(g)=\frac{1}{1-\gamma},$ and $V(b)=\frac{-L}{1-\gamma}.$
Under $a_R$, the posterior distribution of the next-state value is $\frac{1}{1-\gamma}$ with probability $\mu$, $\frac{-L}{1-\gamma}$ with probability $1-\mu$.
Since $\alpha\le 1-\mu$, the left $\alpha$-quantile is $\frac{-L}{1-\gamma}.$
Thus the BRMDP value of taking $a_R$ at $s_0$ is $R_\alpha
:=
-\frac{\gamma L}{1-\gamma}.$
The BRMDP optimal value at $s_0$ therefore satisfies
\begin{align*}
V^*_{\varphi,\alpha}(s_0) = \max\left\{ c+\gamma V^*_{\varphi,\alpha}(s_0), \, R_\alpha \right\}.
\end{align*}
Because $c>0$ and $L>0$, $\frac{c}{1-\gamma}>-\frac{\gamma L}{1-\gamma}=R_\alpha.$
Hence the fixed point is $V^*_{\varphi,\alpha}(s_0)=\frac{c}{1-\gamma},$ and the optimal action is $a_S$. Therefore $\pi^*_{\varphi,\alpha}(s_0)=a_S.$

We now compute the regret under each realized kernel. Since $c<\gamma(\mu-(1-\mu)L)\le\gamma$, under $P^G$, taking $a_R$ is optimal. Hence $V^*_{P^G}(s_0)=\frac{\gamma}{1-\gamma}.$
The posterior-mean policy chooses $a_R$, so its regret under $P^G$ is zero. The
BRMDP policy chooses $a_S$, so
\begin{align*}
V^*_{P^G}(s_0)-V^{\pi^*_{\varphi,\alpha}}_{P^G}(s_0) = \frac{\gamma}{1-\gamma} - \frac{c}{1-\gamma} = \frac{\gamma-c}{1-\gamma}.
\end{align*}
Under $P^B$, taking $a_S$ is optimal because $c>0$ and $L>0$. Hence $V^*_{P^B}(s_0)=\frac{c}{1-\gamma}.$
The BRMDP policy chooses $a_S$, so its regret under $P^B$ is zero. The posterior-mean
policy chooses $a_R$, so
\begin{align*}
V^*_{P^B}(s_0)-V^{\pi^*_{\bar P_\varphi}}_{P^B}(s_0) = \frac{c}{1-\gamma} - \left(-\frac{\gamma L}{1-\gamma}\right) = \frac{c+\gamma L}{1-\gamma}.
\end{align*}
Finally, the condition $c>\frac{\gamma(1-L)}{2}$ is equivalent to $\frac{\gamma-c}{1-\gamma}<\frac{c+\gamma L}{1-\gamma},$ so the stated interval for $\Lambda$ is nonempty. For any $\frac{\gamma-c}{1-\gamma}
<
\Lambda
\le
\frac{c+\gamma L}{1-\gamma},$
the posterior-mean policy has regret at least $\Lambda$ exactly under $P^B$, whose
posterior probability is $1-\mu$. Therefore $\varphi\!\left(\mathcal D(\pi^*_{\bar P_\varphi},\Lambda)\right) = 1-\mu.$

On the other hand, the BRMDP policy has regret $\frac{\gamma-c}{1-\gamma}<\Lambda$ under $P^G$, and regret $0$ under $P^B$. Hence its regret is below $\Lambda$ with
posterior probability one, and $\varphi\!\left(\mathcal D(\pi^*_{\varphi,\alpha},\Lambda)\right) = 0.$
\end{proof}



\subsection{Proof of Proposition \ref{prop:robust-hurts-exploration}}
\label{app:robust-hurts-exploration}

\begin{proof}
Under the $\alpha$-quantile BR-MDP criterion with $\alpha\le 1-\mu$, taking $a_X$ at either diagnostic state yields next-state value $1/(1-\gamma)$ with posterior mass $\mu$ and $0$ with posterior mass $1-\mu$. Therefore its left $\alpha$-quantile is $0$, whereas repeatedly taking the safe self-loop yields value $c/(1-\gamma)$. Hence
\begin{align*}
V^*_{\varphi,\alpha}(y_G)=V^*_{\varphi,\alpha}(y_B)=\frac{c}{1-\gamma},
\end{align*}
so taking $a_E$ at $s_0$ yields value $\gamma c/(1-\gamma)$. By contrast, repeatedly taking the safe self-loop at $s_0$ yields value $c/(1-\gamma)$, and delaying $a_E$ by $k\ge 1$ safe steps yields
\begin{align*}
\frac{c(1-\gamma^k)+\gamma^{k+1}c}{1-\gamma}<\frac{c}{1-\gamma}.
\end{align*}
Thus $\pi^*_{\varphi,\alpha}(s_0)=a_S$.

Now suppose that after each realized transition the agent updates its posterior and then chooses $\pi_t:=\pi^*_{\varphi_t,\alpha}$. Taking $a_S$ at $s_0$ always produces the observation $(s_0,a_S,s_0)$, whose likelihood is identical under $P^G$ and $P^B$. Hence Bayes updating leaves the posterior unchanged. Starting from $S_0=s_0$ and $\varphi_0=\varphi$, an induction gives $\varphi_t=\varphi$ and $\pi_t(s_0)=a_S$ for all $t\ge 0$; since $a_S$ returns to $s_0$, we also have $S_t=s_0$ for all $t$. In particular, the agent never probes.

Under $P^G$, taking $a_E$ at $s_0$ and then $a_X$ at $y_G$ yields value $\gamma^2/(1-\gamma)$, whereas repeatedly taking the safe self-loop yields value $c/(1-\gamma)$. Since $c<\gamma^2$, probing strictly dominates staying safe, and delaying $a_E$ by $k\ge 1$ safe steps yields $\frac{c(1-\gamma^k)+\gamma^{k+2}}{1-\gamma}<\frac{\gamma^2}{1-\gamma}.$
Hence $V^*_{P^G}(s_0)=\gamma^2/(1-\gamma)$. On the other hand, each $\pi_t$ chooses $a_S$ at $s_0$, so $V^{\pi_t}_{P^G}(s_0)=c/(1-\gamma)$. Because $S_t=s_0$ for all $t$, each summand in $R_T$ equals $(\gamma^2-c)/(1-\gamma)$, and therefore $R_T=T\,\frac{\gamma^2-c}{1-\gamma}.$
\end{proof}

\section{Value Iteration for $\alpha_k$-quantile BR-MDP}
\label{sec:value-iteration-exact}
At the beginning of pseudo-episode $k$, after computing the posterior parameter collection
$\phi_k$ and the adaptive quantile schedule $\alpha_k$, we solve the corresponding BR-MDP
by value iteration. 
\begin{algorithm}[H]
\caption{Value Iteration for the $\alpha_k$-quantile BR-MDP in Pseudo-Episode $k$}
\label{alg:brmdp-vi-k}
\small
\begin{algorithmic}[1]
\State \textbf{Input:} Posterior parameter collection $\phi_k$, quantile schedule $\alpha_k$, optimal value function in pseudo-episode $k-1$ $V_{k-1}$, tolerance $\varepsilon_{\mathrm{VI}}$, maximum iterations $M_{\mathrm{VI}}$
\State Initialize $V^{(0)}\leftarrow V_{k-1}$ 
\For{$m=0,1,\ldots,M_{\mathrm{VI}}-1$}
    \ForAll{$s\in\mathcal S$}
        \State
        \begingroup
        \setlength{\abovedisplayskip}{1pt plus 1pt minus 1pt}
        \setlength{\belowdisplayskip}{1pt plus 1pt minus 1pt}
        \setlength{\abovedisplayshortskip}{0pt plus 1pt}
        \setlength{\belowdisplayshortskip}{1pt plus 1pt minus 1pt}
        \begin{align*}
V^{(m+1)}(s) \gets \max_{a\in\mathcal A} \left\{ r(s,a) + \gamma\, \rho^{\alpha_k}_{\phi_k(s,a)} \!\left(P^\top V^{(m)}\right) \right\}. 
        \end{align*}
        \endgroup
    \EndFor
    \If{$\|V^{(m+1)}-V^{(m)}\|_\infty\le \varepsilon_{\mathrm{VI}}$}
        \State \textbf{break}
    \EndIf
\EndFor
\State Set $V_k\gets V^{(m+1)}$
\State For each $s\in\mathcal S$, choose $\pi_k(s)\in\argmax_{a\in\mathcal A}\left\{r(s,a)+\gamma\,\rho^{\alpha_k}_{\phi_k(s,a)}\!\left(P^\top V_k\right)\right\}$
\State \Return $V_k$, $\pi_k$
\end{algorithmic}
\end{algorithm}

\section{Proofs of Regret Analysis}
For a fixed interaction horizon $T$, the last pseudo-episode may be truncated. 
{In the proof, with a slight abuse of notation, let $L_k$ denote the full geometric length of pseudo-episode $k$.}
{Equivalently, if the last pseudo-episode is truncated by the fixed horizon, we continue it only for the purpose of the proof under the same policy $\pi_{K_T}$.}
{Since the added regret summands are nonnegative, this no-truncation convention provides an upper bound on the actual regret.}
Thus,
\begin{align}
\label{ieq:BRT} BR(T) \le \mathbb E\!\left[ \sum_{k=1}^{K_T}\sum_{i=1}^{L_k} \Bigl( V^*(s_{k,i})-V^{\pi_k}(s_{k,i}) \Bigr) \right].
\end{align}
{It remains to bound the right-hand side under this convention.}
\subsection{Proof of Lemma \ref{lem:bayes_optimism}}
\label{app:bayes_optimism}
\begin{proof}
From the Bayesian perspective,
\begin{align*}
P^c_{s,a}\mid \mathcal F_{t_k}\sim \Dir(\phi_k(s,a)), \qquad \forall (s,a)\in\mathcal S\times\mathcal A,
\end{align*}
{where $\phi_k(s,a)=(\phi_k(s,a,s'))_{s'\in\mathcal S}$ denotes the Dirichlet parameter vector.}
For each $(s,a)\in\mathcal S\times\mathcal A$, define {the event} $\mathcal G_k(s,a)
:=
\left\{
(P^c_{s,a})^\top V_k
\le
\rho^{\alpha_k}_{\phi_k(s,a)}(P^\top V_k)
\right\}.$
By the definition of the $\alpha$-quantile,
\begin{align*}
\mathbb P\!\left(\mathcal G_k(s,a)\mid \mathcal F_{t_k}\right) = \mathbb P\!\left( (P^c_{s,a})^\top V_k \le \rho^{\alpha_k}_{\phi_k(s,a)}(P^\top V_k) \ \middle|\ \mathcal F_{t_k} \right) \ge \alpha_k(s,a).
\end{align*}
Let $\mathcal G_k:=\bigcap_{(s,a)\in\mathcal S\times\mathcal A}\mathcal G_k(s,a).$
Then
\begin{align*}
\mathbb P\!\left(\mathcal G_k\mid \mathcal F_{t_k}\right) &\ge 1-\sum_{(s,a)\in\mathcal S\times\mathcal A} \mathbb P\!\left(\mathcal G_k^c(s,a)\mid \mathcal F_{t_k}\right)
\ge 1-\sum_{(s,a)\in\mathcal S\times\mathcal A}\bigl(1-\alpha_k(s,a)\bigr)\\
&\ge 1-\delta\frac{\ln(2k)}{\sqrt{k}} \sum_{(s,a)\in\mathcal S\times\mathcal A}\frac{N^+_k(s,a)}{\bar N^+_k}
= 1-\frac{\delta SA\ln(2k)}{\sqrt{k}},
\end{align*}
where the third line follows from \eqref{eq:alphak} and the last equality uses $\bar N^+_k=\frac{1}{SA}\sum_{(s,a)\in\mathcal S\times\mathcal A}N^+_k(s,a).$

It remains to show that on the event $\mathcal G_k$ we have $V^*\le V_k$ pointwise.
Define
\begin{align*}
Q^*(s,a):=r(s,a)+\gamma\,(P^c_{s,a})^\top V^*, \qquad Q_k(s,a):=r(s,a)+\gamma\,\rho^{\alpha_k}_{\phi_k(s,a)}(P^\top V_k),
\end{align*}
so that $V^*(s)=\max_{a\in\mathcal A}Q^*(s,a)$ and $V_k(s)=\max_{a\in\mathcal A}Q_k(s,a).$
Let $\Delta(s):=V^*(s)-V_k(s)$ and $\Delta_{\max}:=\max_{x\in\mathcal S}\Delta(x).$
For each $s\in\mathcal S$, choose $a_s^*\in\arg\max_{a\in\mathcal A}Q^*(s,a)$. Then
\begin{align*}
\Delta(s) &= V^*(s)-V_k(s) \le Q^*(s,a_s^*)-Q_k(s,a_s^*)\\
&= \gamma\Big( (P^c_{s,a_s^*})^\top V^* - \rho^{\alpha_k}_{\phi_k(s,a_s^*)}(P^\top V_k) \Big)\\
&= \gamma\Big( (P^c_{s,a_s^*})^\top (V^*-V_k) + (P^c_{s,a_s^*})^\top V_k - \rho^{\alpha_k}_{\phi_k(s,a_s^*)}(P^\top V_k) \Big)\\
&\le \gamma \Delta_{\max},
\end{align*}
where the last inequality uses $(P^c_{s,a_s^*})^\top (V^*-V_k)\le \Delta_{\max}$ and, on the event $\mathcal G_k$, $(P^c_{s,a_s^*})^\top V_k
-
\rho^{\alpha_k}_{\phi_k(s,a_s^*)}(P^\top V_k)
\le 0.$
Hence $\Delta_{\max}\le \gamma \Delta_{\max},$ which implies $\Delta(s)\le\Delta_{\max}\le 0$.
That is, $V^*(s)\le V_k(s)$ for all $s\in\mathcal S$ on the event $\mathcal G_k$.
Combining this implication with $\mathbb P\!\left(\mathcal G_k\mid {\mathcal F_{t_k}}\right)
\ge
1-\frac{\delta SA\ln(2k)}{\sqrt{k}}$ {proves part~(i).}

For part~(ii), let $V^-:=V^*_{\phi_k,\underline{\alpha}}.$
Since $\alpha_k(s,a)\ge \underline{\alpha}$ for all $(s,a)\in\mathcal S\times\mathcal A$ and the $\alpha$-quantile is nondecreasing in $\alpha$, we have
\begin{align*}
\big({\mathcal T^*_{\phi_k,\underline{\alpha}}}V\big)(s) \le \big({\mathcal T^*_{\phi_k,\alpha_k}}V\big)(s), \qquad \forall V,\ \forall s\in\mathcal S.
\end{align*}
Therefore,
\begin{align*}
V^- = \mathcal T_{\phi_k,\underline{\alpha}}^*V^- \le {\mathcal T^*_{\phi_k,\alpha_k}}V^- \le ({\mathcal T^*_{\phi_k,\alpha_k}})^nV^-, \qquad \forall n\ge 1,
\end{align*}
where the last inequality follows from the monotonicity of ${\mathcal T^*_{\phi_k,\alpha_k}}$.
Since ${\mathcal T^*_{\phi_k,\alpha_k}}$ is a $\gamma$-contraction, its iterates converge to its unique fixed point $V_k$. Letting $n\to\infty$ yields $V^-\le V_k$, that is, $V^*_{\phi_k,\underline{\alpha}}(s)\le V_k(s)$ for all $s\in\mathcal S$.
This completes the proof.
\end{proof}

\subsection{Proof of Lemma \ref{lem:value-decomposition-sum}}
\label{app:value-decomposition-sum}
\begin{proof}
Define $\Delta(s):=V_{\phi_k,\alpha_k}^{\pi} (s)-V^{\pi}(s)$ and $\Delta_{k,i}:=\Delta(s_{k,i}).$
By the Bellman equations for $V_{\phi_k,\alpha_k}^{\pi}$ and $V^\pi$, for each $i\ge 1$,
\begin{align*}
\Delta_{k,i}
&=
\Bigl[
r(s_{k,i},a_{k,i})
+
\gamma\,
\rho_{\phi_k(s_{k,i},a_{k,i})}^{\alpha_k}
\!\bigl(P^\top V_{\phi_k,\alpha_k}^{\pi} \bigr)
\Bigr]
-
\Bigl[
r(s_{k,i},a_{k,i})
+
\gamma\,(P^c_{s_{k,i},a_{k,i}})^\top V^\pi
\Bigr] \\
&=
\gamma\,E_{k,i}
+
\gamma\,(P^c_{s_{k,i},a_{k,i}})^\top
\bigl(V_{\phi_k,\alpha_k}^{\pi} -V^\pi\bigr).
\end{align*}
Since
\begin{align*}
\mathbb E\!\left[\Delta_{k,i+1}\,\middle|\,s_{k,i},a_{k,i},\mathcal F_{t_k},P^c\right] = (P^c_{s_{k,i},a_{k,i}})^\top \Delta,
\end{align*}
the tower property gives
\begin{equation}
\mathbb E\!\left[\Delta_{k,i}\,\middle|\,\mathcal F_{t_k},P^c\right]
=
\gamma\,
\mathbb E\!\left[E_{k,i}\,\middle|\,\mathcal F_{t_k},P^c\right]
+
\gamma\,
\mathbb E\!\left[\Delta_{k,i+1}\,\middle|\,\mathcal F_{t_k},P^c\right].
\label{eq:delta-recursion}
\end{equation}

Because rewards are bounded in $[0,1]$, both $V_{\phi_k,\alpha_k}^{\pi}$ and $V^\pi$ are bounded by
$(1-\gamma)^{-1}$ in sup norm. Hence $|\Delta_{k,i}| \le \frac{1}{1-\gamma}$ and $|E_{k,i}| \le \frac{1}{1-\gamma}$ for all $i\ge 1.$
Iterating \eqref{eq:delta-recursion} for $n\ge 1$ yields
\begin{align*}
\mathbb E\!\left[\Delta_{k,i}\,\middle|\,\mathcal F_{t_k},P^c\right]
=
\sum_{h=0}^{n-1}\gamma^{h+1}
\mathbb E\!\left[E_{k,i+h}\,\middle|\,\mathcal F_{t_k},P^c\right]
+
\gamma^n
\mathbb E\!\left[\Delta_{k,i+n}\,\middle|\,\mathcal F_{t_k},P^c\right].
\end{align*}
The last term converges to zero as $n\to\infty$ because $\gamma\in(0,1)$ and $\Delta_{k,i+n}$ is uniformly bounded.
Therefore,
\begin{equation}
\mathbb E\!\left[\Delta_{k,i}\,\middle|\,\mathcal F_{t_k},P^c\right] = \sum_{h=0}^{\infty}\gamma^{h+1} \mathbb E\!\left[E_{k,i+h}\,\middle|\,\mathcal F_{t_k},P^c\right]. \label{eq:delta-expansion}
\end{equation}

Next use the pseudo-episode construction. Conditional on $(\mathcal F_{t_k},P^c)$, the random length $L_k$
is independent of the MDP trajectory and satisfies $\mathbb P\!\left(L_k\ge i \,\middle|\, \mathcal F_{t_k},P^c\right)=\gamma^{i-1},$ $i\ge 1,$
because $L_k$ is geometric with success probability $1-\gamma$ on $\{1,2,\dots\}$.
{Since $|\Delta_{k,i}|\le(1-\gamma)^{-1}$, we have $\mathbb E[\sum_{i\ge1}\mathbf 1\{L_k\ge i\}|\Delta_{k,i}|\,|\,\mathcal F_{t_k},P^c]\le(1-\gamma)^{-2}<\infty$. Hence Fubini's theorem and conditional independence give}
\begin{align}
\mathbb E\!\left[ \sum_{i=1}^{L_k}\Delta_{k,i} \,\middle|\, \mathcal F_{t_k},P^c \right] &= \sum_{i=1}^{\infty} \mathbb E\!\left[ \mathbf 1\{L_k\ge i\}\Delta_{k,i} \,\middle|\, \mathcal F_{t_k},P^c \right] 
= \sum_{i=1}^{\infty}\gamma^{i-1} \mathbb E\!\left[ \Delta_{k,i} \,\middle|\, \mathcal F_{t_k},P^c \right]. \label{eq:lhs-expand}
\end{align}
Substituting \eqref{eq:delta-expansion} into \eqref{eq:lhs-expand} and exchanging the order of summation,
which is justified by absolute summability, gives
\begin{align}
\mathbb E\!\left[
\sum_{i=1}^{L_k}\Delta_{k,i}
\,\middle|\,
\mathcal F_{t_k},P^c
\right]
&=
\sum_{i=1}^{\infty}\gamma^{i-1}
\sum_{h=0}^{\infty}\gamma^{h+1}
\mathbb E\!\left[
E_{k,i+h}
\,\middle|\,
\mathcal F_{t_k},P^c
\right] =
\sum_{t=1}^{\infty}
t\,\gamma^{t}
\mathbb E\!\left[
E_{k,t}
\,\middle|\,
\mathcal F_{t_k},P^c
\right].
\label{eq:middle-series}
\end{align}
{Similarly, $\mathbb E[\sum_{t\ge1}\mathbf 1\{L_k\ge t\}\gamma t |E_{k,t}|\,|\,\mathcal F_{t_k},P^c]<\infty$, so Fubini's theorem and conditional independence give}
\begin{align}
\mathbb E\!\left[
\sum_{t=1}^{L_k}t\,\gamma\,E_{k,t}
\,\middle|\,
\mathcal F_{t_k},P^c
\right]
&=
\sum_{t=1}^{\infty}
t\,\gamma\,
\mathbb E\!\left[
\mathbf 1\{L_k\ge t\}E_{k,t}
\,\middle|\,
\mathcal F_{t_k},P^c
\right] =
\sum_{t=1}^{\infty}
t\,\gamma^{t}
\mathbb E\!\left[
E_{k,t}
\,\middle|\,
\mathcal F_{t_k},P^c
\right].
\label{eq:rhs-expand}
\end{align}
Comparing \eqref{eq:middle-series} and \eqref{eq:rhs-expand} proves \eqref{eq:agg-vd}.
\end{proof}

\subsection{Proof of Lemma \ref{lemma:value-decomposition-sum-robust}}
\label{app:value-decomposition-sum-robust}
\begin{proof}
Apply Lemma~\ref{lem:value-decomposition-sum} with the risk profile $\alpha_k$. This gives
\begin{equation}
\mathbb E\!\left[
\sum_{i=1}^{L_k}
\Bigl(
V_{\phi_k,\alpha_k}^{\pi} (s_{k,i})-V^{\pi}(s_{k,i})
\Bigr)
\,\middle|\,
\mathcal F_{t_k},\,P^c
\right]
=
\mathbb E\!\left[
\sum_{i=1}^{L_k}i\,\gamma\, E_{k,i}
\,\middle|\,
\mathcal F_{t_k},\,P^c
\right].
\label{eq:agg-vd-plus}
\end{equation}
Next, repeat the same argument as in Lemma~\ref{lem:value-decomposition-sum} for the constant risk level $\underline{\alpha}$. The corresponding one-step discrepancy is
\begin{align*}
\rho_{\phi_k(s_{k,i},a_{k,i})}^{\underline{\alpha}} \!\bigl(P^\top {V_{\phi_k,\underline{\alpha}}^{\pi}} \bigr) - (P^c_{s_{k,i},a_{k,i}})^\top {V_{\phi_k,\underline{\alpha}}^{\pi}} = -E^-_{k,i}.
\end{align*}
Therefore,
\begin{equation}
\mathbb E\!\left[
\sum_{i=1}^{L_k}
\Bigl(
  {V_{\phi_k,\underline{\alpha}}^{\pi}} (s_{k,i})-V^{\pi}(s_{k,i})
\Bigr)
\,\middle|\,
\mathcal F_{t_k},\,P^c
\right]
=
\mathbb E\!\left[
\sum_{i=1}^{L_k}i\,\gamma\, (-E^-_{k,i})
\,\middle|\,
\mathcal F_{t_k},\,P^c
\right].
\label{eq:agg-vd-minus}
\end{equation}
Subtracting \eqref{eq:agg-vd-minus} from \eqref{eq:agg-vd-plus} and using linearity of conditional expectation,
we obtain
\begin{equation*}
\mathbb E\!\left[
\sum_{i=1}^{L_k}
\Bigl(
V_{\phi_k,\alpha_k}^{\pi} (s_{k,i})-{V_{\phi_k,\underline{\alpha}}^{\pi}} (s_{k,i})
\Bigr)
\,\middle|\,
\mathcal F_{t_k},\,P^c
\right] =
\mathbb E\!\left[
\sum_{i=1}^{L_k}i\,\gamma\, 
\bigl(E_{k,i}+E^-_{k,i}\bigr)
\,\middle|\,
\mathcal F_{t_k},\,P^c
\right],
\end{equation*}
which is exactly \eqref{eq:agg-vd-floor}.
\end{proof}

\subsection{Proof of Theorem~\ref{thm:regret}}

\begin{lemma}[Dirichlet posterior quantile deviation for BR-MDP]
\label[lemma]{lem:dirichlet_quantile_BR-MDP}
Fix pseudo-episode $k$ and $(s,a)\in\mathcal S\times\mathcal A$.
Conditional on $\mathcal F_{t_k}$, let $P\sim \Dir(\phi_k(s,a)),$ 
{where $\phi_k(s,a)=(\phi_k(s,a,s'))_{s'\in\mathcal S}$ is the Dirichlet parameter vector.}
Let
$\phi_{k,0}(s,a):=\sum_{s'\in\mathcal S}\phi_k(s,a,s')
= N_k(s,a)+S$
{denote the scalar total concentration parameter.}
Define the posterior mean $\bar P_k(\cdot\mid s,a):=\phi_k(s,a)/\phi_{k,0}(s,a).$
Then for any $\mathcal F_{t_k}$-measurable vector
$V\in\left[0,\frac{1}{1-\gamma}\right]^{S}$ and any $\alpha\in(0,1)$,
\begin{align}
\rho_{\phi_k(s,a)}^\alpha(P^\top V)-\bar P_k(\cdot\mid s,a)^\top V &\le \frac{1}{1-\gamma} \sqrt{ \frac{2}{{\phi_{k,0}(s,a)}} \ln\!\left(\frac{1}{1-\alpha}\right) }, \label{eq:dirichlet_quantile_upper_BR-MDP}\\
\bar P_k(\cdot\mid s,a)^\top V-\rho_{\phi_k(s,a)}^\alpha(P^\top V) &\le \frac{1}{1-\gamma} \sqrt{ \frac{2}{{\phi_{k,0}(s,a)}} \ln\!\left(\frac{1}{\alpha}\right) }. \label{eq:dirichlet_quantile_lower_BR-MDP}
\end{align}
\end{lemma}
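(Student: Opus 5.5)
Conditioning on $\mathcal F_{t_k}$ makes $V$ a fixed vector in $[0,\frac{1}{1-\gamma}]^S$. The plan is to prove a sub-Gaussian tail bound for the scalar $X:=P^\top V$ around its mean $\bar P_k(\cdot\mid s,a)^\top V$, with variance proxy $\frac{1}{4(1-\gamma)^2\phi_{k,0}(s,a)}$ or anything up to $\frac{1}{(1-\gamma)^2\phi_{k,0}(s,a)}$, and then invert the tail bound to control the quantile. The key input is that a Dirichlet random vector is sub-Gaussian in every direction: for $P\sim\Dir(\phi)$ with total concentration $\phi_0$ and any $v$ with range $\max v-\min v\le B$, the mean-centred linear form $P^\top v-\bar P^\top v$ is sub-Gaussian with variance proxy $B^2/(4(\phi_0+1))\le B^2/\phi_0$. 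This is the known optimal sub-Gaussianity of Beta and Dirichlet laws, due to Marchal and Arbel. Here $B=\frac{1}{1-\gamma}$.

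A more self-contained route to the sub-Gaussian bound would use the Gamma representation $P=G/\sum G$ together with a concentration argument, but this gets messy. I would cite the Marchal--Arbel result instead. The only thing to check is that the range, not the sup norm, is what enters, and the range is at most $\frac{1}{1-\gamma}$.

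The upper bound \eqref{eq:dirichlet_quantile_upper_BR-MDP} follows from the Chernoff bound $\mathbb P(X-\mu\ge u)\le \exp(-u^2\phi_{k,0}(1-\gamma)^2/2)$. Set $u^*=\frac{1}{1-\gamma}\sqrt{\frac{2}{\phi_{k,0}}\ln\frac{1}{1-\alpha}}$, which makes this tail equal to $1-\alpha$. For any $u>u^*$ the tail is strictly below $1-\alpha$, so $\mathbb P(X\le \mu+u)>\alpha$. By the definition of the left quantile, $\rho^\alpha(X)\le \mu+u$, and letting $u\downarrow u^*$ gives the bound.

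The lower bound \eqref{eq:dirichlet_quantile_lower_BR-MDP} is symmetric. For $u>u_*:=\frac{1}{1-\gamma}\sqrt{\frac{2}{\phi_{k,0}}\ln\frac1\alpha}$, the lower tail satisfies $\mathbb P(X\le \mu-u)<\alpha$. Continuity of the Dirichlet law is not even needed here: for every $z\le\mu-u$ we have $\mathbb P(X\le z)<\alpha$, so $\rho^\alpha(X)>\mu-u$. Letting $u\downarrow u_*$ finishes the argument.

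The main obstacle is the sub-Gaussian constant for the Dirichlet law. The whole lemma rests on getting variance proxy of order $1/\phi_{k,0}$ with an explicit constant no worse than $B^2/\phi_{k,0}$. If citing Marchal--Arbel is not acceptable, I would fall back on a weaker route: view $P^\top V$ as a function of independent Gammas and use a Bernstein-type bound. That route yields a worse constant, which could be absorbed into the polylog factors in Theorem~\ref{thm:main-order}.
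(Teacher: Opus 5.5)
Your proof has the same structure as the paper's. The paper takes a one-sided Gaussian tail bound $\exp(-\phi_{k,0}(1-\gamma)^2u^2/2)$ for $P^\top V$ around its posterior mean from Lemmas B.1, B.4 and Corollary B.2 of \citet{agrawal2023optimistic}, and then inverts it into left-quantile bounds. Your inversion is correct. For the lower bound it is slightly cleaner than the paper's: letting $u\downarrow u_*$ gives $\mathbb P(X\le z)<\alpha$ for every $z<\mu-u_*$. So you do not need the paper's appeal to continuity and strict monotonicity of the posterior CDF of $P^\top V$.

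The step that is not justified as written is the concentration input. You flagged that ``the range, not the sup norm, is what enters,'' but the citation does not settle this. A sub-Gaussianity statement for the Dirichlet \emph{vector} is measured in $\|u\|_2$. Taking $u=\lambda V$, even after recentering $V$ by a constant, controls $P^\top V$ only through $\|V-c\mathbf 1\|_2^2$, which can be of order $S/(1-\gamma)^2$. That does not give the $S$-free, range-based proxy the lemma needs.

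What Marchal and Arbel do give cleanly is the Beta bound $\sigma^2_{\mathrm{opt}}(\mathrm{Beta}(a,b))\le 1/(4(a+b+1))$, and the reduction to it is short. Write $\bar P_k$ for $\bar P_k(\cdot\mid s,a)$ and $B=1/(1-\gamma)$.
\begin{itemize}
\item For fixed $\lambda$, the map $v\mapsto\mathbb E\exp\bigl(\lambda(P-\bar P_k)^\top v\bigr)$ is convex. Its maximum over $[0,B]^S$ is therefore attained at a vertex $v=B\mathbf 1_A$, where $(P-\bar P_k)^\top v=B\bigl(P(A)-\bar P_k(A)\bigr)$.
\item By the aggregation property, $P(A)=\sum_{s'\in A}P(s')\sim\mathrm{Beta}(\phi_A,\phi_{k,0}-\phi_A)$ with $\phi_A:=\sum_{s'\in A}\phi_k(s,a,s')$. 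The cases $A=\emptyset$ and $A=\mathcal S$ are trivial.
\item Hence $\mathbb E\exp\bigl(\lambda(P-\bar P_k)^\top V\bigr)\le\exp\bigl(\lambda^2B^2/(8(\phi_{k,0}+1))\bigr)$ for every $V\in[0,B]^S$. This is stronger than what the lemma requires.
\end{itemize}
With this reduction inserted, or by citing the tail bounds of \citet{agrawal2023optimistic} directly as the paper does, your argument is complete.

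Your Gamma/Bernstein fallback would not prove the lemma as stated, since its worse constant could only be absorbed at the level of the regret theorem.
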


\begin{proof}
If $V$ is constant, then $P^\top V=\bar P_k(\cdot\mid s,a)^\top V$ almost surely, and both
\eqref{eq:dirichlet_quantile_upper_BR-MDP}--\eqref{eq:dirichlet_quantile_lower_BR-MDP}
are trivial. Hence we only consider the non-constant case.

If $S=1$, then $p\equiv \bar P_k(\cdot\mid s,a)\equiv 1$, so the result is again trivial.
Thus it remains to consider the case $S\ge 2$. In this case, $\phi_{k,0}(s,a)=N_k(s,a)+S\ge S\ge 2.$
Define $Y:=(1-\gamma)P^\top V$ and $\mu:=(1-\gamma)\bar P_k(\cdot\mid s,a)^\top V.$
By \citet[Lemma~B.1, Lemma~B.4 and the proof of Corollary~B.2]{agrawal2023optimistic}, we have the one-sided Gaussian tail bounds
\begin{gather*}
\mathbb P\!\left(Y-\mu\ge t \,\middle|\, \mathcal F_{t_k}\right) \le \exp\!\left(-\frac{\phi_{k,0}(s,a)t^2}{2}\right),\\
\mathbb P\!\left(\mu-Y\ge t \,\middle|\, \mathcal F_{t_k}\right) \le \exp\!\left(-\frac{\phi_{k,0}(s,a)t^2}{2}\right),
\end{gather*}
for all $t>0$.
Taking $t=(1-\gamma)\epsilon$ in these bounds yields, for every $\epsilon>0$,
\begin{align}
\mathbb P\!\left( P^\top V-\bar P_k(\cdot\mid s,a)^\top V \ge \epsilon \,\middle|\, \mathcal F_{t_k} \right) &\le\exp\! \left( -\frac{\phi_{k,0}(s,a)(1-\gamma)^2\epsilon^2}{2} \right), \label{eq:dirichlet_upper_tail_V_cited}\\
\mathbb P\!\left( \bar P_k(\cdot\mid s,a)^\top V-P^\top V \ge \epsilon \,\middle|\, \mathcal F_{t_k} \right) &\le \exp\!\left( -\frac{\phi_{k,0}(s,a)(1-\gamma)^2\epsilon^2}{2} \right). \label{eq:dirichlet_lower_tail_V_cited}
\end{align}

We now prove \eqref{eq:dirichlet_quantile_upper_BR-MDP}. Let $\epsilon_+
:=
\frac{1}{1-\gamma}
\sqrt{
\frac{2}{\phi_{k,0}(s,a)}
\ln\!\left(\frac{1}{1-\alpha}\right)
}.$
Then by \eqref{eq:dirichlet_upper_tail_V_cited},
\begin{align*}
\mathbb P\!\left( P^\top V \le \bar P_k(\cdot\mid s,a)^\top V+\epsilon_+ \,\middle|\, \mathcal F_{t_k} \right) \ge \alpha.
\end{align*}
By the definition of the left $\alpha$-quantile,
\begin{align*}
\rho_{\phi_k(s,a)}^\alpha(P^\top V) \le \bar P_k(\cdot\mid s,a)^\top V+\epsilon_+,
\end{align*}
which proves \eqref{eq:dirichlet_quantile_upper_BR-MDP}.

Next, let $\epsilon_-
:=
\frac{1}{1-\gamma}
\sqrt{
\frac{2}{\phi_{k,0}(s,a)}
\ln\!\left(\frac{1}{\alpha}\right)
}.$
Since $V$ is non-constant and the Dirichlet parameter vector has strictly positive components, $\phi_k(s,a,s')\ge 1$ for all $s'\in\mathcal S$, $P^\top V$ has a continuous distribution with a strictly increasing CDF on its support.
By \eqref{eq:dirichlet_lower_tail_V_cited},
\begin{align*}
\mathbb P\!\left( P^\top V \le \bar P_k(\cdot\mid s,a)^\top V-\epsilon_- \,\middle|\, \mathcal F_{t_k} \right) \le \alpha.
\end{align*}
Therefore, the preceding bound and the continuity and strict monotonicity of the CDF imply
\begin{align*}
\rho_{\phi_k(s,a)}^\alpha(P^\top V) \ge \bar P_k(\cdot\mid s,a)^\top V-\epsilon_-,
\end{align*}
which proves \eqref{eq:dirichlet_quantile_lower_BR-MDP}.
\end{proof}

\begin{lemma}\label{lem:mk-constant}
Let $\gamma\in(0,1)$ and let $L_1,\dots,L_K$ be i.i.d.\ geometric random variables on $\{1,2,\dots\}$ with $\mathbb{P}(L_k=\ell)=(1-\gamma)\gamma^{\ell-1}.$
Define $u:=\max\left\{e,\frac{K}{(1-\gamma)^2}\right\}$, $m:=\left\lceil \frac{1}{1-\gamma}\left(\log u+\log\log u+2\right)\right\rceil$, and
\begin{align*}
\mathfrak T:= \frac{4\gamma}{1-\gamma} \left(\log u+\log\log u+4\right)^2.
\end{align*}
Then
\begin{align*}
\mathbb E\!\left[ \frac{1}{1-\gamma}\sum_{k=1}^K\sum_{i=1}^{L_k} i\,\gamma \,\mathbb I\{L_k>m\} \right] \le \mathfrak T.
\end{align*}
\end{lemma}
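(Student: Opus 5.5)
The plan is to compute the expectation exactly using the memoryless property of the geometric distribution, and then substitute the choice of $m$. Since the $L_k$ are identically distributed, linearity gives
\[
\mathbb E\Bigl[\tfrac{1}{1-\gamma}\sum_{k=1}^K\sum_{i=1}^{L_k} i\gamma\,\mathbb I\{L_k>m\}\Bigr]
=\frac{K\gamma}{2(1-\gamma)}\,\mathbb E\bigl[L(L+1)\mathbb I\{L>m\}\bigr],
\]
where $L\sim\mathrm{Geom}(1-\gamma)$ and we used $\sum_{i=1}^{L}i=L(L+1)/2$. Independence of the $L_k$ is not needed.

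\textbf{Step 1 (tail moment via memorylessness).} I would use $\mathbb P(L>m)=\gamma^m$. Conditionally on $\{L>m\}$, the variable $L$ has the law of $m+L'$ with $L'\sim\mathrm{Geom}(1-\gamma)$. The moments needed are $\mathbb E L'=1/(1-\gamma)$ and $\mathbb E L'^2=(1+\gamma)/(1-\gamma)^2\le 2/(1-\gamma)^2$. Expanding $\mathbb E[(m+L')(m+L'+1)]$ then gives a bound of the form
\[
\mathbb E\bigl[L(L+1)\mathbb I\{L>m\}\bigr]\le \gamma^m\Bigl(m+\tfrac{2}{1-\gamma}\Bigr)^2 .
\]
This needs $m\ge1$, which holds because $\log u\ge1$.

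\textbf{Step 2 (plug in $m$).} Write $\Lambda:=\log u+\log\log u$. Since $\gamma\le e^{-(1-\gamma)}$, we get
\[
\gamma^m\le e^{-(1-\gamma)m}\le e^{-\Lambda-2}=\frac{1}{e^2\,u\log u}.
\]
The ceiling in the definition of $m$ gives $m\le(\Lambda+2)/(1-\gamma)+1$. Hence
\[
m+\frac{2}{1-\gamma}\le\frac{\Lambda+4}{1-\gamma}+1\le\frac{2(\Lambda+4)}{1-\gamma}.
\]
Combining the two estimates yields
\[
\text{LHS}\le\frac{K\gamma}{2(1-\gamma)}\cdot\frac{4(\Lambda+4)^2}{(1-\gamma)^2}\cdot\frac{1}{e^2u\log u}.
\]

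\textbf{Step 3 (use $u$).} The choice $u\ge K/(1-\gamma)^2$ gives $K/((1-\gamma)^2u)\le1$, and $u\ge e$ gives $\log u\ge1$. The bound therefore reduces to $\frac{2\gamma}{e^2(1-\gamma)}(\Lambda+4)^2$, which is at most $\mathfrak T$ with considerable room.

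The main obstacle is bookkeeping rather than ideas. One must check that the factor $(1-\gamma)^{-2}$ produced by the second moment of $L'$ is exactly cancelled by $u\ge K/(1-\gamma)^2$. One must also confirm that the ceiling in $m$ only costs an additive constant that is absorbed into the $+4$. I would verify the crude inequality $m+2/(1-\gamma)\le 2(\Lambda+4)/(1-\gamma)$ carefully; since $1\le 1/(1-\gamma)$, it holds with slack.
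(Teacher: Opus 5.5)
Your proposal is correct and follows essentially the same route as the paper: linearity of expectation, the memoryless decomposition $L=m+L'$ on $\{L>m\}$ with $\mathbb P(L>m)=\gamma^m$, the bound $\gamma^m\le e^{-(1-\gamma)m}\le 1/(e^2u\log u)$, and the choice $u\ge K/(1-\gamma)^2$ to cancel the $(1-\gamma)^{-2}$ coming from the second moment of $L'$. The differences are cosmetic: you keep $L(L+1)/2$ and bound by the perfect square $(m+2/(1-\gamma))^2$, whereas the paper bounds by $L^2$ and uses $m^2+2m/(1-\gamma)+2/(1-\gamma)^2$ together with $(1-\gamma)m\le\log u+\log\log u+3$.
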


\begin{proof}
Let $q:=1-\gamma$, and let $L\sim\mathrm{Geom}(q)$ on $\{1,2,\dots\}$. By linearity of expectation,
\begin{equation*}
\mathbb E\!\left[
\frac{1}{1-\gamma}\sum_{k=1}^K\sum_{i=1}^{L_k}
\gamma i\,\mathbb I\{L_k>m\}
\right]
=
\frac{K\gamma}{q}
\mathbb E\!\left[
\frac{L(L+1)}{2}\mathbb I\{L>m\}
\right]\le
\frac{K\gamma}{q}
\mathbb E\!\left[
L^2\mathbb I\{L>m\}
\right].
\end{equation*}
By the memoryless property of the geometric distribution, conditional on $\{L>m\}$ we can write $L=m+\widetilde L$, where $\widetilde L\sim\mathrm{Geom}(q)$. Hence
\begin{align*}
\mathbb E\!\left[L^2\mathbb I\{L>m\}\right] = \gamma^m\mathbb E\!\left[(m+\widetilde L)^2\right] \le \gamma^m\left(m^2+\frac{2m}{q}+\frac{2}{q^2}\right).
\end{align*}
Since $\gamma\le e^{-q}$, $qm\ge \log u+\log\log u+2$, and $K/q^2\le u$, we have $\gamma^m\le e^{-2}/(u\log u)$. Also $qm\le \log u+\log\log u+3$. Therefore,
\begin{equation*}
\frac{K\gamma}{q} \mathbb E\!\left[ L^2\mathbb I\{L>m\} \right] \le \frac{\gamma e^{-2}}{q\log u} \left[ (\log u+\log\log u+3)^2 +2(\log u+\log\log u+3) +2 \right]\le \mathfrak T.
\end{equation*}
This proves the claim.
\end{proof}

With a slight abuse of notation, for each time $t$, let $N_t(s,a)$ denote the number of visits to $(s,a)$ before time $t$.

\begin{theorem}\label{thm:regret}
Fix a deterministic interaction horizon $T$. Let
$u_T:=\max\left\{e,\frac{T}{(1-\gamma)^2}\right\},$ $
m_T:=\left\lceil \frac{1}{1-\gamma}\left(\log u_T+\log\log u_T+2\right)\right\rceil,$ $\mathfrak T_T:=
\frac{4\gamma}{1-\gamma}
\left(\log u_T+\log\log u_T+4\right)^2,$ and $\bar T:=T+m_T$.
{Define} $M_T:=\frac{2(S+1)(\bar T+SA)\sqrt{\bar T}}{SA\,\delta\,\ln 2}.$
Then:
\begin{enumerate}
    \item
    \begin{align}
BR(T) &\le \frac{\gamma m_T}{1-\gamma}\Bigg( \sqrt{16\,SA\,\bar T\,\ln\frac{1}{1-\underline{\alpha}}} + \sqrt{16\,SA\,(\bar T+S^2A)\left(\ln\left(1+\frac{SA\,M_T}{\bar T+S^2A}\right)+2\right)} \notag\\
&\qquad+ \sqrt{16\,SA\,\bar T\,\ln(2SA\,T^3m_T)} \Bigg) + \frac{\gamma SA}{1-\gamma}\,m_T^2\left\lceil \log_2 m_T\right\rceil \notag\\
&\qquad+ \frac{\pi^2}{6(1-\gamma)} + \mathfrak T_T + \frac{2\delta\,SA\,\sqrt{T}\ln(2T)}{(1-\gamma)^2}. \label{eq:main-order-proof-1}
    \end{align}

    \item
    \begin{align}
BR\text{-}R(T) &\le \frac{\gamma m_T}{1-\gamma}\Bigg( \sqrt{16\,SA\,\bar T\,\ln\frac{1}{1-\underline{\alpha}}} + \sqrt{16\,SA\,(\bar T+S^2A)\left(\ln\left(1+\frac{SA\,M_T}{\bar T+S^2A}\right)+2\right)} \notag\\
&\qquad+ \sqrt{16\,SA\,\bar T\,\ln\frac{1}{\underline{\alpha}}} + 2\sqrt{16\,SA\,\bar T\,\ln(2SA\,T^3m_T)} \Bigg) \notag\\
&\qquad+ \frac{2\gamma SA}{1-\gamma}\,m_T^2\left\lceil\log_2 m_T\right\rceil + \frac{\pi^2}{3(1-\gamma)} + 2\mathfrak T_T. \label{eq:main-order-proof-2}
    \end{align}
\end{enumerate}
\end{theorem}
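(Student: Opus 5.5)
We start from the decompositions \eqref{eq:decomp-good-bad-rigorous-sa} and \eqref{eq:decomp-BR-R}, using the no-truncation convention \eqref{ieq:BRT}. For $BR(T)$ we write $V_k-V^{\pi_k}=(V^{\pi_k}_{\phi_k,\alpha_k}-V^{\pi_k})$, since $\pi_k$ is optimal for the $\alpha_k$-quantile BR-MDP and so $V_k=V^{\pi_k}_{\phi_k,\alpha_k}$. We then apply Lemma~\ref{lem:value-decomposition-sum} conditionally on $(\mathcal F_{t_k},P^c)$ with $\pi=\pi_k$; this is legitimate because $\pi_k$ is $\mathcal F_{t_k}$-measurable. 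It turns each pseudo-episode's value gap into $\mathbb E[\sum_{i\le L_k} i\gamma E_{k,i}]$.

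For the bad-event term $\frac{1}{1-\gamma}\sum_k L_k\mathbb I\{\mathcal G_k^c\}$ we use independence of $L_k$ from $\mathcal F_{t_k}$ and Lemma~\ref{lem:bayes_optimism}(i). This gives $\mathbb E[L_k\mathbb I\{\mathcal G_k^c\}]\le \frac{1}{1-\gamma}\delta SA\ln(2k)/\sqrt k$. Summing over $k\le K_T\le T$ yields the term $2\delta SA\sqrt T\ln(2T)/(1-\gamma)^2$.

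For $BR\text{-}R(T)$ we apply Lemma~\ref{lemma:value-decomposition-sum-robust} in the same way. This produces $E_{k,i}+E^-_{k,i}$, and the two pieces are handled symmetrically below.

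To bound $E_{k,i}$ we split it as
\[
\bigl[\rho^{\alpha_k}(P^\top V)-\bar P_k^\top V\bigr]+(\bar P_k-P^c)^\top V ,
\]
with $V=V^{\pi_k}_{\phi_k,\alpha_k}$.
\begin{itemize}
\item The first bracket is controlled by Lemma~\ref{lem:dirichlet_quantile_BR-MDP}: it is at most $\frac{1}{1-\gamma}\sqrt{2\ln(1/(1-\alpha_k))/(N_k+S)}$. Because $1-\alpha_k\ge \min\{\delta r_k g_k,1-\underline\alpha\}$, we have $\ln\frac1{1-\alpha_k}\le \ln\frac1{1-\underline\alpha}+\ln\frac{1}{\delta r_kg_k}$. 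The second logarithm is bounded using $r_k\ge 1/(\bar T+SA)\cdot SA$-type estimates and $g_k\ge 1/\sqrt{\bar T}$, which produces the $\ln(1+SA M_T/(\bar T+S^2A))$ factor.
\item The second piece is a posterior-mean versus truth deviation. We handle it as in PSRL analyses: conditionally on $\mathcal F_{t_k}$, $P^c$ is posterior distributed, so a union bound over $(s,a)$, times and $m_T$ yields a Dirichlet concentration term $\sqrt{2\ln(2SAT^3m_T)/(N_k+S)}$ with failure probability $\sim 1/T^2$. That failure mass contributes the $\pi^2/(6(1-\gamma))$ constant.
\end{itemize}
For $E^-$ we use \eqref{eq:dirichlet_quantile_lower_BR-MDP} with $\underline\alpha$, which gives the $\ln(1/\underline\alpha)$ term, plus a second copy of the concentration term. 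This explains the doubled constants in part 2.

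The main obstacle is summing $\sum_k\sum_{i\le L_k} i\gamma\,\frac{1}{1-\gamma}\sqrt{c/(N_k(s_{k,i},a_{k,i})+S)}$. There are two difficulties: the weight $i$ grows within an episode, and the counts are frozen at $t_k$ rather than updated along the episode. We first truncate at length $m_T$: episodes with $L_k>m_T$ are charged the crude bound $|E|\le 1/(1-\gamma)$, and Lemma~\ref{lem:mk-constant} bounds them by $\mathfrak T_T$. On $L_k\le m_T$ we bound $i\le m_T$, which gives the prefactor $\gamma m_T/(1-\gamma)$.

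It then remains to compare $N_k$ with the current count $N_t$. Within an episode of length at most $m_T$, a pair's count can grow from $N_k$ to at most $N_k+m_T$. We use a doubling argument: steps at which $N_t(s,a)>2N_k(s,a)$ occur only while $N_k<m_T$, and each pair spends at most $\lceil\log_2 m_T\rceil$ doubling epochs of length at most $m_T$ there. This costs $SA\,m_T^2\lceil\log_2 m_T\rceil$. Elsewhere $N_k+S\ge (N_t+S)/2$, and the usual pigeonhole bound applies:
\[
\sum_t (N_t+S)^{-1/2}\le \sqrt{SA\bar T}\quad\text{via Cauchy--Schwarz, giving }\sqrt{SA\,\bar T\sum_t 1/(N_t+S)}\lesssim\sqrt{SA\bar T\ln(\cdot)} .
\]
This produces the $\sqrt{16SA\bar T\,\ln(\cdot)}$ terms; the time budget is $\bar T=T+m_T$ to account for the continued last episode. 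Combining the pieces yields \eqref{eq:main-order-proof-1} and \eqref{eq:main-order-proof-2}.
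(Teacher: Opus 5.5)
Your architecture is the paper's: the optimism split giving $2\delta SA\sqrt T\ln(2T)/(1-\gamma)^2$, Lemmas~\ref{lem:value-decomposition-sum} and~\ref{lemma:value-decomposition-sum-robust} applied conditionally on $(\mathcal F_{t_k},P^c)$, a Bayesian concentration event for $(\bar P_k-P^c)^\top V$ with $\mathcal F_{t_k}$-measurable $V$, truncation at $m_T$ via Lemma~\ref{lem:mk-constant}, a doubling charge of order $SA\,m_T^2\lceil\log_2 m_T\rceil$, and a pigeonhole sum. It falls short at the schedule term, where your estimate cannot produce the factor you attribute to it.

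If you bound $r_k$ uniformly from below (e.g.\ $r_k\ge SA/(\bar T+SA)$ via $N_k^+\ge 1$), you get $\ln\frac{1}{\delta r_kg_k}\le\ln\frac{(\bar T+SA)\sqrt{\bar T}}{SA\,\delta\ln 2}$. That logarithm is of order $\tfrac32\ln\bar T$ and multiplies $\sqrt{16SA\bar T}$. For large $T$ it exceeds the displayed $\sqrt{16SA(\bar T+S^2A)(\ln(1+\frac{SAM_T}{\bar T+S^2A})+2)}$, whose logarithm is only of order $\tfrac12\ln\bar T$. So you recover the $\widetilde O$ rate but not \eqref{eq:main-order-proof-1}--\eqref{eq:main-order-proof-2}.

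The missing idea is to keep the pair's own count inside the logarithm. Using $N_k^+(s,a)\ge (N_k(s,a)+S)/(S+1)$, $\bar N_k^+\le(\bar T+SA)/(SA)$, $\sqrt k\le\sqrt{\bar T}$ and $\ln(2k)\ge\ln 2$, the extra term is at most $\bigl(\ln\frac{M_T/2}{N_{t_k}(s,a)+S}\bigr)_+$. On your no-doubling steps, where $N_{t_k}+S\ge (N_t+S)/2$, it is at most $\bigl(\ln\frac{M_T}{N_t(s,a)+S}\bigr)_+$; this is where the factor $2$ in $M_T$ comes from. Per pair, an integral comparison then gives $\sum_{n<N}\sqrt{(\ln\frac{M_T}{n+S})_+/(n+S)}\le 2\sqrt{y(\ln(1+M_T/y)+2)}$ with $y=N+S$. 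Concavity of $y\mapsto\sqrt{y(\ln(1+M_T/y)+2)}$, together with $\sum_{s,a}(N(s,a)+S)\le \bar T+S^2A$, yields exactly the stated term, including the $\bar T+S^2A$ and the $+2$.

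Separately, $\ln\frac{1}{1-\alpha_k}\le\ln\frac{1}{1-\underline{\alpha}}+\ln\frac{1}{\delta r_kg_k}$ is false whenever $\delta r_kg_k>1$. In that case $\alpha_k=\underline{\alpha}$ and the second logarithm is negative, so you need its positive part.

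The pigeonhole step also needs fixing. The inequality $\sum_{t\le\bar T}(N_t(s_t,a_t)+S)^{-1/2}\le\sqrt{SA\bar T}$ is false (already for $S=A=1$). The correct bound is $2\sqrt{SA\bar T}$, from $\sum_{n<N}(n+S)^{-1/2}\le 2\sqrt N$ per pair and $\sum_{s,a}\sqrt{N(s,a)}\le\sqrt{SA\bar T}$. Routing it through Cauchy--Schwarz and $\sum_t 1/(N_t+S)$ would add a $\sqrt{\ln\bar T}$ factor that the statement does not contain; the only logarithms in the displayed bounds come from the confidence widths.

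The constant $16$ is $(\sqrt2\cdot\sqrt2\cdot 2)^2$: the Dirichlet width, the relation $N_{t_k}+S\ge(N_t+S)/2$, and the pigeonhole. With these repairs your argument gives both bounds. The doubled constants in part~2 ($2\mathfrak T_T$, the doubled $m_T^2\lceil\log_2 m_T\rceil$ charge, and $\pi^2/(3(1-\gamma))$) come from $|E_{k,i}+E^-_{k,i}|\le 2/(1-\gamma)$ and from the union of the two concentration events. They do not come only from the second copy of the concentration width.
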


\begin{proof}
For notational convenience, extend the Bernoulli restart process and the corresponding trajectory beyond time $T$ only for the purpose of the proof. Let $I_k^T:=\mathbb I\{k\le K_T\}$. Since every pseudo-episode has length at least one, $K_T\le T$, and sums over pseudo-episodes started by time $T$ can be written as sums over $k=1,\ldots,T$ multiplied by $I_k^T$. 

 {Using the upper bound in \eqref{ieq:BRT},}
\begin{align*}
{BR(T)} {\le} {\mathbb E\!\left[ \sum_{k=1}^{T}I_k^T\sum_{i=1}^{L_k} \Bigl(V^*(s_{k,i})-V^{\pi_k}(s_{k,i})\Bigr) \right].}
\end{align*}
 {Under the same no-truncation convention for the robust-optimal benchmark,}
\begin{align*}
{BR\text{-}R(T)} {\le} {\mathbb E\!\left[ \sum_{k=1}^{T}I_k^T\sum_{i=1}^{L_k} \Bigl(V^*_{\phi_k,\underline{\alpha}}(s_{k,i})-V^{\pi_k}_{\phi_k,\underline{\alpha}}(s_{k,i})\Bigr) \right].}
\end{align*}
For each started pseudo-episode $k$, recall  $\mathcal G_k=\bigcap_{s,a}\mathcal G_k(s,a)$
is the optimism event from Lemma~\ref{lem:bayes_optimism}. Since
$1-\alpha_k(s,a)
\le
\delta\frac{N_k^+(s,a)}{\bar N_k^+}\frac{\ln(2k)}{\sqrt{k}}$,
we have
\begin{align*}
\mathbb P(\mathcal G_k^c\mid {\mathcal F_{t_k}}) \le \sum_{s,a}(1-\alpha_k(s,a)) \le \delta\,\frac{\ln(2k)}{\sqrt{k}}\sum_{s,a}\frac{N_k^+(s,a)}{\bar N_k^+} = \delta\,SA\,\frac{\ln(2k)}{\sqrt{k}}.
\end{align*}
Moreover, on $\mathcal G_k$, $V^*(s)\le {V_k(s)}=V_{\phi_k,\alpha_k}^*(s),$ $ \forall s\in\mathcal S.$

For part~(i), define $\Delta_{k,i}:={V_k(s_{k,i})}-V^{\pi_k}(s_{k,i})$. Then, for $s_{k,i}$,
\begin{align*}
V^*(s_{k,i})-V^{\pi_k}(s_{k,i}) = \Delta_{k,i} +\Big(V^*(s_{k,i})-{V_k(s_{k,i})}\Big) \le \Delta_{k,i}+\frac{1}{1-\gamma}\,\mathbb I\{\mathcal G_k^c\}.
\end{align*}
Therefore,
\begin{align*}
{BR(T)} \le \mathbb E\!\left[ \sum_{k=1}^{T}I_k^T\sum_{i=1}^{L_k}\Delta_{k,i} \right] + \frac{1}{1-\gamma} \sum_{k=1}^{T} \mathbb E\!\left[I_k^T L_k\,\mathbb I\{\mathcal G_k^c\}\right].
\end{align*}
{For the  non-optimistic-event term, we first use $I_k^T\le 1$. The event $\mathcal G_k^c$ is determined by $(\mathcal F_{t_k},P^c)$, whereas the pseudo-episode length $L_k$ is generated by the independent restart randomness within pseudo-episode $k$. Hence $L_k$ is independent of $\mathcal G_k^c$ and satisfies $\mathbb E[L_k]=1/(1-\gamma)$.} Thus,
\begin{align*}
\frac{1}{1-\gamma}\sum_{k=1}^T \mathbb E\!\left[I_k^T L_k\,\mathbb I\{\mathcal G_k^c\}\right] &\le {\frac{1}{1-\gamma}\sum_{k=1}^T \mathbb E\!\left[L_k\,\mathbb I\{\mathcal G_k^c\}\right]}
{=} {\frac{1}{(1-\gamma)^2}\sum_{k=1}^T \mathbb P(\mathcal G_k^c)}
{=} {\frac{1}{(1-\gamma)^2}\sum_{k=1}^T \mathbb E\!\left[\mathbb P(\mathcal G_k^c\mid \mathcal F_{t_k})\right]}\\
&\le \frac{\delta SA}{(1-\gamma)^2}\sum_{k=1}^T\frac{\ln(2k)}{\sqrt{k}}
\le \frac{2\delta SA\,\sqrt{T}\ln(2T)}{(1-\gamma)^2}.
\end{align*}
We now bound $\mathbb E[\sum_{k=1}^{T}I_k^T\sum_{i=1}^{L_k}\Delta_{k,i}]$. For each started pseudo-episode $k$, define $\mathcal P_k^1$ as the event that, for all $(s,a)\in\mathcal S\times\mathcal A$,
\begin{align*}
\Big| (\bar P_k(\cdot\mid s,a)-P^c_{s,a})^\top {V_k} \Big| \le \frac{1}{1-\gamma} \sqrt{\frac{2\ln(2SA\,T\,m_T\,t_k^2)}{N_{t_k}(s,a)+S}}.
\end{align*}
Conditioned on ${\mathcal F_{t_k}}$, the vector {$V_k$} is deterministic and $P^c_{s,a}\mid {\mathcal F_{t_k}}\sim \Dir(\phi_k(s,a))$. Hence, by \eqref{eq:dirichlet_upper_tail_V_cited}, \eqref{eq:dirichlet_lower_tail_V_cited}, and a union bound,
\begin{align*}
\mathbb P((\mathcal P_k^1)^c\mid {\mathcal F_{t_k}})\le \frac{1}{T\,m_T\,t_k^2}.
\end{align*}
Using $\Delta_{k,i}\le (1-\gamma)^{-1}$, we split according to $\mathcal P_k^1$:
\begin{align*}
\mathbb E\!\left[ \sum_{k=1}^{T}I_k^T\sum_{i=1}^{L_k}\Delta_{k,i} \right] &\le \mathbb E\!\left[ \sum_{k=1}^{T}I_k^T\sum_{i=1}^{L_k} \Delta_{k,i}\mathbb I\{ {\mathcal P_k^1}\} \right]
+ \frac{1}{1-\gamma} \sum_{k=1}^{T} \mathbb E\!\left[ I_k^T L_k\mathbb I\{ {(\mathcal P_k^1)^c}\} \right].
\end{align*}
Since $\mathcal P_k^1\in {\mathcal F_{t_k}\vee\sigma(P^c)}$, Lemma~\ref{lem:value-decomposition-sum} gives
\begin{align*}
\mathbb E\!\left[ \sum_{k=1}^{T}I_k^T\sum_{i=1}^{L_k} \Delta_{k,i}\mathbb I\{ {\mathcal P_k^1}\} \right]= \mathbb E\!\left[ \sum_{k=1}^{T}I_k^T\sum_{i=1}^{L_k} i\,\gamma E_{k,i}\mathbb I\{ {\mathcal P_k^1}\} \right],
\end{align*}
where
$E_{k,i}:=\rho^{\alpha_k}_{\phi_k(s_{k,i},a_{k,i})}(P^\top {V_k})-(P^c_{s_{k,i},a_{k,i}})^\top {V_k}$.
Moreover, {by the same independence between $L_k$ and $\mathbb I\{(\mathcal P_k^1)^c\}$, and} since $t_k\ge k$ and $m_T\ge (1-\gamma)^{-1}$,
\begin{align*}
\frac{1}{1-\gamma} \sum_{k=1}^{T} \mathbb E\!\left[ I_k^T L_k\mathbb I\{ {(\mathcal P_k^1)^c}\} \right] \le \frac{\pi^2}{6(1-\gamma)}.
\end{align*}
Let $N'_t(s,a):=N_t(s,a)+S$, and define
\begin{align*}
\mathcal{B}_k := \left\{ N'_{t_{k+1}-1}(s,a)+1\le 2N'_{t_k}(s,a) \ \text{for all } (s,a)\in\mathcal S\times\mathcal A \right\}.
\end{align*}
For the last started pseudo-episode, $t_{K_T+1}$ is interpreted in the proof-only continuation. Since $E_{k,i}\le (1-\gamma)^{-1}$, the preceding Bellman-error sum is bounded by
\begin{align*}
&\underbrace{
\frac{1}{1-\gamma}
\sum_{k=1}^{T}I_k^T\sum_{i=1}^{L_k}
\gamma i\,\mathbb I\{L_k>m_T\}
}_{(I)}
+
\underbrace{
\frac{1}{1-\gamma}
\sum_{k=1}^{T}I_k^T\sum_{i=1}^{L_k}
\gamma i\,\mathbb I\{L_k\le m_T\}\mathbb I\{\mathcal B_k^c\}
}_{(II)}\\
&\qquad+
\underbrace{
\sum_{k=1}^{T}I_k^T\sum_{i=1}^{L_k}
\gamma i\,E_{k,i}
\mathbb I\{L_k\le m_T\}\mathbb I\{\mathcal B_k\}\mathbb I\{ {\mathcal P_k^1}\}
}_{(III)} .
\end{align*}
By Lemma~\ref{lem:mk-constant} with $K=T$, $\mathbb E[(I)]\le \mathfrak T_T$. For term $(II)$, the witness-pair argument gives
\begin{align*}
\sum_{k=1}^{T}I_k^T\mathbb I\{L_k\le m_T\}\mathbb I\{\mathcal B_k^c\} \le SA\lceil\log_2 m_T\rceil,
\end{align*}
and hence $\mathbb E[(II)]\le \frac{\gamma SA}{1-\gamma}\,m_T^2\lceil\log_2 m_T\rceil$.

It remains to bound $(III)$. On $\{L_k\le m_T\}\cap\mathcal B_k\cap {\mathcal P_k^1}$, we have $i\le m_T$. By Lemma~\ref{lem:dirichlet_quantile_BR-MDP}, the definition of $\mathcal P_k^1$, and
\begin{align*}
\ln\frac{1}{1-\alpha_k(s,a)} \le \ln\frac{1}{1-\underline{\alpha}} + \left( \ln\frac{\bar N_k^+\sqrt{k}}{\delta\,N_k^+(s,a)\ln(2k)} \right)_+,
\end{align*}
where $(x)_+:=\max\{x,0\}$, we have
\begin{align*}
E_{k,i} &\le \frac{1}{1-\gamma} \sqrt{\frac{2\ln\frac{1}{1-\underline{\alpha}}}{N_{t_k}(s_{k,i},a_{k,i})+S}}
+ \frac{1}{1-\gamma} \frac{ \sqrt{ 2\left( \ln\frac{\bar N_k^+\sqrt{k}} {\delta\,N_k^+(s_{k,i},a_{k,i})\ln(2k)} \right)_+ } }{\sqrt{N_{t_k}(s_{k,i},a_{k,i})+S}}
+ \frac{1}{1-\gamma} \sqrt{\frac{2\ln(2SA\,T\,m_T\,t_k^2)}{N_{t_k}(s_{k,i},a_{k,i})+S}} .
\end{align*}
Since $k\le T$, $t_k\le T$, $\bar N_k^+\le(\bar T+SA)/SA$, $\ln(2k)\ge\ln2$, and $N_k^+(s,a)\ge (N_k(s,a)+S)/(S+1)$,
\begin{align*}
\left( \ln\frac{\bar N_k^+\sqrt{k}}{\delta\,N_k^+(s,a)\ln(2k)} \right)_+ \le \left( \ln\frac{(S+1)(\bar T+SA)\sqrt{\bar T}} {SA\,\delta\,\ln2\,(N_k(s,a)+S)} \right)_+.
\end{align*}
Also, $\ln(2SA\,T\,m_T\,t_k^2)\le \ln(2SA\,T^3m_T)$. Therefore,
\begin{align*}
(III)
&\le
\frac{\gamma m_T}{1-\gamma}\sqrt{2\ln\frac{1}{1-\underline{\alpha}}}
\sum_{k=1}^{T}I_k^T\sum_{i=1}^{L_k}
\frac{\mathbb I\{L_k\le m_T\}\mathbb I\{\mathcal B_k\}}{\sqrt{N_{t_k}(s_{k,i},a_{k,i})+S}}\\
&\quad+
\frac{\gamma m_T}{1-\gamma}\sqrt{2}
\sum_{k=1}^{T}I_k^T\sum_{i=1}^{L_k}
\frac{
\mathbb I\{L_k\le m_T\}\mathbb I\{\mathcal B_k\}
\sqrt{\left(
\ln\frac{(S+1)(\bar T+SA)\sqrt{\bar T}}
{SA\,\delta\,\ln2\,(N_{t_k}(s_{k,i},a_{k,i})+S)}
\right)_+}
}{\sqrt{N_{t_k}(s_{k,i},a_{k,i})+S}}\\
&\quad+
\frac{\gamma m_T}{1-\gamma}\sqrt{2\ln(2SA\,T^3m_T)}
\sum_{k=1}^{T}I_k^T\sum_{i=1}^{L_k}
\frac{\mathbb I\{L_k\le m_T\}\mathbb I\{\mathcal B_k\}}{\sqrt{N_{t_k}(s_{k,i},a_{k,i})+S}} .
\end{align*}
The short augmented steps in $(III)$ consist of the first $T$ real interactions plus, only if the last pseudo-episode is short, at most $m_T$ additional proof-only steps. Hence their total number is at most $\bar T=T+m_T$. On $\mathcal B_k$, for time within pseudo-episode $k$,
\begin{align*}
N_t(s_t,a_t)+S\le 2\bigl(N_{t_k}(s_t,a_t)+S\bigr).
\end{align*}
Thus the two count-sums without logarithmic weights satisfy
\begin{align*}
\sum_{k=1}^{T}I_k^T\sum_{i=1}^{L_k} \frac{\mathbb I\{L_k\le m_T\}\mathbb I\{\mathcal B_k\}}{\sqrt{N_{t_k}(s_{k,i},a_{k,i})+S}} \le \sqrt{2}\sum_{t=1}^{\bar T} \frac{1}{\sqrt{N_t(s_t,a_t)+S}}\le \sqrt{8SA\bar T}.
\end{align*}
For the logarithmically weighted sum, using the definition of $M_T$, the same argument gives
\begin{align*}
&\sum_{k=1}^{T}I_k^T\sum_{i=1}^{L_k}
\frac{
\mathbb I\{L_k\le m_T\}\mathbb I\{\mathcal B_k\}
\sqrt{\left(
\ln\frac{(S+1)(\bar T+SA)\sqrt{\bar T}}
{SA\,\delta\,\ln2\,(N_{t_k}(s_{k,i},a_{k,i})+S)}
\right)_+}
}{\sqrt{N_{t_k}(s_{k,i},a_{k,i})+S}}\\
&\qquad\le
\sqrt{
8SA(\bar T+S^2A)
\left(
\ln\left(1+\frac{SA\,M_T}{\bar T+S^2A}\right)+2
\right)
}.
\end{align*}
Combining these bounds yields
\begin{align*}
\mathbb E[(III)] &\le \frac{\gamma m_T}{1-\gamma}\Bigg( \sqrt{16SA\bar T\ln\frac{1}{1-\underline{\alpha}}} + \sqrt{ 16SA(\bar T+S^2A) \left( \ln\left(1+\frac{SA\,M_T}{\bar T+S^2A}\right)+2 \right) }\\
&\qquad+ \sqrt{16SA\bar T\ln(2SA\,T^3m_T)} \Bigg).
\end{align*}
Combining the previous estimates proves \eqref{eq:main-order-proof-1}.

For part~(ii), define $\widetilde\Delta_{k,i}:= {V_k}(s_{k,i}) - V_{\phi_k,\underline{\alpha}}^{\pi_k}(s_{k,i}).$
By Lemma~\ref{lem:bayes_optimism}(ii), $V_{\phi_k,\underline{\alpha}}^*(s)\le {V_k}(s)$ for all $s\in\mathcal S$, so $V_{\phi_k,\underline{\alpha}}^*(s_{k,i})-V_{\phi_k,\underline{\alpha}}^{\pi_k}(s_{k,i}) \le \widetilde\Delta_{k,i}.$
Hence $ {BR\text{-}R(T)}\le
\mathbb E[
\sum_{k=1}^{T}I_k^T\sum_{i=1}^{L_k}\widetilde\Delta_{k,i}]$.

Define $\mathcal P_k^2$ as the event that, for all $(s,a)\in\mathcal S\times\mathcal A$,
\begin{align*}
\Big| (\bar P_k(\cdot\mid s,a)-P^c_{s,a})^\top V_{\phi_k,\underline{\alpha}}^{\pi_k} \Big| \le \frac{1}{1-\gamma} \sqrt{\frac{2\ln(2SA\,T\,m_T\,t_k^2)}{N_{t_k}(s,a)+S}}.
\end{align*}
Then $\mathbb P((\mathcal P_k^2)^c\mid {\mathcal F_{t_k}})\le 1/(T\,m_T\,t_k^2)$, and therefore
$\mathbb P((\mathcal P_k^1\cap\mathcal P_k^2)^c\mid {\mathcal F_{t_k}})\le 2/(T\,m_T\,t_k^2)$.

 {On $\mathcal P_k^2$, Lemma~\ref{lem:dirichlet_quantile_BR-MDP} gives, for each $i\ge1$,}
\begin{align*}
{E^-_{k,i}} {\le \frac{1}{1-\gamma} \sqrt{\frac{2\ln\frac{1}{\underline{\alpha}}}{N_{t_k}(s_{k,i},a_{k,i})+S}} + \frac{1}{1-\gamma} \sqrt{\frac{2\ln(2SA\,T\,m_T\,t_k^2)}{N_{t_k}(s_{k,i},a_{k,i})+S}}.}
\end{align*}
As in part~(i), splitting only according to $\mathcal P_k^1\cap\mathcal P_k^2$ and using Lemma~\ref{lemma:value-decomposition-sum-robust}, the same argument gives
\begin{align*}
{BR\text{-}R(T)} &\le \frac{\gamma m_T}{1-\gamma}\Bigg( \sqrt{16SA\bar T\ln\frac{1}{1-\underline{\alpha}}} + \sqrt{ 16SA(\bar T+S^2A) \left( \ln\left(1+\frac{SA\,M_T}{\bar T+S^2A}\right)+2 \right) }\\
&\qquad+ \sqrt{16SA\bar T\ln\frac{1}{\underline{\alpha}}} + 2\sqrt{16SA\bar T\ln(2SA\,T^3m_T)} \Bigg)\\
&\qquad+ \frac{2\gamma SA}{1-\gamma}\,m_T^2\left\lceil\log_2m_T\right\rceil + \frac{\pi^2}{3(1-\gamma)} + 2\mathfrak T_T.
\end{align*}
This proves \eqref{eq:main-order-proof-2}.
\end{proof}

\begin{theorem}[Restatement of Theorem \ref{thm:main-order}]
For $\delta>0$, $\underline{\alpha}\in(0,1)$, and $T\ge S^2A$,
\begin{align*}
BR(T) \le \widetilde O\!\left( \frac{\gamma\sqrt{SA\,T\,\ln\frac{e}{1-\underline{\alpha}}}}{(1-\gamma)^2} + \frac{\delta\,SA\sqrt{T}}{(1-\gamma)^2} + \frac{SA}{(1-\gamma)^3} \right),
\end{align*}
and
\begin{align*}
BR\text{-}R(T) \le \widetilde O\!\left( \frac{\gamma\sqrt{SA\,T\,\ln\frac{1}{\min\{1-\underline{\alpha},\underline{\alpha}\}}}}{(1-\gamma)^2} + \frac{SA}{(1-\gamma)^3} \right),
\end{align*}
where $\widetilde O(\cdot)$ omits polylogarithmic factors in $S,A,T,1/\delta$, and logarithms of $1/(1-\gamma)$.
In particular, if $\delta=\frac{1}{\sqrt{SA}}$ in AQ-BRMDP, then
\begin{align*}
BR(T) {\le} \widetilde O\!\left( \frac{\gamma\sqrt{SA\,T\ln\frac{e}{1-\underline{\alpha}}}}{(1-\gamma)^2} + \frac{\sqrt{SA\,T}}{(1-\gamma)^2} + \frac{SA}{(1-\gamma)^3} \right).
\end{align*}
\end{theorem}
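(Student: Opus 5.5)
The restatement follows from the explicit bounds \eqref{eq:main-order-proof-1}--\eqref{eq:main-order-proof-2} in Theorem~\ref{thm:regret}; what remains is bookkeeping of orders. I will take each term there and bound it in $\widetilde O(\cdot)$ form, using $T\ge S^2A$ and treating as polylogarithmic every factor that is logarithmic in $S,A,T,1/\delta,1/(1-\gamma)$.

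\textbf{Auxiliary quantities.} With $u_T=\max\{e,T/(1-\gamma)^2\}$, we have $\log u_T=O(\ln T+\ln\frac{1}{1-\gamma})$. Hence $m_T=\widetilde O((1-\gamma)^{-1})$, $\lceil\log_2 m_T\rceil=\widetilde O(1)$, and $\mathfrak T_T=\widetilde O(\gamma/(1-\gamma))$. Since $m_T\le T$ up to logs, and in any case $m_T=\widetilde O(1/(1-\gamma))$, we get $\bar T=T+m_T$. Its only effect is an additive $\sqrt{m_T}$ inside square roots, which produces lower-order terms of size $\widetilde O(\sqrt{SA}/(1-\gamma)^{5/2})$. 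I will absorb these either into $\sqrt{T}$ when $T\ge 1/(1-\gamma)$, or into $SA/(1-\gamma)^3$ otherwise. The condition $T\ge S^2A$ gives $\bar T+S^2A\le 2\bar T+\dots$, so $\sqrt{SA(\bar T+S^2A)}=O(\sqrt{SA\bar T})$. For the logarithmic factor, $M_T$ is polynomial in $S,A,T,1/\delta$, so $\ln(1+SAM_T/(\bar T+S^2A))+2=\widetilde O(1)$. Likewise $\ln(2SAT^3m_T)=\widetilde O(1)$.

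\textbf{Collecting the $BR(T)$ terms.} The bracket multiplied by $\gamma m_T/(1-\gamma)$ becomes $\widetilde O(\gamma\sqrt{SAT\ln\frac{1}{1-\underline\alpha}}/(1-\gamma)^2)+\widetilde O(\gamma\sqrt{SAT}/(1-\gamma)^2)$. Since $\sqrt{a}+\sqrt{b}\le\sqrt{2(a+b)}$ and $\ln\frac{1}{1-\underline\alpha}+1=\ln\frac{e}{1-\underline\alpha}$, these merge into $\widetilde O(\gamma\sqrt{SAT\ln\frac{e}{1-\underline\alpha}}/(1-\gamma)^2)$. The remaining terms are handled as follows:
\begin{itemize}
\item the term $\frac{\gamma SA}{1-\gamma}m_T^2\lceil\log_2 m_T\rceil$ is $\widetilde O(SA/(1-\gamma)^3)$;
\item $\pi^2/(6(1-\gamma))$ and $\mathfrak T_T$ are dominated by $SA/(1-\gamma)^3$;
\item $2\delta SA\sqrt T\ln(2T)/(1-\gamma)^2$ is $\widetilde O(\delta SA\sqrt T/(1-\gamma)^2)$.
\end{itemize}

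\textbf{Collecting the $BR\text{-}R(T)$ terms.} The same accounting applies. The only difference is that the two quantile terms combine as
\[
\sqrt{\ln\tfrac{1}{1-\underline\alpha}}+\sqrt{\ln\tfrac1{\underline\alpha}}\le 2\sqrt{\ln\tfrac{1}{\min\{1-\underline\alpha,\underline\alpha\}}}.
\]
This bound is at least $\sqrt{\ln 2}$, so it also absorbs the constant-log terms up to $\widetilde O$ factors. There is no $\delta$ term in this bound.

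\textbf{Specialization and main obstacle.} Substituting $\delta=1/\sqrt{SA}$ turns $\delta SA\sqrt T$ into $\sqrt{SAT}$. The $\widetilde O$ hides $\ln(1/\delta)=\frac12\ln(SA)$, which is polylogarithmic. The only delicate point is careful absorption of the $\bar T$ versus $T$ discrepancy and the $m_T$ factors, so that no hidden factor is polynomial in $1/(1-\gamma)$ beyond those displayed. I will check that each $m_T$ contributes exactly one power of $(1-\gamma)^{-1}$ up to logs.
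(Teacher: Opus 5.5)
Your proposal is correct and is essentially the paper's own argument: both read the orders directly off the explicit bounds \eqref{eq:main-order-proof-1}--\eqref{eq:main-order-proof-2} of Theorem~\ref{thm:regret}. Both use $m_T=\widetilde O((1-\gamma)^{-1})$, absorb $\mathfrak T_T$, the $\bar T$ versus $T$ gap and the $M_T$ logarithm, use $T\ge S^2A$ for $\sqrt{SA(\bar T+S^2A)}$, and merge the pure $\sqrt{SAT}$ term via $\ln\frac{e}{1-\underline{\alpha}}\ge 1$ (resp.\ $\min\{1-\underline{\alpha},\underline{\alpha}\}\le\frac12$).

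One small tightening, which the paper's sketch also glosses over, concerns your regime $T<1/(1-\gamma)$. There the leftover $\bar T$-terms still carry a factor $\sqrt{\ln\frac{1}{1-\underline{\alpha}}}$ (or $\sqrt{\ln\frac{1}{\underline{\alpha}}}$), which $\widetilde O$ does not hide, so it is cleaner to use the trivial bound $BR(T),\,BR\text{-}R(T)\le T/(1-\gamma)\le(1-\gamma)^{-2}\le SA/(1-\gamma)^3$ than to absorb those terms into $SA/(1-\gamma)^3$.
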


\begin{proof}
We first prove the bounds for arbitrary $\delta>0$. By Theorem~\ref{thm:regret},
\begin{align*}
m_T=\widetilde O\!\left(\frac{1}{1-\gamma}\right), \qquad m_T^2\left\lceil\log_2m_T\right\rceil = \widetilde O\!\left(\frac{1}{(1-\gamma)^2}\right).
\end{align*}
Therefore,
\begin{align*}
\frac{\gamma SA}{1-\gamma}\, m_T^2\left\lceil\log_2m_T\right\rceil = \widetilde O\!\left(\frac{SA}{(1-\gamma)^3}\right).
\end{align*}
Also $\mathfrak T_T=\widetilde O((1-\gamma)^{-1})$, which is absorbed by $\widetilde O(SA/(1-\gamma)^3)$.

Since $\bar T=T+m_T$, the square-root terms involving $\bar T$ are bounded by the corresponding $T$-terms plus lower-order terms that are absorbed by $\widetilde O(SA/(1-\gamma)^3)$. Moreover, the logarithmic factor involving $M_T$ contributes only polylogarithmic dependence on $S,A,T,1/\delta$, and $1/(1-\gamma)$. Since $T\ge S^2A$, the term $\sqrt{SA(\bar T+S^2A)}$ is absorbed, up to lower-order terms, by $\sqrt{SA\,T}$. For the true-optimal benchmark, this pure $\sqrt{SA\,T}$ term is absorbed into $\sqrt{SA\,T\ln\frac{e}{1-\underline{\alpha}}}$ because $\ln\frac{e}{1-\underline{\alpha}}\ge 1$. Substituting these estimates into \eqref{eq:main-order-proof-1} yields
\begin{align*}
BR(T) {\le} \widetilde O\!\left( \frac{\gamma\sqrt{SA\,T\,\ln\frac{e}{1-\underline{\alpha}}}}{(1-\gamma)^2} + \frac{\delta\,SA\sqrt{T}}{(1-\gamma)^2} + \frac{SA}{(1-\gamma)^3} \right).
\end{align*}
For the robust-optimal benchmark, since $\min\{1-\underline{\alpha},\underline{\alpha}\}\le 1/2$, the pure $\sqrt{SA\,T}$ term is absorbed by
$\sqrt{SA\,T\ln\frac{1}{\min\{1-\underline{\alpha},\underline{\alpha}\}}}$. Substituting the same estimates into \eqref{eq:main-order-proof-2} gives
\begin{align*}
BR\text{-}R(T) {\le} \widetilde O\!\left( \frac{\gamma\sqrt{SA\,T\,\ln\frac{1}{\min\{1-\underline{\alpha},\underline{\alpha}\}}}}{(1-\gamma)^2} + \frac{SA}{(1-\gamma)^3} \right).
\end{align*}
Finally, if $\delta=1/\sqrt{SA}$, then
\begin{align*}
\frac{\delta SA\sqrt T}{(1-\gamma)^2} = \frac{\sqrt{SA\,T}}{(1-\gamma)^2}.
\end{align*}
Substituting this into the arbitrary-$\delta$ bound for $BR(T)$ gives the stated special case; {the $BR\text{-}R(T)$ bound does not contain the $\delta SA\sqrt T/(1-\gamma)^2$ term and is therefore unchanged up to hidden polylogarithmic factors.}
\end{proof}

\section{Implementation Details}
\label{app:implementation-details}

This appendix gives the implementation details for the experiments in Section~\ref{sec:experiments}. 
{For the finite-state experiments, the per-transition reward function $r(s,a,s')$ is known and deterministic, while the entire transition kernel is treated as unknown. When the reward depends on the next state, as in FrozenLake, expected one-step rewards are computed under the sampled posterior transition model.}
The agent maintains an independent Dirichlet posterior over each transition vector $P^c_{s,a}$ and updates the posterior parameters using the observed transition counts. 
{For the continuous-state FrozenLake experiment, the transition kernel is parameterized by some stochastic components; details are given in Appendix~\ref{app:continuous-frozenlake}.}

\subsection{Value Iteration for the Approximate $\alpha_k$-quantile BR-MDP}
\label{sec:value-iteration}

At the beginning of pseudo-episode $k$, AQ-BRMDP first updates the posterior parameters $\phi_k$ using the history data and then computes the adaptive quantile schedule $\alpha_k$.
These two quantities define the $\alpha_k$-quantile BR-MDP to be solved in the current pseudo-episode.
The exact posterior quantiles in Bellman backups are generally not available in closed form.
We therefore replace the exact posterior quantile by an empirical quantile computed from posterior samples, leading to the approximate $\alpha_k$-quantile BR-MDP used in implementation.
{For a value vector $V$, state-action pair $(s,a)$, and quantile level $\alpha_k(s,a)$, we draw posterior transition samples $P_{s,a}^1,\ldots,P_{s,a}^M\stackrel{\mathrm{i.i.d.}}{\sim}\Dir(\phi_k(s,a))$ and compute the sampled one-step Bellman targets $Z_j=\sum_{s'\in\mathcal S}P_{s,a}^j(s')\bigl[r(s,a,s')+\gamma V(s')\bigr]$. We then sort $Z_{(1)}\le\cdots\le Z_{(M)}$ and use $Z_{(\lceil M\alpha_k(s,a)\rceil)}$ as the empirical posterior quantile.}

\paragraph{Monte Carlo budget for quantile estimation.}
The posterior-sampling budget used in each Bellman backup controls the Monte Carlo error of the empirical quantile estimator.
To choose this budget across different quantile levels, we use the classical asymptotic normal approximation for sample quantiles.
If $\widehat q_\alpha$ is the empirical $\alpha$-quantile computed from $m$ independent samples of a scalar random variable with density $g$ positive at its $\alpha$-quantile $q_\alpha$, then the leading variance term is proportional to
\begin{align*}
\frac{\alpha(1-\alpha)}{m\,g(q_\alpha)^2}.
\end{align*}
Since the exact density of the sampled Bellman target depends on both the posterior parameter and the current value vector, we use the standard normal distribution as a reference distribution to choose how the sample size varies across quantile levels.
This choice allocates more posterior samples to tail quantiles.

For AQ-BRMDP, at pseudo-episode $k$ and state-action pair $(s,a)$, let
$q_{k,s,a}:=\Phi^{-1}\!\bigl(\alpha_k(s,a)\bigr),$
and let $\varphi_{\mathrm{std}}$ denote the standard normal density.
The number of posterior samples used to estimate the quantile from $(s,a)$ is chosen as
\begin{equation}
\label{eq:app-nsa} { n_{k,s,a} = \min\left\{ 2048,\, \left\lceil c_{n_{\mathrm{samples}}}\, \frac{\alpha_k(s,a)\bigl(1-\alpha_k(s,a)\bigr)} { \varphi_{\mathrm{std}}(q_{k,s,a})^2 } \right\rceil \right\}.}
\end{equation}
For the fixed-level baselines BR-MDP-$\alpha$ with $\alpha\in\{0.1,0.3,0.5\}$, we use the same rule with $\alpha_k(s,a)$ replaced by the corresponding fixed value $\alpha$.

\begin{algorithm}[htbp]
\caption{Value Iteration for the Approximate $\alpha_k$-quantile BR-MDP}
\label{alg:BR-MDP-vi-k-approx}
\small
\begin{algorithmic}[1]
\State \textbf{Input:} Posterior parameters $\phi_k$, approximate optimal value in pseudo-episode $k-1$ $\widehat V_{k-1}$, quantile schedule $\alpha_k$, per-transition reward function $r$, discount factor $\gamma$, Monte Carlo budget coefficient $c_{n_{\mathrm{samples}}}$, tolerance $\varepsilon_{\mathrm{VI}}$, maximum number of iterations $M_{\mathrm{VI}}$

\ForAll{$(s,a)\in\mathcal S\times\mathcal A$}
    \State {Compute $n_{k,s,a}$ from \eqref{eq:app-nsa}} and {set $\ell_{k,s,a}\gets \lceil n_{k,s,a}\alpha_k(s,a)\rceil$}
    \State {Draw and fix posterior samples $P^1_{s,a},\ldots,P^{n_{k,s,a}}_{s,a}\stackrel{\mathrm{i.i.d.}}{\sim}\Dir(\phi_k(s,a))$}
\EndFor

\State Initialize $V^{(0)}\leftarrow\widehat V_{k-1}$
\For{$m=0,1,\ldots,M_{\mathrm{VI}}-1$}
    \ForAll{$(s,a)\in\mathcal S\times\mathcal A$}
        \State {Using the fixed posterior samples, compute}
        \begingroup
        \setlength{\abovedisplayskip}{1pt plus 1pt minus 1pt}
        \setlength{\belowdisplayskip}{1pt plus 1pt minus 1pt}
        \setlength{\abovedisplayshortskip}{0pt plus 1pt}
        \setlength{\belowdisplayshortskip}{1pt plus 1pt minus 1pt}
        \begin{align*}
Z_j^{(m)}(s,a) \gets \sum_{s'\in\mathcal S} P_{s,a}^j(s') \left[ r(s,a,s')+\gamma V^{(m)}(s') \right], \qquad j=1,\ldots,n_{k,s,a}.
        \end{align*}
        \endgroup
        \State Sort $Z_{(1)}^{(m)}(s,a)\le\cdots\le Z_{(n_{k,s,a})}^{(m)}(s,a)$
        \State Set $Q_k^{(m+1)}(s,a)\gets Z_{(\ell_{k,s,a})}^{(m)}(s,a)$
    \EndFor
    \State Set $V^{(m+1)}(s)\gets \max_{a\in\mathcal A}Q_k^{(m+1)}(s,a)$ for all $s\in\mathcal S$
    \If{$\|V^{(m+1)}-V^{(m)}\|_\infty\le \varepsilon_{\mathrm{VI}}$}
        \State \textbf{break}
    \EndIf
\EndFor
\State Set $\widehat V_k\gets V^{(m+1)}$
\State For each $s\in\mathcal S$, choose $\widehat\pi_k(s)\in\arg\max_{a\in\mathcal A}{Q_k^{(m+1)}(s,a)}$
\State \Return $\widehat V_k$ and $\widehat\pi_k$
\end{algorithmic}
\end{algorithm}

{The posterior transition samples in Algorithm~\ref{alg:BR-MDP-vi-k-approx} are drawn once at the beginning of pseudo-episode $k$ and held fixed throughout the value-iteration loop. Hence the empirical-quantile Bellman operator is deterministic within the pseudo-episode, and the stopping criterion is applied to a fixed sample-average approximation of the $\alpha_k$-quantile BR-MDP.}
The fixed-level baselines BRMDP-0.1, BRMDP-0.3, and BRMDP-0.5 use Algorithm~\ref{alg:BR-MDP-vi-k-approx} with $\alpha_k(s,a)\equiv 0.1$, $0.3$, and $0.5$, respectively.

\subsection{Implementations of Continuing PSRL}
Continuing PSRL uses the same pseudo-episode mechanism as AQ-BRMDP. At the beginning of pseudo-episode $k$, it samples one transition kernel $\widetilde P_k$ from the current posterior by drawing $\widetilde P_{k,s,a}\sim\Dir(\phi_k(s,a))$ independently for all $(s,a)$. It then solves the sampled discounted MDP using standard value iteration and executes a greedy policy $\widetilde\pi_k$ throughout pseudo-episode $k$. 
{When rewards are transition-dependent, the one-step reward in the sampled MDP is evaluated as $\sum_{s'}\widetilde P_{k,s,a}(s')r(s,a,s')$, so Continuing PSRL also avoids using the true expected reward under $P^c$ during learning.}

\subsection{Finite-State Experiment Schematics}
\label{app:experiment-schematics}

Figure~\ref{fig:finite-state-schematics} provides schematic diagrams for the two finite-state experiment families used in Section~\ref{sec:experiments}.

\begin{figure}[htbp]
\centering
\begin{minipage}[t]{0.62\textwidth}
\centering
\includegraphics[width=\linewidth]{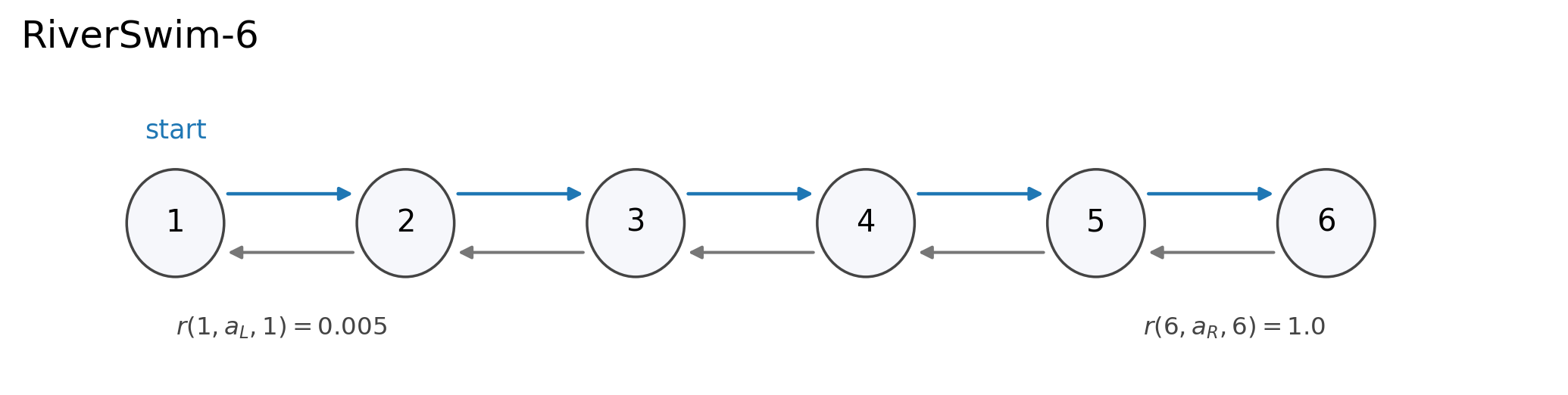}
\end{minipage}
\hfill
\begin{minipage}[t]{0.28\textwidth}
\centering
\includegraphics[width=\linewidth]{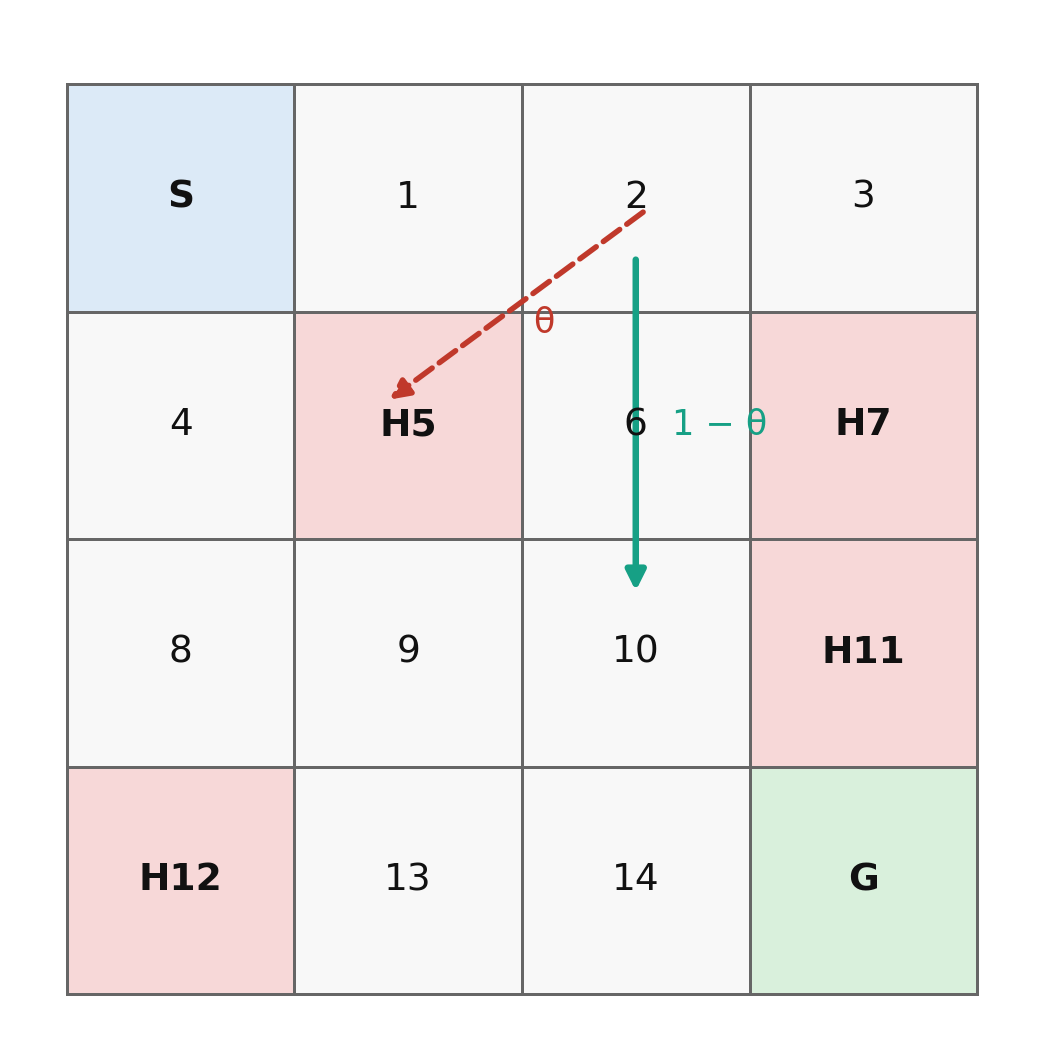}
\end{minipage}
\caption[Finite-state experiment schematics.]{Finite-state experiment schematics: RiverSwim-6 (left) and risky-branch FrozenLake (right). In the FrozenLake schematic, the shortcut at $(2,\mathrm{Down})$ moves the agent to state $10$ with probability $1-\theta$ and to the sticky hole at state $5$ with probability $\theta$.}
\label{fig:finite-state-schematics}
\end{figure}

\subsection{Experiment-Specific Settings and Evaluation Details}
\label{app:experiment-specific-details}

{For AQ-BRMDP, the floor parameter in the adaptive schedule is set to $\underline{\alpha}=0.2$ in all finite-state experiments unless otherwise stated.}
{In all tabular BR-MDP planning calls, we use value-iteration tolerance $\varepsilon_{\mathrm{VI}}=10^{-8}$ and maximum iteration count $M_{\mathrm{VI}}=10{,}000$. The same stopping rule and iteration cap are used for Continuing PSRL, fixed-quantile BR-MDP, and AQ-BRMDP planning.}
{The finite-state RiverSwim and risky-branch FrozenLake experiments were run on a MacBook using CPU execution without GPU acceleration. These experiments were substantially less computationally demanding than the continuous-state FrozenLake experiment reported below.}

\paragraph{RiverSwim.}
The discount factor is $\gamma = 0.9$, and the total interaction horizon is $T = 4500$. We set the constant in \eqref{eq:app-nsa} and \eqref{eq:alphak} to $(c_{n_{\mathrm{samples}}},\delta)=(150, 5)$ and $(200,10)$ respectively for RiverSwim-6 and RiverSwim-10.
The plots for RiverSwim-6 and RiverSwim-10 are truncated to the first $2000$ and $4000$ time steps, respectively. The same truncation windows are used for the corresponding occupancy heatmaps.

\paragraph{Risky-branch FrozenLake.}
In risky-branch FrozenLake experiments, the discount factor $\gamma=0.8$, and total interaction horizon $T=4500$. We set the constant in \eqref{eq:app-nsa} and \eqref{eq:alphak} to $(c_{n_{\mathrm{samples}}},\delta)=(250, 10)$.

\subsection{Estimation of the Posterior $\alpha$-Quantile Value}
\label{app:posterior-lb-estimation}

We estimate the posterior $\alpha$-quantile value by evaluating the current policy under posterior samples of the transition kernel. For each independent run, each algorithm, and diagnostic time $t$, we take the current posterior parameter $\phi_t$ and the current policy $\pi_t$. We then draw $M=147$ independent transition kernels by drawing $P^{(m)}_{s,a}\sim\Dir(\phi_t(s,a))$, $m=1,\ldots,M$, independently for all $(s,a)$. For each sampled transition kernel $P^{(m)}$, we compute the value of $\pi_t$ by exact policy evaluation. In particular, letting $P^{(m),\pi_t}(s,s'):=P^{(m)}_{s,\pi_t(s)}(s')$ and $r^{(m),\pi_t}(s):=\sum_{s'\in\mathcal S}P^{(m)}_{s,\pi_t(s)}(s')r(s,\pi_t(s),s')$, we compute $V^{\pi_t}_{P^{(m)}}=(I-\gamma P^{(m),\pi_t})^{-1}r^{(m),\pi_t}$. We then form the $M$ sampled values at the initial state, $Y_m:=V^{\pi_t}_{P^{(m)}}(s_0)$ for $m=1,\ldots,M$, and let $Y_{(1)}\le\cdots\le Y_{(M)}$ denote their order statistics. The empirical estimate of the posterior $0.1$-quantile value is
\begin{equation}
\label{eq:app-posterior-lb-estimator} \widehat V^{\pi_t,\mathrm q}_{\phi_t,0.1}(s_0) := Y_{(\lceil 0.1M\rceil)}.
\end{equation}
We set $M=147$ according to \eqref{eq:app-nsa} with $c_{n_{\mathrm{samples}}}=50$. This metric is computed every $200$ time steps. For each diagnostic time, we average the estimates over $100$ independent runs, and the $95\%$ confidence bands are computed across these independent runs.

\subsection{Sensitivity Analysis for the Schedule Parameter}
\label{app:sensitivity-analysis}

{We examine sensitivity to the schedule parameter $\delta$ on standard FrozenLake without the risky branch, holding the remaining settings fixed as described in Section~\ref{sec:exp-costly}. We compare $\delta\in\{5,10,15\}$ over $100$ independent runs per setting. 
These small differences in Figure~\ref{fig:delta-sensitivity} and Table~\ref{tab:delta-sensitivity} suggest that AQ-BRMDP is not materially sensitive to $\delta$ over the range tested.}

\begin{figure}[htbp]
\centering
\includegraphics[width=0.42\textwidth]{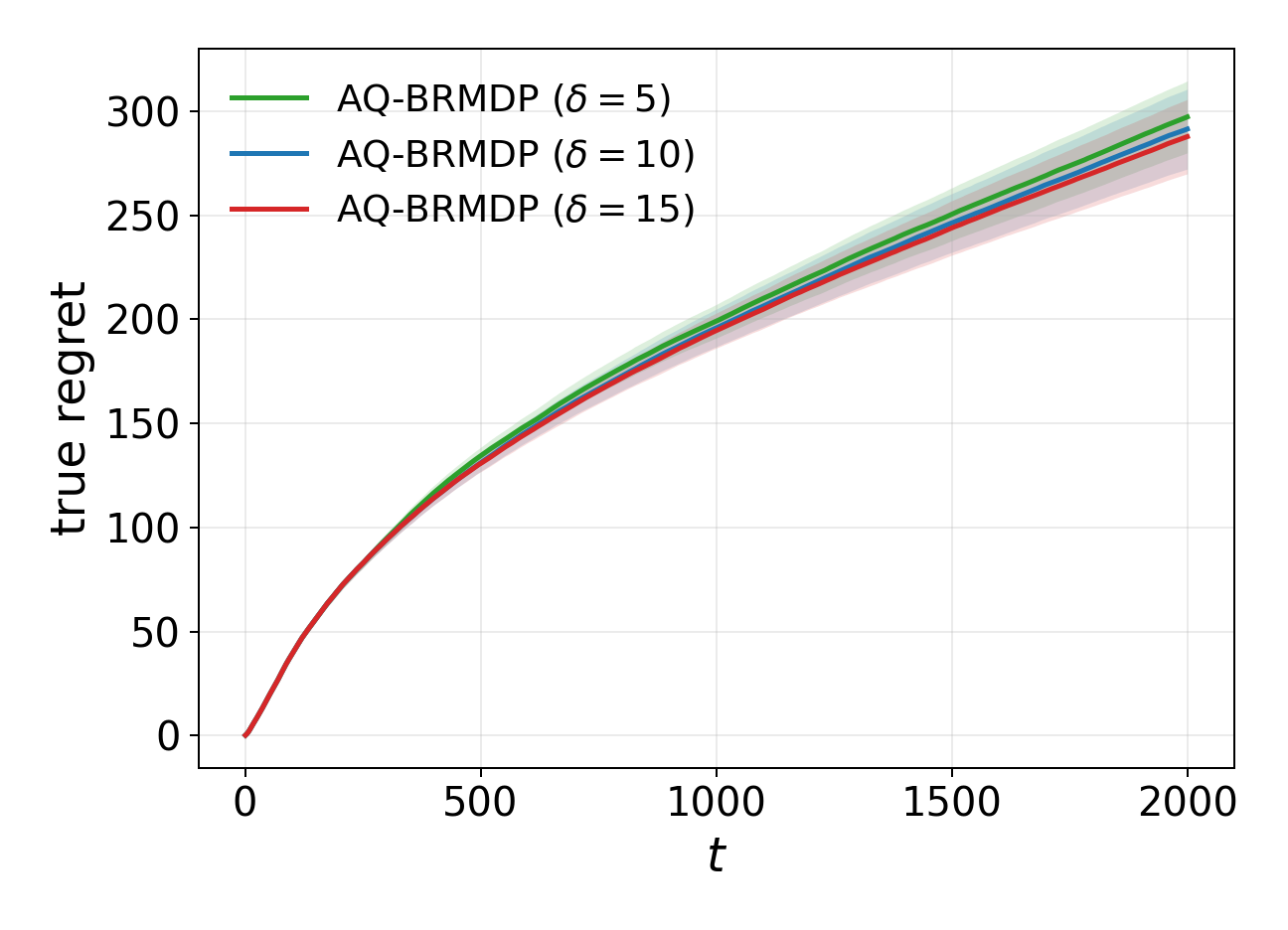}
\caption[Sensitivity of AQ-BRMDP to the schedule parameter $\delta$.]{Sensitivity of AQ-BRMDP to the schedule parameter $\delta$ on FrozenLake without the risky branch. The plot shows cumulative true regret.}
\label{fig:delta-sensitivity}
\end{figure}

\begin{table}[htbp]
    \small
\centering
\caption[Final cumulative true regret in the sensitivity study.]{Final cumulative true regret in the sensitivity study. Each entry reports the mean over $100$ runs. Percentages are reductions relative to the $\delta=5$ setting.}
\label{tab:delta-sensitivity}
\begin{tabular}{lcc}
\hline
$\delta$ & Final cumulative regret & Reduction \\
\hline
$5$  & $297.2396$ & $0.00\%$ \\
$10$ & $291.4018$ & $1.96\%$ \\
$15$ & $287.7603$ & $3.19\%$ \\
\hline
\end{tabular}
\end{table}

\subsection{Continuous-State FrozenLake Experiment}
\label{app:continuous-frozenlake}

{We further evaluate AQ-BRMDP in a continuous-state version of FrozenLake. In this experiment, the state space is continuous, the posterior is placed on a parametric transition model, and the resulting Bellman backups are approximated using neural fitted Q iteration.}

\paragraph{Environment.}
{The state space is $[0,4]^2$, partitioned into the same $4\times4$ cells as the discrete FrozenLake layout. The start state is $(0.5,0.5)$, the hole cells are $\{5,7,11,12\}$, and the goal cell is $15$. The action space remains $\{\mathrm{Left},\mathrm{Right},\mathrm{Up},\mathrm{Down}\}$. A transition receives reward $1$ if the next state enters the goal cell and reward $0$ otherwise. From the goal cell, the next state is sampled from the known uniform reset distribution over the non-hole, non-goal cells.}

{For ordinary cells, the realized movement direction is the intended direction with probability $0.50$ and one of the two perpendicular directions with probability $0.25$ each. For hole cells, the agent moves in the intended direction with probability $p_h=0.20$ and otherwise remains in place. The continuous movement length is $L=\ell_0+U$, where $\ell_0=0.75$ and $U\sim\mathrm{Unif}(0,\theta_L)$ with true value $\theta_L=0.50$. If a candidate next state leaves $[0,4]^2$, the agent remains in the current state.}

\paragraph{Posterior model.}
{The unknown transition parameters are $(p_{\mathrm{slip}},p_h,\theta_L)$. We use the conjugate priors $p_{\mathrm{slip}}\sim\Dir(1,1,1)$, $p_h\sim\mathrm{Beta}(1,1)$, and $\theta_L\sim\mathrm{Pareto}(a_0,x_0)$ with $a_0=2$ and $x_0=0.25$. The simulator records latent primitives: $M_t$ for the ordinary-cell movement mode, $E_t$ for the hole mobility indicator, and $L_t$ for the attempted movement length when a movement is activated. The Dirichlet posterior is updated from counts of $M_t$, the Beta posterior is updated from counts of $E_t$, and the Pareto posterior is updated from observations of $U_t=L_t-\ell_0$. Goal reset transitions do not update the posterior.}

\paragraph{Neural fitted Q approximation.}
{The Q-function is represented by a neural network $Q_\omega(s,a)$ with architecture $20\to64\to64\to4$, where the four outputs correspond to the four actions. The $20$ input features consist of the normalized two-dimensional coordinates, a $16$-dimensional cell one-hot vector, and two hole/goal indicators. At the start of each pseudo-episode, the network is warm-started from the previous pseudo-episode; it is not reinitialized.}

For a sampled posterior model $\theta^b$ and a Q-function $Q$, define the posterior-model Bellman target $G^b(s,a;Q) := \mathbb E_{s'\sim P_{\theta^b}(\cdot\mid s,a)} \left[R(s,a,s')+\gamma\max_{a'}Q(s',a')\right]$.

{For AQ-BRMDP, the Bellman target is the empirical $\alpha_k(s,a)$-quantile of $G^b(s,a;Q)$ over posterior samples. For PSRL, a single posterior model is sampled and the risk-neutral Bellman target under that sampled model is used. The expectations in these targets are estimated by Monte Carlo simulation from the corresponding model.}

\paragraph{Adaptive quantile schedule in the continuous setting.}
{The normalization set consists of the 15 cell centers excluding the goal center $(3.5,3.5)$. For each queried $(s,a)$, the posterior predictive uncertainty $u_k(s,a)$ is computed as the standard deviation of $G^b(s,a;Q_{\omega_{k-1}})$ across posterior samples, and $c_k(s,a)=1/(u_k(s,a)+\varepsilon)$. Let $\bar c_k$ be the average of $c_k(\tilde s,a)$ over the non-goal center states and actions. For non-goal states, $\alpha_k(s,a) = \mathrm{clip}\left(1-\delta\frac{c_k(s,a)}{\bar c_k}g_k,\,\underline{\alpha},\,0.95\right)$, and $g_k=\frac{\log(2k)}{\sqrt{k}}$.}

{For goal states, all actions are equivalent and we set $\alpha_k(s,a)=\mathrm{clip}(1-\delta g_k,\underline{\alpha},0.95)$.}
{The posterior sample budget used to estimate each quantile is selected by the same rule as in \eqref{eq:app-nsa} and capped at $800$ posterior samples.}

\begin{algorithm}[htbp]
\caption{Continuous-State AQ-BRMDP with Neural Fitted Q Approximation}
\label{alg:continuous-aq-brmdp}
\footnotesize
\begin{algorithmic}[1]
\State \textbf{Input:} initial posterior, initial Q-network parameter $\omega_0$, schedule parameters, and replay buffer $\mathcal B\leftarrow\emptyset$
\State Initialize pseudo-episode index $k\leftarrow0$ and restart indicator $X_1\leftarrow0$
\For{$t=1,\ldots,T$}
    \If{$X_t=0$}
        \State $k\leftarrow k+1$
        \State {Update the posterior using the observed latent primitives}
        \State Draw posterior model samples $\{\theta_k^b\}$ {from the current posterior}
        \State Compute $\bar c_k$ over non-goal center states and fix {the rule defining} $\alpha_k(s,a)$ for this pseudo-episode
        \State $\omega_k\leftarrow \mathrm{FitQ}(\omega_{k-1},\alpha_k,\{\theta_k^b\},\mathcal B)$ using fitted-Q regression
    \EndIf
    \State Take action $a_t\in\arg\max_{a\in\mathcal A}Q_{\omega_k}(s_t,a)$
    \State Observe $s_{t+1}$, reward $r_t$, and latent primitives when applicable
    \State Append the transition and latent primitives to $\mathcal B$
    \State Sample $X_{t+1}\sim\mathrm{Bernoulli}(\gamma)$ if $t<T$
\EndFor
\end{algorithmic}
\end{algorithm}

{The fitted-Q subroutine initializes at $\omega_{k-1}$ and repeats a standard fitted-Q loop: sample training states, construct clipped Monte Carlo Bellman targets for each state-action pair, and regress $Q_\omega$ onto those targets. For AQ-BRMDP, the target is the empirical $\alpha_k(s,a)$-quantile over posterior models; for PSRL, it is the risk-neutral target under one sampled model.}

{The continuous-state experiment compares AQ-BRMDP with a continuous-state PSRL baseline using the same pseudo-episode mechanism, posterior model, Q-network architecture, and fitted-Q solver.}

\begin{figure}[htbp]
\centering
\includegraphics[width=0.52\textwidth]{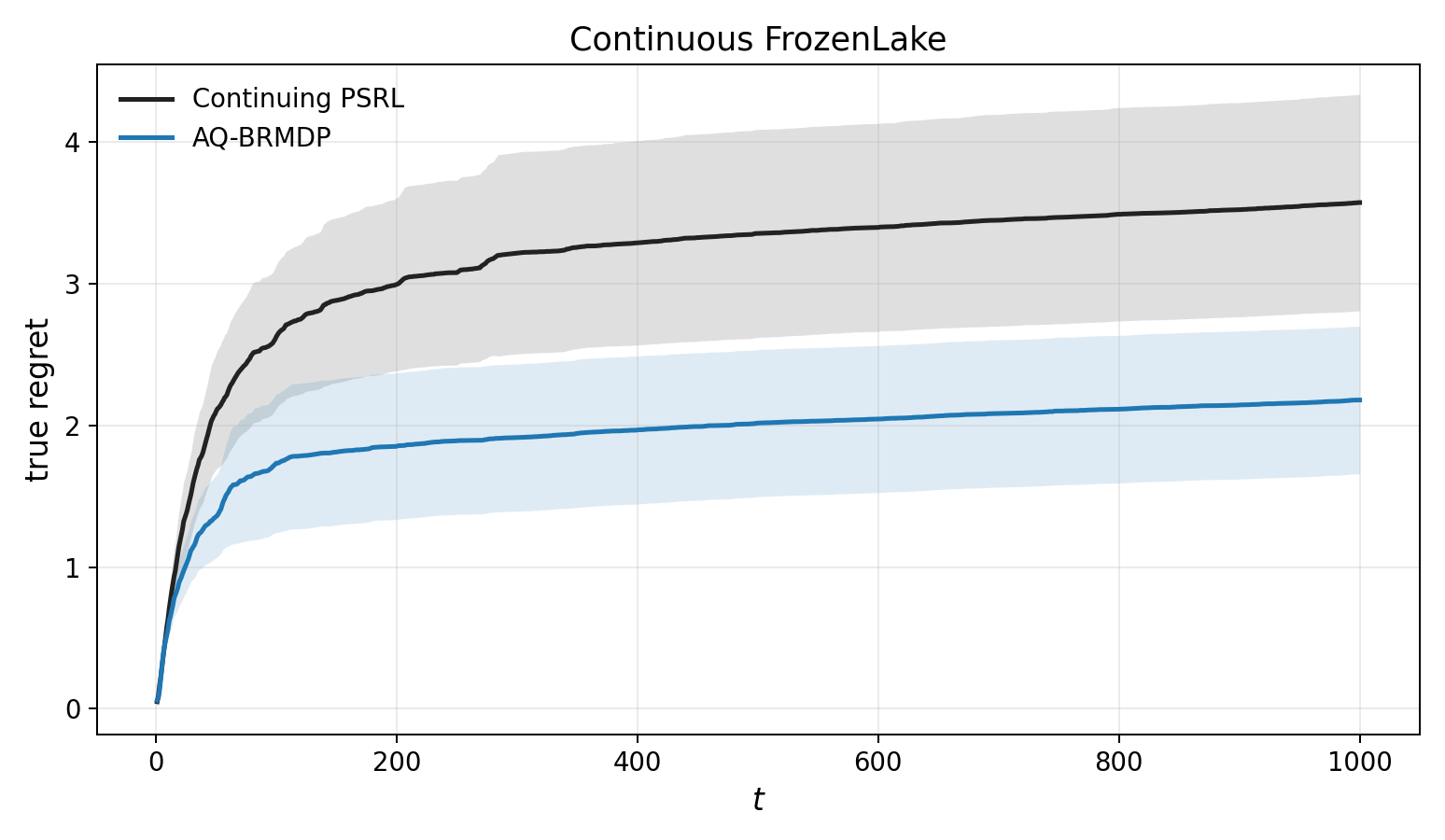}
\caption{{Cumulative true regret in continuous-state FrozenLake.}}
\label{fig:continuous-fl-true-regret}
\end{figure}

{The continuous-state results show that AQ-BRMDP can be implemented with a parametric posterior model and neural fitted-Q approximation in a continuous-state space. Figure~\ref{fig:continuous-fl-true-regret} shows that AQ-BRMDP outperforms the continuous-state PSRL baseline with smaller cumulative true regret while retaining the adaptive quantile mechanism used in the finite-state experiments.}

{The continuous-state FrozenLake experiments were run on a single NVIDIA GeForce RTX 4090 GPU with 24 GB GPU memory. The mean runtime per independent run was $21.2$ minutes for AQ-BRMDP and $5.5$ minutes for Continuing PSRL, with the difference mainly due to posterior quantile estimation.}
\end{document}